\title{{\aname}: Compressed Decentralized Optimization with Near-Optimal Sample Complexity}
\author{\name Chung-Yiu Yau \email cyyau@se.cuhk.edu.hk \\ \addr The Chinese University of Hong Kong
      \\ \AND \\
      \name Hoi-To Wai \email htwai@se.cuhk.edu.hk \\ \addr The Chinese University of Hong Kong}
\newcommand{\cmark}{\text{\ding{51}}}%
\newcommand{\xmark}{\text{\ding{55}}}%
\newcommand{\revision}{}
\let\AND\relax
\theoremstyle{plain}
\newtheorem{theorem}{Theorem}[section]
\newtheorem{lemma}[theorem]{Lemma}
\newtheorem{corollary}[theorem]{Corollary}
\theoremstyle{definition}
\newtheorem{assumption}[theorem]{Assumption}
\theoremstyle{remark}
\newcommand{\aname}{{\tt DoCoM}}
\newcommand{\RR}{\mathbb{R}}
\newcommand{\prm}{\theta}
\newcommand{\hatprm}{\widehat{\prm}}
\newcommand{\gog}{g}
\newcommand{\hatgog}{\widehat{\gog}}
\newcommand{\Prm}{\Theta}
\newcommand{\grdF}{\nabla F}
\newcommand{\bw}{\bar{\omega}}
\newcommand{\avgg}{\bar{g}}
\newcommand{\avgtheta}{\bar{\prm}}
\newcommand{\avgTheta}{\bar{\Prm}}
\newcommand{\dotp}[2]{\left\langle{#1}\ \middle|\ {#2}\right\rangle}
\newcommand{\norm}[1]{\left\| #1 \right\|}
\newcommand{\avgone}{\frac{1}{n}\mathbf{1}\mathbf{1}^\top}
\newcommand{\nl}{\nonumber\\}
\newcommand{\BO}{\mathcal{O}}
\newcommand{\W}{{\bf W}}
\newcommand{\U}{{\bf U}}
\newcommand{\I}{{\bf I}}
\newcommand{\ve}{v}
\newcommand{\ER}{{\tt V}}
\newcommand{\stocgrdf}{\nabla \widehat{f}}
\newcommand{\stocgrdfp}{\nabla \widetilde{f}}
\newcommand{\grdSF}{\nabla \widehat{F}}
\newcommand{\stocgrdF}{\nabla \widehat{F}}
\newcommand{\stocgrdFp}{\nabla \widetilde{F}}
\newcommand{\avgstocgrdF}{\overline{\nabla} \widehat{F}}
\newcommand{\avgstocgrdFp}{\overline{\nabla} \widetilde{F}}
\newcommand{\avggrdF}{\overline{\nabla F}}
\newcommand{\hatTheta}{\widehat{\Theta}}
\newcommand{\hatG}{\widehat{G}}
\newcommand{\ConstS}{\mathbb{C}_\sigma}
\newcommand{\ConstG}{\mathbb{C}_{\avgg}}
\newcommand{\InitG}{\overline{G}_0}
\newcommand{\ConstSS}{\mathbb{C}_\sigma^{\sf NM}}
\newcommand{\ConstFF}{\mathbb{C}_{\nabla f}^{\sf NM}}
\newcommand{\avgv}{\bar{v}}
\newcommand{\bbrho}{\bar{\rho}^{\sf NM}}
\newcommand{\brho}{\overline{\beta}}
\newcommand{\bmu}{\widetilde{\beta}}
\newcommand{\re}{\color{red}}
\newcommand{\redtmp}{}
\newcommand{\half}{\frac{1}{2}}
\begin{document}

\maketitle

\begin{abstract}
This paper proposes the Doubly Compressed Momentum-assisted stochastic gradient tracking algorithm (\aname) for communication-efficient decentralized optimization. The algorithm features two main ingredients to achieve a near-optimal sample complexity while allowing for communication compression. First, the algorithm tracks both the averaged iterate and stochastic gradient using compressed gossiping consensus. Second, a momentum step is incorporated for adaptive variance reduction with the local gradient estimates. We show that \aname~finds a near-stationary solution at all participating agents satisfying $\mathbb{E}[ \| \nabla f( \theta ) \|^2 ] = {\cal O}( 1 / T^{2/3} )$ in $T$ iterations, where $f(\theta)$ is a smooth (possibly non-convex) objective function. Notice that the proof is achieved via analytically designing a new potential function that tightly tracks the one-iteration progress of {\aname}. As a corollary, our analysis also established the linear convergence of {\aname} to a global optimal solution for objective functions with the Polyak-Łojasiewicz condition. Numerical experiments demonstrate that our algorithm outperforms several state-of-the-art algorithms in practice.
\end{abstract}

\section{Introduction}\vspace{-.1cm}
Decentralized algorithms tackle an optimization problem with inter-connected agents/workers possessing local data without  a central server. For many scenarios in large-scale learning, these algorithms improve computational scalability and preserve data privacy. Owing to these reasons, decentralized algorithms have become the critical enabler for  
sensor networks \citep{schizas2007consensus},
federated learning \citep{konevcny2016federated, wang2021field}, etc. 

This paper concentrates on the \emph{communication and sampling efficiency} issues with decentralized algorithms, which is a key bottleneck as decentralized algorithms rely heavily on the bandwidth limited inter-agent communication links \citep{wang2021field}. Inefficiently designed algorithms may lead to significant overhead and slow down to downstream applications. Several approaches have been studied to tame with this issue. The first approach is to consider algorithms that are optimal in terms of the number of communication rounds. \citet{scaman2019optimal,gorbunov2019optimal, uribe2021dual} studied algorithms with an optimal iteration complexity, \citet{sun2019distributed, sun2020improving, pmlr-v139-lu21a} focused on non-convex problems and studied lower bounds on the number of communication rounds needed. We remark that a common paradigm to achieve better communication or sampling efficiency requires multiple gradient steps \citep{nadiradze2021asynchronous} or multiple consensus steps \citep{pmlr-v139-lu21a}.

Perhaps a more direct approach to improve communication efficiency is to apply \emph{compression} to control bandwidth usages in every communication step of the algorithm. This idea was first studied in the context of {distributed optimization} where workers communicate with a central server. Many algorithms have been studied with compression strategies such as sparsification \citep{stich2018sparsified, alistarh2019convergence, wangni2018gradient}, quantization \citep{wen2017terngrad,alistarh2017qsgd, bernstein2018signsgd, reisizadeh2020fedpaq}, low-rank approximation \citep{vogels2019powersgd}, etc., often used in combination with error compensation \citep{mishchenko2019distributed, tang2019doublesqueeze}{; \revision also see} the recent work \citep{richtarik2021ef21} {\revision for more details on compression strategies}.

\begin{table}[t]
    \centering
\caption{\textbf{Comparison of decentralized stochastic optimization algorithms for smooth \emph{non-convex} problems.} Sample complexity is the no.~of samples required per agent
to obtain an $\epsilon$-stationary solution, $\avgtheta$,  such that $\mathbb{E}[ \| \nabla f( \avgtheta ) \|^2] \leq \epsilon^2$. Constants $\delta, \sigma^2, \InitG, \rho$ are defined in \Cref{ass:mix}, \ref{ass:stoc}, \ref{ass:compress}, \Cref{th:main}. Highlighted in {\re red} are {\revision dominant} terms when $\epsilon \to 0$.\vspace{.2cm}
} \label{tab:compare}
{ \begin{tabular}{llll}
    \hline 
    \bfseries Algorithms & \bfseries Sample Complexity
    & \bfseries Compress & \bfseries Remarks \\
    \hline
    \begin{tabular}{@{}l@{}} {{\tt DSGD}} \\ {\citep{lian2017can}}\end{tabular}& $
    \BO\left(\max\left\{ {\re \frac{\sigma^2}{n} \epsilon^{-4} }, \frac{n( \sigma^2 + \varsigma^2) }{\rho^2 \epsilon^2} \right\} \right)$ & $\xmark$ & {\footnotesize $\varsigma^2 = \underset{i, \theta}{\sup}~ \| \nabla f_i(\theta) - \nabla f(\theta) \|^2$} \\
    \begin{tabular}{@{}l@{}} {{\tt GNSD}} \\ {\citep{lu2019gnsd}}\end{tabular}
      & $\BO\left( {\re \frac{1}{C_0^2 C_1^2} \epsilon^{-4} } \right)$ & $\xmark$ & \begin{tabular}{@{}l@{}} {\footnotesize $C_0, C_1$ are not explicitly} \\ {\footnotesize  defined, see \citep{lu2019gnsd}.}\end{tabular} \\
    \begin{tabular}{@{}l@{}} {{\tt DeTAG}} \\ {\citep{pmlr-v139-lu21a}}\end{tabular} & $\BO\left( \max\left\{ {\re \frac{\sigma^2}{n} \epsilon^{-4} }, \frac{ B \log\left( n + \varsigma_0 n\epsilon^{-1}\right)}{ \rho^{0.5}\epsilon^2}\right\} \right)$ & $\xmark$ & 
    \begin{tabular}{@{}l@{}} {\footnotesize $\varsigma_0$ is variance of init.~stoc.} \\ {\footnotesize gradient, $B$ is batch size.}\end{tabular}\\
    \begin{tabular}{@{}l@{}} {{\tt GT-HSGD}} \\ {\citep{xin2021hybrid}}\end{tabular}  & $\BO\left( \max\left\{ {\re \frac{\sigma^3}{n} \epsilon^{-3} }, \frac{\InitG}{\rho^3\epsilon^2}, \frac{n^{0.5}\sigma^{1.5}}{\rho^{2.25}\epsilon^{1.5}} \right\} \right)$ & $\xmark$ \\
    \begin{tabular}{@{}l@{}} {{\tt CHOCO-SGD}} \\ {\citep{koloskova2019decentralizeda}} \end{tabular} & $\BO\left( \max\left\{ {\re  \frac{\sigma^2}{n} \epsilon^{-4} }, \frac{G}{\delta\rho^2\epsilon^3} \right\} \right)$ & $\cmark$ & {\footnotesize $G = \underset{i, \theta}{\sup} ~\mathbb{E}_{\zeta \sim \mu_i} [ \| \nabla f_i (\theta; \zeta) \|^2]$} \\
    \begin{tabular}{@{}l@{}} {{BEER}} \\ {\citep{zhao2022beer}} \end{tabular} & $\BO \left( \max \left \{ {\re \frac{\sigma^2}{\delta^2 \rho^3 } \epsilon^{-4}}, \frac{1}{\delta \rho^3 \epsilon^2}  \right\} \right)$ & $\cmark$ & \begin{tabular}{@{}l@{}}{\footnotesize Requires batch size of } \\ {\footnotesize ${\cal O}(\sigma^2/ (\delta \epsilon^2) )$.} \end{tabular} \\
    \begin{tabular}{@{}l@{}} {{\tt CEDAS}} \\ {\citep{huang2023cedas}} \end{tabular}  & $\BO \left( \max \left \{{\re \frac{\sigma^2}{n}\epsilon^{-4}}, \frac{n \sigma^2}{\rho\epsilon^2} , \frac{n G_0}{\rho\epsilon}  \right\} \right) $ & $\cmark$ & {\footnotesize $G_0 = n^{-1} \sum_{i=1}^n \| \nabla f_i(\prm_i^0)\|^2$} \\
    \cellcolor{gray!15}\aname &  \cellcolor{gray!15}$\BO\left( \max\left\{ {\re \frac{\sigma^3}{n} \epsilon^{-3} }, \frac{n\InitG}{\delta^2\rho^4\epsilon^2}, \frac{n^{1.25}\sigma^{1.5}}{\delta^{2.25}\rho^{4.5}\epsilon^{1.5}}\right\} \right)$ & \cellcolor{gray!15}$\cmark$ & \cellcolor{gray!15}{\footnotesize See \Cref{th:main}} \\
    \hline
\end{tabular}}\vspace{-.3cm}
\end{table}

For decentralized optimization which operates in the absence of a central server, the design of compression-enabled algorithms is more challenging. \citet{tang2018communication} proposed an extrapolation compression method, \citet{koloskova2019decentralized, koloskova2019decentralizeda} proposed the {\tt CHOCO-SGD} algorithm which combines decentralized SGD \citep{lian2017can} with error compensation, \citet{vogels2020practical} studied compression with low-rank matrices, \citet{zhao2022beer} considered algorithms deploying large batch size. 
Despite the simplicity and appealing practical performance, algorithms such as {\tt CHOCO-SGD} suffer from sub-optimal iteration/sample complexity. Their analysis also shows that the performance hinges on a measure of data similarity across agents which is not ideal in light of applications such as federated learning with non-i.i.d.~data. We inquire\vspace{-.2cm}
\begin{samepage}\begin{center}
    \emph{Can we design a compression-enabled decentralized algorithm for non-i.i.d.~data with near-optimal {\revision sample} complexity? Does such algorithm work well in practice?} \vspace{-.2cm} 
\end{center}\end{samepage}
This paper addresses the above questions by incorporating two ingredients in decentralized optimization algorithm: {\sf (A)} compression with gradient tracking, {\sf (B)} momentum-based variance reduction. In summary,\vspace{-.0cm}
\begin{itemize}[leftmargin=*, topsep=0mm, itemsep =0.25mm]
    \item We design the \emph{Doubly Compressed Momentum-assisted Stochastic Gradient Tracking} (\aname)~algorithm which utilizes two levels of compressions for tackling decentralized stochastic optimization. The design of {\aname} involves a judicious combination of compression with gradient tracking to maximize convergence speed. Our algorithm finds a stationary solution without relying on restrictive conditions such as bounded similarity between data distributions found in prior compression-enabled algorithms. 
    \item We provide a unified convergence analysis for \aname. Let $f(\avgtheta)$ be the overall objective function across the network to be defined in \eqref{eq:opt}, we show that \aname~finds the averaged iterate $\avgtheta^T$ in $T$ iterations and communications rounds with $\mathbb{E} [ \norm{ \nabla f(\avgtheta^T) }^2 ] = {\cal O}( 1 / {T}^{2/3} )$ for {\revision mean square smooth} objective functions, and with $\mathbb{E} [ f( \avgtheta^T ) - f^\star ] = {\cal O}( \log T / {T} )$ for objective functions satisfying the Polyak-Łojasiewicz condition. Note the algorithm takes ${\cal O}(1)$ sample per iteration. For the latter case, we show that if deterministic gradients are available, \aname~converges \emph{linearly} to optimal solution. 
    \item We note that the analysis of {\aname} comprises a number of inter-dependent error quantities, whose convergences are not straightforward to observe due to the nonlinear coupling between them. To this end, our analysis technique, which can be of independent interest, relies on the construction of a \emph{tight} potential function to yield the desirable (tight) bound; see \Cref{lem:wholesys}. 
     \item We empirically evaluate the performance of \aname~on training linear models and deep learning models using synthetic and real data, on non-convex losses.
\end{itemize}\vspace{-.1cm}
Note that recently, \citet{zhao2022beer} proposed the {\tt BEER} algorithm for tackling decentralized non-convex optimization with compression and optimal communication complexity. The latter has been achieved using a large batch size per iteration.
\citet{huang2023cedas} proposed the {\tt CEDAS} algorithm which achieves an improved transient time for approaching the asymptotic convergence rate as centralized SGD. {\revision \citet{yan2023compressed} proposed the {\tt CDProxSGT} algorithm to handle composite objective functions (possibly non-smooth) using proximal update.} However, {\revision under the mean square smoothness assumption considered in this paper}, the best known analysis for these algorithms only show a suboptimal sample complexity of ${\cal O}(\epsilon^{-4})$ as they do not incorporate a {\revision momentum-based} variance reduction step as in {\aname}.
We summarize the sample complexities for state-of-the-art decentralized algorithms in \Cref{tab:compare}. As seen, \aname~is the only algorithm with compression and an ${\cal O}(\epsilon^{-3})$ sample complexity. Such sample complexity matches the complexity lower bound for stochastic first order algorithms \citep{arjevani2019lower}, making {\aname} the first compression-enabled algorithm to achieve near-optimal sample complexity {\revision under the mean square smoothness assumption}.\vspace{0cm}

\paragraph{Related Works}
Algorithms for decentralized optimization have been first studied in \citep{nedic2009distributed}. 
It has been extended to the stochastic setting (a.k.a.~{\tt DSGD}) in \citep{ram2010distributed}, and to directed graphs \citep{tsianos2012push, assran2019stochastic}. Notably, multiple works \citep{qu2017harnessing, di2016next, shi2015extra} proposed a gradient tracking technique where agents communicate  local gradients to accelerate convergence.\vspace{-.1cm} 

In stochastic non-convex optimization,  \citet{lian2017can} provided a performance analysis of {\tt DSGD}; \citet{lu2019gnsd} proposed {\tt GNSD} which combines gradient tracking with stochastic gradient (also see \citep{tang2018d}); \citet{pmlr-v139-lu21a} proposed {\tt DeTAG} with optimal computation-communication tradeoff; \citet{xin2021hybrid} proposed {\tt GT-HSGD} which extended {\tt GNSD} with momentum-based variance reduction, and similar algorithms are in \citep{xin2020variance,zhang2021gt}. Note that the momentum-based variance reduction idea was proposed in \citep{tran2021hybrid, cutkosky2019momentum} to achieve optimal sampling complexity for centralized SGD. For a general overview, see \citep{chang2020distributed}.\vspace{-.1cm}

On the other hand, methods for reducing communication burden in decentralized algorithms have been developed. \citet{aysal2008distributed, kashyap2007quantized, reisizadeh2019exact, saha2021decentralized} studied quantization for average consensus which is a main building block for decentralized algorithms. Notably, recent works \citep{liu2020linear, liao2021compressed, song2021compressed, zhang2021innovation,song2021compressed} showed that combining compression with gradient tracking lead to algorithms that achieve linear convergence to an optimal solution. These algorithms bear similar structure to  \aname, yet are limited to strongly convex objective functions and full batch gradients; {\redtmp see \citep{kovalev2021linearly} for the extension to stochastic settings.}

\paragraph{Notations} $\| \cdot \|$, $\| \cdot \|_F$ denote Euclidean norm, Frobenius norm, respectively. The subscript-less operator $\mathbb{E} [\cdot]$ is total expectation taken over all randomnesses in operand.\vspace{0cm}
\vspace{-.1cm}

\section{Problem Setup \& Background}
Consider a {\revision connected}, weighted and undirected graph $G = ( {\cal N} , {\cal E} , \W )$ with ${\cal N} = \{1,\ldots,n\}$ representing a set of $n$ agents, ${\cal E} \subseteq {\cal N} \times {\cal N}$ representing the communication links between agents, and $\W \in \mathbb{R}^{n \times n}$ is a symmetric, weighted adjacency matrix. Note that self-loops are included such that $\{i,i\} \in {\cal E}$ for all $i$. 
Our goal is to tackle the following optimization problem:
\begin{align}
\min_{ \theta \in \mathbb{R}^d }~f(\theta) := \frac{1}{n} \, \sum_{i=1}^n f_i(\theta), \label{eq:opt}
\end{align} 
where $f_i : \mathbb{R}^d \rightarrow \mathbb{R}$ is a continuously differentiable (possibly non-convex) objective function known to the $i$th agent. The objective function can be expressed as $f_i(\theta) = \mathbb{E}_{ \zeta \sim \mu_i } [ f_i( \theta; \zeta ) ]$ such that $\mu_i$ denotes the data distribution available at agent $i$. We assume that $f(\theta) > -\infty$ for any $\theta \in \mathbb{R}^d$.

Throughout this paper, we assume the following conditions on the objective function and the adjacency matrix ${\bf W}$:
{\revision
\begin{assumption} \label{ass:lips}
    There exists $L \in \mathbb{R}_+$ such that for any $i \in {\cal N}$, $\theta, \theta' \in \RR^d$,
    \begin{equation} \label{eq:lips}
        \mathbb{E}_{\zeta} \left[ \| \nabla f_i( \theta; \zeta ) - \nabla f_i( \theta' ; \zeta ) \|^2 \right] \leq L^2 \| \theta - \theta' \|^2.
    \end{equation}
\end{assumption}
}
\begin{assumption} \label{ass:mix}
The adjacency matrix $\W \in \RR_+^{n \times n}$ satisfies: (i) $W_{ij} = 0$ if $ \{ i,j \} \notin {\cal E}$; (ii) $\W {\bf 1}_n = \W^\top {\bf 1}_n = {\bf 1}_n$; (iii) let $\U \in \RR^{n \times (n-1)}$ be a matrix with orthogonal columns satisfying $\I_n - (1/n) {\bf 1}{\bf 1}^\top = \U \U^\top$, there exists $\rho \in (0,1]$ such that $\| \U^\top \W \U \| \leq 1 - \rho$; (iv) there exists $\bw \in (0,2]$ such that $\norm{ \W - \I_n } \leq \bw$.
\end{assumption}\vspace{-0.0cm}
The above conditions are standard. \Cref{ass:lips} requires the objective function to be {\revision mean square} smooth.
It is also known that there exists $\W$ such that \Cref{ass:mix} is satisfied when $G$ is a connected graph, e.g., by using the Metropolis-Hastings weight; see \citep{boyd2004fastest}. 
{\revision For any ${\bf W}$ which is a weighted adjacency matrix on a connected graph satisfying conditions (i)-(ii), ${\bf 1}$ is the unique eigenvector of ${\bf W}$. It follows that $\| \U^\top \W \U \| = \max\{ \lambda_2, |\lambda_n| \}$ and conditions (i)-(iii) are equivalent to  \cite[Definition 1]{koloskova2019decentralized}, see \Cref{app:spectralgap}.}

We assume that the gradient of $f_i$ can be estimated as $\nabla f_i( \theta; \zeta)$ satisfying:
\begin{assumption} \label{ass:stoc}
    There exists $\sigma \geq 0$ such that for any $\theta \in \mathbb{R}^d$, $i=1,...,n$, the gradient estimate $\nabla f_i( \theta; \zeta)$ with $\zeta \sim \mu_i$ is unbiased with bounded second order moment, i.e., 
    \begin{equation}
    \mathbb{E}_{\zeta \sim \mu_i} [ \nabla f_i( \theta; \zeta) ] = \nabla f_i(\theta),~~\mathbb{E}_{\zeta \sim \mu_i} [ \norm{ \nabla f_i( \theta; \zeta) - \nabla f_i( \theta) }^2 ] \leq \sigma^2.
    \end{equation}
\end{assumption} \vspace{-.2cm}
Again, \Cref{ass:stoc} is a standard setting for stochastic optimization.

\paragraph{DSGD and CHOCO-SGD}
Equipped with \Cref{ass:mix}, \ref{ass:stoc}, a common practice for tackling \eqref{eq:opt} in a decentralized manner is to utilize $\W$ as a mixing matrix and combine mixing with stochastic gradient descent. To illustrate the basic idea, we observe the decentralized stochastic gradient ({\tt DSGD}) algorithm \citep{ram2010distributed}: at iteration $t \geq 0$ and all $i=1,\ldots,n$,\vspace{-.1cm}
\begin{equation}
    \theta_i^{t+1} = {\textstyle \sum_{j=1}^n} W_{ij} \theta_j^t - \eta \stocgrdf_i^{t}~~\text{where}~~\stocgrdf_i^{t} \equiv \nabla f_i( \theta_i^{t}; \zeta_i^{t} ),\vspace{-.2cm} \label{eq:dgd}
\end{equation}
such that $\eta>0$ is the step size and $\stocgrdf_i^{t}$ is a shorthand notation for the unbiased stochastic gradient with the data $\zeta_i^{t} \sim \mu_i$ drawn independently upon fixing $\theta_i^t$ and satisfying \Cref{ass:stoc}.
For agent $i$, the \emph{consensus step} $\sum_{j=1}^n W_{ij} \theta_j^t$ can be computed with a local average among the neighbors of $i$.

Notice that for \eqref{eq:dgd}, agents are required to transmit $d$ real numbers on the graph $G$ to their neighbors at every iteration. In practice, the communication links between agents are bandwidth limited. To this end, a remedy is to apply \emph{compression} to messages transmitted on $G$. 
Formally, we consider a stochastic compression operator ${\cal Q}: \mathbb{R}^d \to \mathbb{R}^d$ satisfying the condition:
\begin{assumption}\label{ass:compress}
    For any $x \in \mathbb{R}^d$, the compressor output ${\cal Q}(x)$ is the random vector $\tilde{\cal Q}(x; \xi)$ with $\xi \sim \pi_x$ such that there exists $\delta \in (0,1]$ satisfying\vspace{-.05cm}
    \begin{equation} \label{eq:quant}
        \mathbb{E} \left[ \norm{ x - {\cal Q}(x) }^2 \right] = \mathbb{E} \left[ \| x - \tilde{\cal Q}(x; \xi) \|^2 \right] \leq (1-\delta) \norm{x}^2.
    \end{equation}
\end{assumption}
\vspace{-.3cm}
The above is a general condition on compressors as discussed in \citep{koloskova2019decentralized}. It is satisfied by a number of designs. For instance, with $k \leq d$, the top-$k$ (resp.~random-$k$) \emph{sparsifier}:
\begin{align} \label{eq:sparsifier}
\left[ {\cal Q}(x) \right]_i = x_i~~\text{if}~~i \in {\cal I}_x,~~\left[ {\cal Q}(x) \right]_i = 0~~\text{otherwise}.
\end{align} 
where ${\cal I}_x \subseteq \{1,\ldots,d\}$ with $|{\cal I}_x| = k$ is the set of the coordinates of $x$ with the largest $k$ magnitudes (resp.~uniformly selected at random), satisfies \Cref{ass:compress} with $\delta = k/d$. Other compressors such as random quantization \citep{wen2017terngrad,alistarh2017qsgd, stich2018sparsified, alistarh2019convergence} can also satisfy \eqref{eq:quant} {\revision with re-scaling}; see {\revision \Cref{app:subsec:rescale} and \citep{koloskova2019decentralized}}. 
Note that sending ${\cal Q}(x)$ in \eqref{eq:sparsifier} over a communication channel requires only $k$ real number transmission, achieving a $k/d$ compression ratio. 

However, applying ${\cal Q}(\cdot)$ to the consensus step in \eqref{eq:dgd} directly does not lead to a convergent algorithm as (i) the compressor is not unbiased, and (ii) the compression error will accumulate with $t \to \infty$. The {\tt CHOCO-SGD} algorithm \citep{koloskova2019decentralized} resolves the issue by incorporating an error feedback step: at iteration $t$,
\begin{align} 
& \hatprm_i^{t+1} = \hatprm_i^t + {\cal Q}( \prm_i^t - \eta \stocgrdf_i^{t} - \hatprm_i^t ), \\
& \theta_i^{t+1} = \theta_i^t - \eta \stocgrdf_i^{t} + \gamma {\textstyle \sum_{j=1}^n} W_{ij} ( \hatprm_j^{t+1} - \hatprm_i^{t+1} ), \label{eq:choco} 
\end{align}
for all $i$, where $\gamma > 0$ is the consensus step size, and $\eta, \stocgrdf_i^{t}$ were defined in \eqref{eq:dgd}. Instead of directly transmitting a compressed version of $\theta_i^t - \eta \stocgrdf_i^t$, a key feature of {\tt CHOCO-SGD} is that the latter maintains an auxiliary variable $\hatprm_i^{t}$ that accumulates the compressed \emph{difference} ${\cal Q}(\theta_i^t - \eta \stocgrdf_i^t - \hatprm_i^t )$. 
\citet{koloskova2019decentralizeda} proved that in $T$ iterations, {\tt CHOCO-SGD} finds a near-stationary solution of \eqref{eq:opt}, $\{ \theta_i^{\sf T} \}_{i=1}^n$ with ${\sf T} \in \{0,\ldots,T-1\}$, satisfying $\mathbb{E}[ \norm{\nabla f( n^{-1} \sum_{i=1}^n \theta_i^{\sf T} ))}^2 ] = {\cal O}(1/\sqrt{T})$. 

However, a drawback of {\tt CHOCO-SGD} is that its convergence requires the stochastic gradient $\mathbb{E}[ \| \stocgrdf_i^t \|^2 ]$ to be bounded for any $i,t$, see \citep{koloskova2019decentralized, koloskova2019decentralizeda}; or it can be shown that it requires the \emph{data similarity} $\sup_{ \theta \in \mathbb{R}^d} \norm{ \nabla f_i( \theta) - \nabla f( \theta) }$ is bounded. These conditions may not be valid when the local data are non-i.i.d. such as in the federated learning setting \citep{konevcny2016federated}. 
\vspace{-.1cm} 

\section{Proposed \aname~Algorithm}\vspace{-.1cm}

Taking a closer look at {\tt CHOCO-SGD} \eqref{eq:choco} reveals that the algorithm is only able to utilize information from the local gradient estimates $\stocgrdf_i^t \approx \nabla f_i( \theta_i^t )$ in the local update step. The local update dynamics may thus remain non-stationary even when the solution $\theta_i^t$ is close to a stationary point of \eqref{eq:opt}. The issue is particularly severe when the local objective functions are not similar in the sense that $\nabla f_i( \theta ) \neq \nabla f_j( \theta )$. 
This motivates us to design an algorithm that will make $n^{-1} \sum_{i=1}^n \stocgrdf_i^t$ available locally. 

We propose the \emph{Doubly Compressed Momentum-assisted Stochastic Gradient Tracking} (\aname) algorithm. 
Our algorithm involves two main ingredients: {\sf (A)} a \emph{gradient tracking} step with \emph{compression} where each agent maintains an estimate of $n^{-1} \sum_{i=1}^n \stocgrdf_i^t$ at low communication cost; {\sf (B)} adaptive momentum-based variance reduction that improves the variance of estimate of $\stocgrdf_i^t$ using ${\cal O}(1)$ sample per iteration. 

\setlength{\textfloatsep}{.2cm}
\begin{algorithm}[hbtp]
    \caption{\aname~Algorithm}
    \label{alg:docom}
 \begin{algorithmic}[1]
    \STATE {\bfseries Input:} mixing matrix $\W$; step sizes $\eta$, $\gamma$, $\beta$; initial batch number $b_0$; initial iterate $\bar{\prm}^0 \in \mathbb{R}^d$. 
    \STATE Initialize $\quad \prm^0_i = \bar{\prm}^0~\forall i \in [n]; \quad \hatprm^0_{i,j} = \bar{\prm}^0~\forall \{i, j\} \in {\cal E}$. \\
    \STATE Initialize stochastic gradient estimate \\
    \vspace{-.25cm}
    \[\textstyle \ve^0_i = \frac{1}{b_0} \sum_{r=1}^{b_0} \nabla f_i(\prm_i^{0}; \zeta_i^{0,r}), \big\{\zeta_{i}^{0,r} \big\}_{r=1}^{b_0} \thicksim \mu_i; \quad \gog_i^0 = \ve_i^{0}~\forall i \in [n]; \quad \hatgog^0_{i,j} = {\bf 0}_d~\forall \{i, j\} \in {\cal E}.\]
    \vspace{-.5cm}
    \FOR{$t$ {\bfseries in} $0, \dots, T-1$}
        \STATE \underline{\textsc{(Update)}} $\forall i \in [n]:$ Agent $i$ updates $\prm_i^{t+\half} = \prm_i^t - \eta \gog_i^t$
        \FOR{$\{i,j\}\in {\cal E}$ (notice $\{i,i\}\in {\cal E}$)}
            \STATE \underline{\textsc{(Prm. Gossip)}} Agent $j$ receive ${\cal Q}(\prm_i^{t+\half} - \hatprm_{i,i}^t )$ from agent $i$ and update $\hatprm^{t+1}_{j,i} = \hatprm^{t}_{j,i} + {\cal Q}(\prm_i^{t+\half} - \hatprm_{i,i}^t )$
        \ENDFOR
        \STATE \underline{\textsc{(Prm. Aggregate)}} $\forall i \in [n]:$ Agent $i$ updates $\prm_i^{t+1} = \prm_i^{t+\half} + \gamma \sum_{j:\{i,j\}\in {\cal E}} W_{ij} (\hatprm_{i,j}^{t+1} - \hatprm_{i,i}^{t+1})$ \label{line:theta}
        \STATE Draw data batch $\zeta_i^{t+1} \sim \mu_i$ and compute $\stocgrdf_i^{t+1} = \nabla f_i( \theta_i^{t+1}; \zeta_i^{t+1} )$, $\stocgrdfp_i^t = \nabla f_i( \theta_i^{t}; \zeta_i^{t+1} )$
        \STATE \underline{\textsc{(Momentum)}} $\forall i \in [n]:$ Agent $i$ updates $\ve_i^{t+1} = \beta\stocgrdf_{i}^{t+1}  +  (1-\beta)(\ve_i^{t} + \stocgrdf_i^{t+1}  -  \stocgrdfp_i^{t})$
        \STATE \underline{\textsc{(Grad. Tracker)}} $\forall i \in [n]:$ Agent $i$ updates $\gog_i^{t+\half} = \gog_i^t + \ve_i^{t+1} - \ve_i^t$
        \FOR{$\{i,j\}\in {\cal E}$ (notice $\{i,i\}\in {\cal E}$)}
        \STATE \underline{\textsc{(G.T. Gossip)}} Agent $j$ receive ${\cal Q}(\gog_i^{t+\half} - \hatgog_{i,i}^t )$ from agent $i$ and update $\hatgog_{j,i}^{t+1} = \hatgog_{j,i}^t + {\cal Q}(\gog_i^{t+\half} - \hatgog_{i,i}^t )$
        \ENDFOR
        \STATE \underline{\textsc{(G.T. Aggregate)}} $\forall i \in [n]:$ Agent $i$ updates $\gog_i^{t+1} = \gog_i^{t+\half} + \gamma \sum_{j:\{i,j\}\in {\cal E}} W_{ij} \left(\hatgog_{i,j}^{t+1} - \hatgog_{i,i}^{t+1}\right)$ \label{line:G}
    \ENDFOR
    \STATE {\bfseries Output:} pick the ${\sf T}$th iterate $\prm_i^{\sf T}$, where ${\sf T}$ is uniformly selected from $\{0,\ldots,T-1\}$ or ${\sf T}=T$.
\end{algorithmic}
\end{algorithm}

Let $\eta > 0$ be step size, $\gamma, \beta \in (0,1]$, the \aname~algorithm at iteration $t \in \mathbb{N}$ reads: for $i=1,\ldots,n$,
\vspace{-.15cm}
\noindent 
\begin{subequations} \label{eq:algo}
\begin{align}
     \hspace{-.2cm} &  \prm_i^{t+1} = \prm_i^t - \eta \gog_i^t + \gamma \textstyle\sum_{j=1}^n W_{ij} \left( \hatprm_j^{t+1} - \hatprm_i^{t+1} \right) \label{eq:docom_a} \\
    \hspace{-.2cm} & \hatprm_i^{t+1} = \hatprm_i^t + {\cal Q} \left( \prm_i^t - \eta \gog_i^t - \hatprm_i^t \right) \label{eq:docom_b} \\
    \hspace{-.2cm} & \ve_i^{t+1} = \beta \stocgrdf_i^{t+1} + (1-\beta) \left( \ve_i^t + \stocgrdf_i^{t+1} - \stocgrdfp_i^{t} \right) \label{eq:docom_c} \\
    \hspace{-.2cm} &  \gog_i^{t+1} = \gog_i^t + \ve_i^{t+1} - \ve_i^t + \gamma \textstyle\sum_{j=1}^n W_{ij} \left( \hatgog_j^{t+1} \hspace{-.1cm} - \hatgog_i^{t+1} \right)  \label{eq:docom_d} \\
    \hspace{-.2cm} & \hatgog_i^{t+1} = \hatgog_i^t + {\cal Q} \left( \gog_i^t + \ve_i^{t+1} - \ve_i^t - \hatgog_i^t \right) \label{eq:docom_e}
\end{align}
\end{subequations}
In the above, we draw $\zeta_i^{t+1} \sim \mu_i$ at agent $i$ (or a minibatch of samples) as $\stocgrdf_i^{t+1} \equiv \nabla f_i( \theta_i^{t+1}; \zeta_i^{t+1} )$, $\stocgrdfp_i^t \equiv \nabla f_i( \theta_i^{t}; \zeta_i^{t+1} )$ such that the stochastic gradients in \eqref{eq:docom_c} are formed using the same data batch.
{\revision To implement \eqref{eq:docom_c}  with $\beta \neq 1$, agent $i$ needs direct access to the data batch $\zeta_i^{t+1}$ and the oracle $\nabla f_i( \cdot; \zeta_i^{t+1} )$.} Readers are  referred to \Cref{alg:docom} for details on the initialization and decentralized implementation.

Unlike {\tt CHOCO-SGD} \eqref{eq:choco}, the local update steps in \eqref{eq:docom_a}, \eqref{eq:docom_b} are computed along the direction given by $g_i^t$. The latter is then updated according to \eqref{eq:docom_d}, \eqref{eq:docom_e}, which aims at \emph{tracking the dynamically updated average gradient estimator} $g_i^t \approx n^{-1}\sum_{j=1}^n v_j^t$ with compressed communications given by ${\cal Q}(\cdot)$. In this way, we say that the {\aname} algorithm is \emph{doubly compressed}. 
Furthermore, as for $v_j^t$, \eqref{eq:docom_c} uses a recursive variance reduced estimate to estimate the (exact) gradient $v_j^t \approx \nabla f_j( \theta_j^t )$ \citep{cutkosky2019momentum, tran2021hybrid}. Together, {\aname} yields a consensus based algorithm where the variance reduced and averaged gradient $g_i^t \approx n^{-1}\sum_{j=1}^n v_j^t$ is simultaneously available at each agent.

From an implementation perspective, \aname~shares the communication and computation costs per iteration {\revision of the same order as {\tt CHOCO-SGD} at ${\cal O}(d)$}. {\revision In fact, only an} extra communication step (with compression) is needed for the tracking of $n^{-1} \sum_{i=1}^n v_i^t$ and an extra computation step is needed for computing $\stocgrdfp_i^t$, in \eqref{eq:docom_d}, \eqref{eq:docom_e}. 
{\revision A detailed comparison on computational costs is shown in Table \ref{tab:comp-compl}.}
As we will show later, the above shortcomings can be compensated by the improved convergence rate of \aname.

\subsection{Main Results}
We show that {\aname} achieves state-of-the-art convergence rate for smooth problems.
Let $\avgtheta^t := n^{-1} \sum_{i=1}^n \theta_i^t$ be the averaged iterate, $\InitG := n^{-1} \mathbb{E}[ \sum_{i=1}^n \norm{ g_i^0 }^2 ]$ be the initial expected gradient norm, $f^\star := \min_{ \theta' } f( \theta' )$ be the optimal objective value.
We first summarize the convergence results under the mentioned assumptions where \eqref{eq:opt} is smooth but possibly non-convex:
\begin{tcolorbox}
\begin{theorem} \label{th:main}
Under \Cref{ass:lips}, \ref{ass:mix}, \ref{ass:stoc}, \ref{ass:compress}. Suppose that the step sizes satisfies 
\begin{equation} \label{eq:docom_stepsize}
\begin{split}
& \eta \leq \min \left\{ \eta_\infty, \sqrt{ {\brho n} / ({8 \ConstG}}) \right\},~~\gamma \leq \gamma_\infty,
\end{split}
\end{equation}
where $\gamma_\infty, \eta_\infty$ are defined in \eqref{eq:stepsize_whole}. 
Set $\beta \in (0,1)$, $\brho = \min\{ \frac{\rho \gamma}{8}, \frac{\delta \gamma}{8}, \beta \}$. 
For any $T \geq 1$, it holds
\begin{equation}
    \sum_{t=0}^{T-1} \mathbb{E} \left[ \frac{\norm{ \nabla f( \avgtheta^t)}^2}{2T} + \frac{L^2}{nT} \sum_{i=1}^n \norm{ \theta_i^t - \avgtheta^t}^2 \right] \leq \frac{f( \avgtheta^0 ) - f^\star}{ \eta T } + \ConstS \frac{2 \beta^2 \sigma^2}{\brho n} + \frac{4\sigma^2}{b_0 \brho T n} + \frac{\eta^2}{\brho T} \frac{236 L^2 \InitG}{ \rho^2 \gamma^2 (1-\gamma) }, \notag
\end{equation}\vspace{-.3cm}
where
\begin{align}
    & \ConstS = 4 + \frac{\eta^2}{\gamma^3}\frac{672L^2 n}{\rho^3} + \frac{\eta^2}{\gamma} \frac{6 L^2 n \rho^4 \delta}{25 \bw^2} + \frac{\eta^2}{\gamma^2} \frac{4 L^2 n}{\bw^2}, \label{eq:constS} \\
    & \ConstG = 8(1-\beta)^2 L^2 (1-\rho\gamma)^2 + \frac{L^2 n}{\rho \gamma} \left( 96 + \frac{141}{400} \frac{\rho^2}{\bw^2} \right). \notag 
\end{align}
\end{theorem} 
\end{tcolorbox}

We provide the proof of \Cref{th:main} in \Cref{sec:pf}. Below we discuss its main consequences.

\paragraph{Near-optimal Iteration/Sample Complexity} 
Setting the step sizes and parameters as $\eta = \frac{ n^{2/3} }{ L T^{1/3} }, \gamma = \gamma_\infty, \beta = \frac{ n^{1/3} }{ T^{2/3} }, b_0 = \frac{ T^{1/3} }{ n^{2/3} }$. Further, we select the ${\sf T}$th iterate as the output of {\aname} such that ${\sf T}$ is independently and uniformly selected from $\{0,\ldots, T-1\}$ [cf.~the output of \Cref{alg:docom}], similar to \citep{ghadimi2013stochastic}. For a sufficiently large $T$, it can be shown that \vspace{-.1cm}
\begin{equation}
\frac{1}{n} \sum_{i=1}^n \mathbb{E} \left[ \norm{ \nabla f( \theta_i^{\sf T} )}^2 \right] = {\cal O} \left( \frac{L (f(\avgtheta^0) - f^\star)}{(nT)^{2/3}} + \frac{\sigma^2}{ (n T)^{2/3}} + \frac{n \InitG}{ \delta^2 \rho^4 T} + \frac{\sigma^2 n^{5/3}}{ \delta^3 \rho^6 T^{4/3}} \right) , \label{eq:grdFbound} 
\end{equation}
where we have used the Lipschitz continuity of $\nabla f_i(\cdot)$ [cf.~\Cref{ass:lips}] to derive a bound on the gradient of individual iterate $\theta_i^{\sf T}$. 

For any agent $i=1,\ldots, n$, the iterate $\theta_i^{\sf T}$ is guaranteed to be ${\cal O}(1/T^{2/3})$-stationary to \eqref{eq:opt}. 
Notice that this is the state-of-the-art convergence rate for first order stochastic optimization even in the centralized setting; see \citep{cutkosky2019momentum, tran2021hybrid}; and it also matches the lower bound in \citep{arjevani2019lower}. Our rate is comparable to or faster than a number of decentralized algorithms with or without compression; see \Cref{tab:compare}. Further, we remark that \Cref{th:main} does not impose condition on $f_i$'s similarity $\sup_{\theta} \| \nabla f_i(\theta) - \nabla f(\theta) \|$ in which our convergence rate is independent of the data heterogeneity. 

{\revision 
Note that the step size configuration in \eqref{eq:grdFbound} requires $n \leq \mathcal{O}(L^{3/4} T^{1/2})$ in order to satisfy $\eta \leq \eta_{\infty}$, thereby necessitating $T = \Omega(n^2)$. As an alternative, it is possible to select
$\eta = \frac{1}{L T^{1/3}}, \gamma = \gamma_\infty, \beta = \frac{n^{1/3}}{T^{2/3}}, b_0 = \frac{T^{1/3}}{n^{2/3}}$, which yields
\begin{equation}
\frac{1}{n} \sum_{i=1}^n \mathbb{E} \left[ \norm{ \nabla f( \theta_i^{\sf T} )}^2 \right] = {\cal O} \left( \frac{L (f(\avgtheta^0) - f^\star)}{T^{2/3}} + \frac{\sigma^2}{ (n T)^{2/3}} + \frac{\InitG}{n^{1/3} \delta^2 \rho^4 T} + \frac{\sigma^2 n^{5/3}}{ \delta^3 \rho^6 T^{4/3}} \right) . \label{eq:grdFbound_2} 
\end{equation}
In this case satisfying $\eta \leq \eta_{\infty}$ requires $n \leq \mathcal{O}(L^{3/2} T)$ and thus $T = \Omega(n)$ similar to \cite{koloskova2019decentralizeda}. However, as a trade-off, it gives a worse dependence on $f(\avgtheta^0) - f^\star$. 
}

\paragraph{Impacts of Network Topology and Compressor} Eq.~\eqref{eq:grdFbound} indicates the impacts of network topology (due to $\rho$) and compressor (due to $\delta$) vanish as $T \to \infty$. This can be observed by recognizing that the last two terms in \eqref{eq:grdFbound} are ${\cal O}(1/T), {\cal O}(1/T^{4/3})$. In \Cref{app:step_twist}, we demonstrate with a similar set of step sizes, for any 
$T \geq T_{\sf trans} = \Omega( n^3 \InitG^3 / ( \sigma^6 \delta^6 \rho^{12} ) )$, 
\aname~enjoys a matching convergence behavior as a centralized SGD algorithm employing a momentum-based variance reduced gradient estimator with a batch size of $n$, e.g., \citep{tran2021hybrid}. In the latter case, we have $n^{-1} \sum_{i=1}^n \mathbb{E} \left[ \norm{ \nabla f( \theta_i^{\sf T} )}^2 \right] = {\cal O}( \sigma^2 / (nT^{2/3} ))$. The constant $T_{\sf trans}$ is also known as the transient time of decentralized algorithm \citep{pu2020asymptotic}.

Our result does not require any assumption on the data heterogeneity level nor the boundedness of gradient as in {\tt CHOCO-SGD} \citep{koloskova2019decentralizeda} or {\tt DSGD} \citep{lian2017can}. As hinted before, this is a consequence of gradient tracking. 
In \Cref{app:betaequalone}, we provide a separate analysis for the case of $\beta = 1$ when no momentum is applied in \eqref{eq:docom_c}. Interestingly, in the latter case, the convergence rate is only ${\cal O}(1/\sqrt{T})$ [cf.~\eqref{eq:docom_beta}], indicating that the momentum term is crucial in accelerating \aname.

\paragraph{PL Condition} Finally, we show that the convergence rate can be improved when the objective function satisfies the Polyak-Łojasiewicz (PL) condition: 
\begin{assumption} \label{ass:pl}
    For any $\prm \in \mathbb{R}^d$, it holds that $\norm{ \nabla f( \prm ) }^2 \geq 2 \mu \big[ f( \prm ) - f^\star \big]$ for some $\mu > 0$.
\end{assumption}
\vspace{-.0cm}
Notice that the PL condition is satisfied by strongly convex functions as well as a number of non-convex functions; see \cite{karimi2016linear}. We obtain:

\begin{tcolorbox}
\begin{corollary} \label{cor:pl}
Under \Cref{ass:lips}, \ref{ass:mix}, \ref{ass:stoc}, \ref{ass:compress}, \ref{ass:pl}. Suppose that the step size condition \eqref{eq:docom_stepsize} holds and $\beta \in (0,1)$. Then, for any $t \geq 1$, it holds
\begin{equation}
    \Delta^{t} + \frac{2 L^2 \eta}{\brho n} \sum_{i=1}^n \mathbb{E} [ \norm{\theta_i^t - \avgtheta^t}^2 ] \leq \left( 1 - \bmu \right)^t \left( \Delta^0 + \frac{2 \eta}{ \brho n } \ER^0 \right)  + \frac{ \eta \beta^2 }{ \brho  \bmu } \frac{2 \ConstS \sigma^2}{n} , \label{eq:plcase}
\end{equation}
where $\bmu := \min \left\{ \eta \mu, { \brho } / {2} \right\}$, $\Delta^t := \mathbb{E} [ f( \avgtheta^t )] - f^\star$ is the expected optimality gap and the constant $\ConstS$ is defined in \eqref{eq:constS}. Notice that $\ER^0$ can be upper bounded with \eqref{eq:er0}.
\end{corollary}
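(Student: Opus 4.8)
The plan is to prove \Cref{cor:pl} by recycling the Lyapunov function built for \Cref{th:main} — the one of \Cref{lem:wholesys} that couples the optimality gap with every auxiliary error — but, instead of telescoping it as in the non-convex case, turning it into a geometric one-step contraction via the PL inequality. Concretely, set $W^t := \Delta^t + \tfrac{2\eta}{\brho n}\ER^t$, where $\ER^t$ is the aggregate error potential collecting the consensus error $\sum_i\mathbb{E}[\norm{\prm_i^t - \avgtheta^t}^2]$, the two error-feedback/compression potentials attached to $\{\hatprm_i^t\}$ and $\{\hatgog_i^t\}$, the gradient-tracking error $\sum_i\mathbb{E}[\norm{\gog_i^t - \avgg^t}^2]$, and the momentum variance-reduction error $\mathbb{E}[\norm{\avgv^t - \grdF(\avgtheta^t)}^2]$, weighted as in \Cref{lem:wholesys}. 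By construction $\ER^t \geq L^2 \sum_i\mathbb{E}[\norm{\prm_i^t - \avgtheta^t}^2]$, so $W^t$ dominates the left-hand side of \eqref{eq:plcase}, and it suffices to establish a recursion for $W^t$.

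\textbf{Step 1 (one-step contraction).} Under the step-size condition \eqref{eq:docom_stepsize}, \Cref{lem:wholesys} gives a one-step bound of the schematic form
\begin{equation*}
W^{t+1} \leq W^t - \tfrac{\eta}{2}\,\mathbb{E}\big[\norm{\grdF(\avgtheta^t)}^2\big] - \tfrac{\brho}{2}\,\tfrac{2\eta}{\brho n}\ER^t + \tfrac{2\eta\,\ConstS\,\beta^2\sigma^2}{\brho n},
\end{equation*}
that is, exactly the inequality whose telescoped/averaged form yields \Cref{th:main}, except that we now keep the contractive term $-\tfrac{\brho}{2}\tfrac{2\eta}{\brho n}\ER^t$ that \Cref{th:main} simply discards. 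The cross terms between the auxiliary errors — entering through quantities like $\norm{\prm_i^{t+1}-\prm_i^t}^2 \lesssim \eta^2\norm{\gog_i^t}^2$ — are precisely those absorbed by the budget \eqref{eq:docom_stepsize}, in particular the constraint $\eta \leq \sqrt{\brho n/(8\ConstG)}$.

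\textbf{Step 2 (apply PL and unroll).} Taking total expectation in \Cref{ass:pl} gives $\mathbb{E}[\norm{\grdF(\avgtheta^t)}^2] \geq 2\mu\Delta^t$, hence $-\tfrac{\eta}{2}\mathbb{E}[\norm{\grdF(\avgtheta^t)}^2] \leq -\eta\mu\Delta^t$. Since $\bmu = \min\{\eta\mu,\brho/2\}$ satisfies $\eta\mu\geq\bmu$ and $\brho/2\geq\bmu$, Step 1 gives
\begin{equation*}
W^{t+1} \leq W^t - \bmu\Delta^t - \bmu\,\tfrac{2\eta}{\brho n}\ER^t + \tfrac{2\eta\,\ConstS\,\beta^2\sigma^2}{\brho n} = (1-\bmu)\,W^t + \tfrac{2\eta\,\ConstS\,\beta^2\sigma^2}{\brho n}.
\end{equation*}
As \eqref{eq:docom_stepsize} forces $\bmu\in(0,1)$, unrolling and summing $\sum_{k\geq0}(1-\bmu)^k = 1/\bmu$ yields $W^t \leq (1-\bmu)^t W^0 + \tfrac{1}{\bmu}\tfrac{2\eta\,\ConstS\,\beta^2\sigma^2}{\brho n}$, which is \eqref{eq:plcase} after substituting $W^0 = \Delta^0 + \tfrac{2\eta}{\brho n}\ER^0$ and lower-bounding $W^t \geq \Delta^t + \tfrac{2L^2\eta}{\brho n}\sum_i\mathbb{E}[\norm{\prm_i^t-\avgtheta^t}^2]$. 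The initialization in \Cref{alg:docom} ($\prm_i^0=\hatprm^0_{i,j}=\avgtheta^0$, $\gog_i^0=\ve_i^0$, $\hatgog^0_{i,j}=\mathbf{0}$) makes the consensus and $\prm$-compression parts of $\ER^0$ vanish, leaves the $\gog$-compression part equal to $\sum_i\norm{\gog_i^0}^2 = n\InitG$, and bounds the variance-reduction part by $\sigma^2/(nb_0)$ via \Cref{ass:stoc}, giving the explicit estimate \eqref{eq:er0}.

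\textbf{Main obstacle.} Essentially all the difficulty sits inside \Cref{lem:wholesys}: one must choose the weights in $\ER^t$ so that the mutually coupled recursions for the $\prm$-consensus error, the two compression potentials, the tracking error, and the momentum error — each contracting only at its own native rate ($1-\rho\gamma$, a $\delta\gamma$-type rate, $(1-\beta)^2$, etc.) up to a term fed by the others — collapse into a single potential contracting at rate at least $\brho/2$ once the step-size budget \eqref{eq:docom_stepsize} has been spent on the cross terms. Granting that lemma, the corollary is just the short PL-plus-geometric-series argument above; the only delicate points are verifying $\bmu\in(0,1)$ and that dropping $W^{t+1}\geq0$ (valid since $\Delta^{t+1},\ER^{t+1}\geq0$) is harmless. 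As a byproduct, when deterministic gradients are used ($\sigma^2=0$) the additive term vanishes and \eqref{eq:plcase} reduces to pure linear convergence $W^t \leq (1-\bmu)^t W^0$.
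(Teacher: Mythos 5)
Your proposal follows essentially the same route as the paper: apply the PL condition to the descent inequality of Lemma 4.3 (\Cref{lem:f_1stepnew}), add $\tfrac{2\eta}{\brho n}$ times the Lyapunov recursion of Lemma 4.6 (\Cref{lem:wholesys}), absorb the $\|\avgg^t\|^2$ term with the step-size condition $\eta^2\leq\brho n/(4\ConstG)$, take $\bmu=\min\{\eta\mu,\brho/2\}$ as the contraction rate for $W^t=\Delta^t+\tfrac{2\eta}{\brho n}\ER^t$, and unroll the geometric recursion. This is exactly the paper's argument.

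One incidental imprecision worth noting: in your last paragraph you assert that the initialization makes the $\theta$-compression part of $\ER^0$ vanish. It does not. Since $\hatprm_i^0=\theta_i^0=\avgtheta^0$, that term reduces to $c\|\Theta^0-\eta G^0-\hatTheta^0\|_F^2 = c\eta^2\|G^0\|_F^2$, which is nonzero and contributes (together with $a\|G_o^0\|_F^2$ and $b\|G^0-\hatG^0\|_F^2 = b\|G^0\|_F^2$) to the $\InitG\eta^2$-term in \eqref{eq:er0}. This does not affect the validity of your corollary proof — the paper's \eqref{eq:er0} is cited as a separate fact and you correctly defer to it — but the narration of how \eqref{eq:er0} arises should be corrected if you later fill it in.
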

\end{tcolorbox}

Setting the step sizes and parameters as $\eta = \log T/T, \gamma = \gamma_\infty, \beta = \log T /T, b_0 = \Omega(1)$. For sufficiently large $T$, it can be shown that \vspace{-.05cm}
\begin{align}
    & \textstyle \mathbb{E} [ f( \avgtheta^T )] - f^\star = {\cal O}( \log T /T ), \\
    & \frac{1}{n} \sum_{i=1}^n \mathbb{E} [ \norm{\theta_i^T - \avgtheta^T}^2 ] = {\cal O}( \log T /T ), \label{eq:epsreq1}\vspace{-.05cm}
\end{align}
see \Cref{app:largeT}.
Moreover, in the \emph{deterministic gradient case with $\sigma^2 = 0$}, we can select a constant $\beta, \eta$. Then, \eqref{eq:plcase} shows that \aname~converges \emph{linearly} to an optimal solution such that $\mathbb{E} [ f( \avgtheta^T )] - f^\star = {\cal O}( (1-\bmu)^T )$. 
We remark that the latter rates match the  recent algorithms with compression \citep{liu2020linear, liao2021compressed, song2021compressed, kovalev2021linearly} for strongly convex problems.\vspace{-.1cm}

\subsection{Proof of \Cref{th:main}}\label{sec:pf} 
We preface the proof by defining the following notations for the variables in \aname. For any $t \geq 0$: \vspace{-.0cm}
\begin{align*}
    \Theta^t = \left(\hspace{-.05cm} \begin{array}{c}
        (\theta_1^t)^\top \\
        \vdots \\
        (\theta_n^t)^\top 
    \end{array} \hspace{-.05cm}\right),
    V^t = \left(\hspace{-.05cm} \begin{array}{c}
        (\ve_1^t)^\top \\
        \vdots \\
        (\ve_n^t)^\top 
    \end{array} \hspace{-.05cm}\right),
    G^t = \left(\hspace{-.05cm} \begin{array}{c}
        (g_1^t)^\top \\
        \vdots \\
        (g_n^t)^\top 
    \end{array} \hspace{-.05cm}\right)
\end{align*} 
which are $n \times d$ matrices. 
Similarly, we define the matrices $\hatTheta^t$, $\hatG^t$ based on $\{ \hat{\theta}_i^t \}_{i=1}^n, \{ \hatgog_i^t \}_{i=1}^n$, and the matrices $\stocgrdF^t$, $\stocgrdFp^t$, $\grdF$  based on $\{ \stocgrdf_i^t \}_{i=1}^n$, $\{ \stocgrdfp_i^t \}_{i=1}^n$, $\{ \nabla f_i( \theta_i^t ) \}_{i=1}^n$. 

The norm of the matrix $\Theta_o^t = {\bf U}^\top \Theta^t$, i.e., $\|{\bf U}^\top \Theta^t\|_F^2 = \|{\bf U}{\bf U}^\top \Theta^t\|_F^2 = \| ({\bf I} - (1/n){\bf 1}{\bf 1}^\top) \Theta^t \|_F^2$, measures \emph{consensus error} of the iterate $\Theta^t$. 
We denote $G_o^t = {\bf U}^\top G^t$ such that $\norm{G_o^t}_F^2$ measures \emph{consensus error} of $G^t$.

Denote the average variables $\avgtheta^t = n^{-1} {\bf 1}^\top \Theta^t$, $\avgv^t = n^{-1} {\bf 1}^\top V^t$, $\avgg^t = n^{-1} {\bf 1}^\top G^t$, $\avggrdF^t = n^{-1} {\bf 1}^\top \grdF^t$. 
We first make the following observation regarding the $\avgtheta^t$-update:
\begin{lemma} \label{lem:f_1stepnew} Under \Cref{ass:lips} and the step size condition $\eta \leq \frac{1}{2L}$. Then, for any $t \geq 0$, it holds \vspace{-.0cm}
\begin{equation}
    f( \avgtheta^{t+1} ) \leq f( \avgtheta^t )   - \frac{\eta}{2} \norm{ \nabla f( \avgtheta^t ) }^2 + \frac{ L^2 \eta }{n} \norm{ \Theta_o^t }_F^2 + \eta \norm{ \avgv^t - \avggrdF^t}^2 - \frac{\eta}{4} \norm{\avgg^t}^2. \label{eq:f_1stepnew}
\end{equation}
\end{lemma}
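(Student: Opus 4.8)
The plan is to start from the descent inequality for the $L$-smooth function $f$ applied to the averaged iterates $\avgtheta^t, \avgtheta^{t+1}$, and then to substitute the recursion for $\avgtheta^{t+1}$. First I would establish the key identity for the average update: summing \eqref{eq:docom_a} over $i$ and dividing by $n$, the consensus correction terms $\gamma \sum_{j} W_{ij}(\hatprm_j^{t+1} - \hatprm_i^{t+1})$ cancel because $\W$ is doubly stochastic (\Cref{ass:mix}), so $\avgtheta^{t+1} = \avgtheta^t - \eta \avgg^t$. Then $L$-smoothness gives
\begin{align*}
f(\avgtheta^{t+1}) \leq f(\avgtheta^t) - \eta \dotp{\nabla f(\avgtheta^t)}{\avgg^t} + \frac{L \eta^2}{2}\norm{\avgg^t}^2.
\end{align*}

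Next I would handle the inner product $-\eta\dotp{\nabla f(\avgtheta^t)}{\avgg^t}$. The idea is to split $\avgg^t = \avggrdF^t + (\avgg^t - \avggrdF^t)$ and further relate $\avggrdF^t = n^{-1}\sum_i \nabla f_i(\theta_i^t)$ to $\nabla f(\avgtheta^t) = n^{-1}\sum_i \nabla f_i(\avgtheta^t)$. Using the elementary identity $-\dotp{a}{b} = \frac12(\norm{a-b}^2 - \norm{a}^2 - \norm{b}^2)$ with $a = \nabla f(\avgtheta^t)$ and $b = \avgg^t$, or alternatively the polarization $-\dotp{a}{b} \le -\frac12\norm{a}^2 + \frac12\norm{a-b}^2$, I get a $-\frac{\eta}{2}\norm{\nabla f(\avgtheta^t)}^2$ term plus $\frac{\eta}{2}\norm{\nabla f(\avgtheta^t) - \avgg^t}^2$. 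The latter I would bound by $2\cdot\frac{\eta}{2}(\norm{\nabla f(\avgtheta^t) - \avggrdF^t}^2 + \norm{\avggrdF^t - \avgg^t}^2)$; the first piece is controlled by Jensen and \Cref{ass:lips} (Lipschitz gradients), giving $\le \frac{L^2}{n}\norm{\Theta_o^t}_F^2$ since $n^{-1}\sum_i\norm{\theta_i^t - \avgtheta^t}^2 = \frac1n\norm{\Theta_o^t}_F^2$, and the second piece is exactly $\norm{\avgv^t - \avggrdF^t}^2$ after noting the tracking property $\avgg^t = \avgv^t$ (which follows by summing \eqref{eq:docom_d}: the $W$-terms cancel and $\avgg^{t+1} - \avgg^t = \avgv^{t+1} - \avgv^t$, with the initialization $\avgg^0 = \avgv^0$ in \Cref{alg:docom}).

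After these substitutions I would be left with $f(\avgtheta^{t+1}) \le f(\avgtheta^t) - \frac{\eta}{2}\norm{\nabla f(\avgtheta^t)}^2 + \frac{L^2\eta}{n}\norm{\Theta_o^t}_F^2 + \eta\norm{\avgv^t - \avggrdF^t}^2 + \frac{L\eta^2}{2}\norm{\avgg^t}^2$. To convert the last term into the stated $-\frac{\eta}{4}\norm{\avgg^t}^2$, I would need an extra $-\frac{\eta}{2}\norm{\avgg^t}^2$ to have come out of the inner-product step; indeed, choosing the split more carefully — e.g. writing $-\eta\dotp{\nabla f(\avgtheta^t)}{\avgg^t} = -\frac{\eta}{2}\norm{\nabla f(\avgtheta^t)}^2 - \frac{\eta}{2}\norm{\avgg^t}^2 + \frac{\eta}{2}\norm{\nabla f(\avgtheta^t) - \avgg^t}^2$ — produces exactly a $-\frac{\eta}{2}\norm{\avgg^t}^2$, and then $-\frac{\eta}{2}\norm{\avgg^t}^2 + \frac{L\eta^2}{2}\norm{\avgg^t}^2 \le -\frac{\eta}{4}\norm{\avgg^t}^2$ precisely under $\eta \le \frac{1}{2L}$. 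Assembling everything yields \eqref{eq:f_1stepnew}.

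The routine parts are the two cancellation-by-double-stochasticity arguments and the Young/Jensen bookkeeping; the one step demanding slight care — the main (minor) obstacle — is getting the constants to line up so that the $\norm{\nabla f(\avgtheta^t) - \avgg^t}^2$ cross term splits cleanly into the consensus error term $\frac{L^2\eta}{n}\norm{\Theta_o^t}_F^2$ and the gradient-estimation error $\eta\norm{\avgv^t - \avggrdF^t}^2$ with coefficient exactly $1$, rather than a larger constant. This requires spending the factor-of-$2$ slack from the triangle inequality against the $\frac12$ from polarization, so that the bound on $\frac{\eta}{2}\norm{\nabla f(\avgtheta^t) - \avgg^t}^2$ lands at $\eta(\cdots)$ with the right split; tracking this carefully is the only nontrivial point.
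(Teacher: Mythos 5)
Your proposal is correct and follows essentially the same route as the paper's proof: the descent lemma on $\avgtheta^{t+1} = \avgtheta^t - \eta\avgg^t$, the polarization identity $-2\dotp{a}{b} = \norm{a-b}^2 - \norm{a}^2 - \norm{b}^2$, absorption of $\frac{L\eta^2}{2}\norm{\avgg^t}^2$ into $-\frac{\eta}{2}\norm{\avgg^t}^2$ under $\eta \le \frac{1}{2L}$, the triangle-inequality split of $\norm{\nabla f(\avgtheta^t)-\avgg^t}^2$ through $\avggrdF^t$, the Jensen/Lipschitz bound giving $\frac{L^2}{n}\norm{\Theta_o^t}_F^2$, and the observation $\avgg^t=\avgv^t$ from the tracking initialization. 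The one thing worth noting is that your initial description momentarily dropped the $-\frac{\eta}{2}\norm{\avgg^t}^2$ term from the polarization identity, but you caught and corrected this before assembling the final bound, so the argument as finally stated matches the paper's.
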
 
The proof is relegated to \Cref{app:f1step}. 
The above lemma utilizes just \Cref{ass:lips} and results in a deterministic bound of $f( \avgtheta^{t+1})$. 
We highlight that the bound contains a negative term of the stochastic gradient $- \frac{\eta}{4} \norm{\avgg^t}^2$, which is slightly different from the standard bound implied by the descent lemma. Such negative term is crucial for deriving the near-optimal sampling complexity of {\aname} in Theorem~\ref{th:main}, also see \citep{cutkosky2019momentum}.

\Cref{lem:f_1stepnew} shows that controlling $\norm{ \nabla f( \avgtheta^t ) }^2$ requires bounding $\norm{ \Theta_o^t }_F^2$ and $\|\avgv^t - \avggrdF^t \|^2$. While the latter are anticipated to converge to zero, we see that characterizing their convergence results in a set of \emph{coupled recursions} as follows:
\begin{lemma} \label{lem:theta_o_new} Under \Cref{ass:mix}, \ref{ass:compress}. Then, for any $t \geq 0$, it holds  \vspace{-.1cm}
    \begin{equation} \label{eq:theta_o_bd_lem}
    \mathbb{E} [\norm{\Theta_o^{t+1}}_F^2]  \leq (1-\frac{\rho\gamma}{2}) \mathbb{E} [\norm{\Theta_o^t}_F^2] + \frac{2}{\rho} \, \frac{\eta^2}{\gamma} \mathbb{E} [ \norm{G_o^t}_F^2 ] + \frac{\bw^2}{\rho} \, \gamma \, \mathbb{E} \left[ \norm{\Theta^t - \eta G^t - \hatTheta^t}^2_F \right]. 
    \end{equation}
\end{lemma}
\begin{lemma} \label{lem:vt_bound} Under \Cref{ass:lips}, \ref{ass:mix}, \ref{ass:stoc}, \ref{ass:compress} and let $\beta \in [0,1)$. Then, for any $t \geq 0$, it holds \vspace{-.1cm}
    \begin{align}
        & \mathbb{E} \left[ \norm{ \avgv^{t+1} - \avggrdF^{t+1} }^2 \right] \label{eq:vt_bound_lem} \\
        & \leq (1-\beta)^2 \mathbb{E} \left[ \norm{ \avgv^{t} - \avggrdF^{t} }^2 \right] + 2 \beta^2 \frac{\sigma^2}{n} + (1-\beta)^2 \frac{8 L^2}{n^2} \eta^2 (1-\rho \gamma)^2 \mathbb{E} \left[ \norm{G_o^t}_F^2 + \frac{n}{2} \norm{ \avgg^t }^2 \right] \notag \\
        &\quad + (1-\beta)^2 \frac{8 L^2}{n^2} \bw^2 \gamma^2 \, \mathbb{E} \left[ \frac{1-\delta}{2} \norm{\Theta^t - \eta G^t - \hatTheta^t}_F^2 + \norm{\Theta_o^t}_F^2 \right]. \notag
    \end{align}
\end{lemma}
 \vspace{-.2cm}
The proofs are in \Cref{app:theta_o_new}, \ref{app:vt_bound}. 
Notice that \eqref{eq:theta_o_bd_lem}, \eqref{eq:vt_bound_lem} further depend on the quantities $\mathbb{E}[ \norm{G_o^t}_F^2]$, $\mathbb{E} [ \| \Theta^t - \eta G^t - \hatTheta^t \|_F^2]$, $\mathbb{E} [ \| G^t - \hatG^t \|_F^2 ]$, which are handled by the following lemmas:

\begin{lemma}\label{lem:gt_new} Under \Cref{ass:lips}, \ref{ass:mix}, \ref{ass:stoc}, \ref{ass:compress} and the step size conditions $\eta \leq \frac{ \rho \gamma }{ 10 L (1-\rho\gamma) \sqrt{1 + \gamma^2 \bw^2} }$, $\gamma \leq \frac{1}{8 \bw}$. For any $t \geq 0$, it holds
    \begin{align}
    \mathbb{E} \left[ \norm{G_o^{t+1}}_F^2 \right] & \leq \left( 1 - \frac{ \rho \gamma }{4} \right) \mathbb{E} \left[ \norm{ G_o^t }_F^2 \right] + \gamma \frac{ 2 \bw^2}{ \rho } \mathbb{E} \left[ \norm{ G^t - \hatG^t }_F^2 \right] + \gamma \, \frac{25 L^2 \bw^2}{\rho} \, \mathbb{E} \left[ \norm{ \Theta_o^t}_F^2 \right] \\
    & \quad + \gamma \, \frac{13 L^2}{ \rho } \bw^2 (1 - \delta) \mathbb{E} \left[ \norm{ \Theta^t - \eta G^t - \hatTheta^t }_F^2 \right] \nl 
    & \quad + \gamma \, \frac{ \rho n}{5} \, \mathbb{E} \left[ \norm{ \avgg^t }^2 \right] + \frac{7}{ \rho \gamma } \beta^2 \mathbb{E} \left[ \norm{V^t - \grdF^t}_F^2 \right] + \frac{7n }{\rho \gamma} \beta^2 \sigma^2 . \notag
    \end{align}
\end{lemma}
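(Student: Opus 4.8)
The plan is to mimic the CHOCO-type consensus contraction argument, but applied to the \emph{gradient-tracking} variable $G^t$, and then to reduce everything to a bound on the one-step increment $\norm{V^{t+1}-V^t}_F^2$. Using the matrix form in \Cref{app:control_GoTheta}, introduce $R^t \eqdef G^t + V^{t+1} - V^t$, so that $G^{t+1} = R^t + \gamma(\W-\I)\hatG^{t+1}$ and $\hatG^{t+1} = \hatG^t + {\cal Q}(R^t - \hatG^t)$; since ${\bf 1}^\top(\W-\I)=0$ the average obeys $\avgg^{t+1} = \avgg^t + \avgv^{t+1}-\avgv^t$. First I would project onto the consensus subspace: $\norm{G_o^{t+1}}_F^2 = \norm{(\I-\avgone)G^{t+1}}_F^2$, and, using $(\W-\I)\avgone = 0$, rewrite
\[
 (\I-\avgone)G^{t+1} = \big[\I+\gamma(\W-\I)\big](\I-\avgone)R^t \;+\; \gamma(\W-\I)\big(\hatG^{t+1}-R^t\big).
\]

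Next I would apply the mixing bound $\norm{[\I+\gamma(\W-\I)](\I-\avgone)}\le 1-\rho\gamma$ (\Cref{ass:mix}.2--3) together with $\norm{\W-\I}\le\bw$ (\Cref{ass:mix}.4), then Young's inequality twice: once to split $(\I-\avgone)R^t = (\I-\avgone)G^t + (\I-\avgone)(V^{t+1}-V^t)$ and produce the contraction factor $1-\tfrac{\rho\gamma}{4}$ on $\mathbb{E}\norm{G_o^t}_F^2$ (absorbing the slack $\tfrac{3\rho\gamma}{4}$ into the error terms, which costs a factor $\BO(1/(\rho\gamma))$), and once to isolate $\hatG^{t+1}-R^t$. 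The compression inequality (\Cref{ass:compress}), conditioned on the history up to the point where $R^t$ is formed and then taking total expectation, gives $\mathbb{E}\norm{\hatG^{t+1}-R^t}_F^2 \le (1-\delta)\,\mathbb{E}\norm{R^t-\hatG^t}_F^2$; splitting $R^t - \hatG^t = (G^t-\hatG^t) + (V^{t+1}-V^t)$ with a \emph{weighted} Young's inequality (weight chosen so that $(1-\delta)$ times the first factor is $\le 1$) keeps the coefficient of $\mathbb{E}\norm{G^t-\hatG^t}_F^2$ at $\tfrac{2\bw^2\gamma}{\rho}$, at the price of a $\BO(1/\delta)$ multiplier on $\mathbb{E}\norm{V^{t+1}-V^t}_F^2$, which is harmless since the latter turns out to be of higher order in $\eta$ and $\beta$.

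It then remains to bound $\mathbb{E}\norm{V^{t+1}-V^t}_F^2$. From the momentum recursion, $V^{t+1}-V^t = \beta(\stocgrdF^{t+1}-V^t) + (1-\beta)(\stocgrdF^{t+1}-\stocgrdFp^t)$. The second term is handled by $L$-smoothness with the \emph{common} minibatch, $\norm{\stocgrdF^{t+1}-\stocgrdFp^t}_F^2 \le L^2\norm{\Theta^{t+1}-\Theta^t}_F^2$ (\Cref{ass:lips}). For the first term I would write $\stocgrdF^{t+1}-V^t = (\stocgrdF^{t+1}-\grdF^{t+1}) + (\grdF^{t+1}-\grdF^t) + (\grdF^t - V^t)$; the cross term involving the first piece vanishes in expectation since $\zeta_i^{t+1}$ is drawn after $\theta_i^{t+1}$ is fixed, and its square contributes at most $n\sigma^2$ (\Cref{ass:stoc}), the middle piece is again $\le L^2\norm{\Theta^{t+1}-\Theta^t}_F^2$, and the last piece is exactly $\norm{V^t-\grdF^t}_F^2$. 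Finally, $\mathbb{E}\norm{\Theta^{t+1}-\Theta^t}_F^2$ is bounded by the same projection trick: $\Theta^{t+1}-\Theta^t = -\eta\,{\bf 1}(\avgg^t)^\top - \eta(\I-\avgone)G^t + \gamma(\W-\I)\hatTheta^{t+1}$, where the average part gives $\eta^2 n\norm{\avgg^t}^2$, and for $\gamma(\W-\I)\hatTheta^{t+1}=\gamma(\W-\I)(\I-\avgone)\hatTheta^{t+1}$ I use $(\I-\avgone)\hatTheta^{t+1} = (\I-\avgone)(\Theta^t-\eta G^t) - (\I-\avgone)\big[(\Theta^t-\eta G^t-\hatTheta^t) - {\cal Q}(\Theta^t-\eta G^t-\hatTheta^t)\big]$ and \Cref{ass:compress} to obtain
\[
 \mathbb{E}\norm{\Theta^{t+1}-\Theta^t}_F^2 \;\lesssim\; \eta^2 n\norm{\avgg^t}^2 + \eta^2(1+\gamma^2\bw^2)\mathbb{E}\norm{G_o^t}_F^2 + \gamma^2\bw^2\,\mathbb{E}\norm{\Theta_o^t}_F^2 + \gamma^2\bw^2(1-\delta)\,\mathbb{E}\norm{\Theta^t-\eta G^t-\hatTheta^t}_F^2.
\]

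Substituting this back, the $\beta^2$ pieces produce the $\tfrac{7n}{\rho\gamma}\beta^2\sigma^2$ and $\tfrac{7}{\rho\gamma}\beta^2\mathbb{E}\norm{V^t-\grdF^t}_F^2$ terms, while the $L^2\norm{\Theta^{t+1}-\Theta^t}_F^2$ pieces (arising from both the $\beta$ and the $1-\beta$ parts of $V^{t+1}-V^t$) generate the $\norm{\avgg^t}^2$, $\norm{G_o^t}_F^2$, $\norm{\Theta_o^t}_F^2$ and $\norm{\Theta^t-\eta G^t-\hatTheta^t}_F^2$ contributions; the stated step-size conditions $\gamma\le\tfrac{1}{8\bw}$ and $\eta\le\tfrac{\rho\gamma}{10L(1-\rho\gamma)\sqrt{1+\gamma^2\bw^2}}$ are precisely what makes the $\eta^2 L^2\norm{G_o^t}_F^2$-type term fit inside the $-\tfrac{\rho\gamma}{4}\norm{G_o^t}_F^2$ slack and the $\tfrac{\eta^2 L^2 n}{\rho\gamma}\norm{\avgg^t}^2$ term dominated by $\tfrac{\rho\gamma n}{5}\norm{\avgg^t}^2$, after which collecting constants yields the claim. \textbf{Main obstacle.} The delicate part is the coupled bookkeeping between $\norm{\Theta^{t+1}-\Theta^t}_F^2$ and the various Young's-inequality weights: one must verify that the $\BO(1/\delta)$ blow-up from the compression splits only ever multiplies higher-order ($\eta^2$ or $\beta^2$) quantities, and that $\gamma\bw\le 1/8$ keeps all the $\gamma^2\bw^2$ cross terms subordinate, so that the final coefficients come out as the tight constants $\tfrac{2\bw^2\gamma}{\rho}$, $\tfrac{25L^2\bw^2\gamma}{\rho}$, $\tfrac{13L^2\bw^2\gamma(1-\delta)}{\rho}$, $\tfrac{\rho\gamma n}{5}$, $\tfrac{7\beta^2}{\rho\gamma}$, $\tfrac{7n\beta^2\sigma^2}{\rho\gamma}$ rather than something looser.
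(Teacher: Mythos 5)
Your architecture matches the paper's proof of this lemma: project the compressed‐consensus recursion onto the consensus subspace, use $\norm{\U^\top(\I+\gamma(\W-\I))\U}\le 1-\rho\gamma$ together with the compressor contraction, and reduce to the two helper bounds on $\mathbb{E}\norm{V^{t+1}-V^t}_F^2$ and $\mathbb{E}\norm{\Theta^{t+1}-\Theta^t}_F^2$ (the paper's \Cref{lem:vt_diffnew} and \Cref{lem:theta_1stepnew}). The one place you genuinely diverge is the $V$-increment bound. The paper writes $V^{t+1}-V^t=\stocgrdF^{t+1}-\stocgrdFp^t-\beta(V^t-\grdF^t)+\beta(\stocgrdFp^t-\grdF^t)$ and applies a plain three-term Young's, giving the factors $3L^2$, $3\beta^2$, $3n\beta^2$. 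You instead group as $\beta(\stocgrdF^{t+1}-V^t)+(1-\beta)(\stocgrdF^{t+1}-\stocgrdFp^t)$, split the first factor three ways, and invoke the conditional mean-zero property of the $\zeta^{t+1}$ noise given $\Theta^{t+1}$; this is perfectly valid (the sampling order in \Cref{alg:docom} indeed fixes $\Theta^{t+1}$ before drawing $\zeta^{t+1}$), but a careful constant count of your route gives $\bigl(4\beta^2+2(1-\beta)^2\bigr)L^2$, $4\beta^2$ and $2n\beta^2$, i.e.\ up to a $4/3$ blow-up on the $L^2\norm{\Theta^{t+1}-\Theta^t}_F^2$ and $\norm{V^t-\grdF^t}_F^2$ pieces. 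That blow-up does not come for free: absorbing the $\norm{G_o^t}_F^2$ slack into the $1-\tfrac{\rho\gamma}{4}$ contraction then demands $\eta\le\frac{\rho\gamma}{\sqrt{128}\,L(1-\rho\gamma)\sqrt{1+\gamma^2\bw^2}}$ with $\sqrt{128}\approx11.3$, strictly stronger than the stated $\eta\le\frac{\rho\gamma}{10L(1-\rho\gamma)\sqrt{1+\gamma^2\bw^2}}$ (the paper's $3L^2$ lands at $\sqrt{96}\approx9.8<10$), and the memory coefficient becomes $\approx\frac{8.1}{\rho\gamma}\beta^2$ instead of fitting inside the stated $\frac{7}{\rho\gamma}\beta^2$. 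So ``collecting constants yields the claim'' is not quite right under your decomposition and the stated step sizes — you would either need to retune the thresholds and accept slightly larger constants, or swap in the paper's cleaner three-term identity for the $V$-increment to hit the lemma's exact numbers.
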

\begin{lemma} \label{lem:thetahat_new} Under \Cref{ass:lips}, \ref{ass:mix}, \ref{ass:stoc}, \ref{ass:compress} and the step size conditions $\eta \leq \min \{ \frac{ \rho \gamma }{ 10 L (1-\rho\gamma) \sqrt{1 + \gamma^2 \bw^2} },  \frac{1}{4L} \}$,
$\gamma^2 \leq \min \{\frac{\delta}{ 16 \bw^2 (1-\delta) (1 + 3\eta^2 L^2) (1 + 2/\delta)}, \frac{1}{\rho^2}, \frac{\delta^2}{64\bw^2}\}$, $\eta^2 \gamma \leq \frac{\delta^2 \rho}{1248 \bw^2 L^2}$. For any $t \geq 0$, it holds
    \begin{align}
    \mathbb{E} \left[ \norm{\Theta^{t+1}- \eta G^{t+1}- \hatTheta^{t+1}}_F^2 \right] & \leq \left(1-\frac{\delta}{8} \right) \mathbb{E} \left[ \norm{\Theta^t - \eta G^t - \hatTheta^t}_F^2 \right] \\
        & \quad + \eta^2 \frac{50}{\delta} \mathbb{E} \left[ \norm{G_o^t}_F^2 \right] + \eta^2 \frac{3 \bw}{\rho} \mathbb{E} \left[ \norm{G^{t} - \hatG^t}_F^2 \right] \nl 
        & \quad + \left[ \frac{29}{\delta} \bw^2 \gamma^2 + \frac{38 L^2 \bw \eta^2}{\rho} \right] \mathbb{E} \left[ \norm{\Theta_o^t}_F^2 \right] + \frac{18 \eta^2}{\delta} \, n \, \mathbb{E}  \left[ \norm{ \avgg^t }^2 \right] \nl 
        & \quad + \left( 18 + \frac{84}{\rho\gamma} \right)\frac{ \beta^2\eta^2 }{\delta} \mathbb{E} \left[ \norm{ V^t - \grdF^t }_F^2 \right] + \left( 18 + \frac{84}{\rho \gamma} \right) \frac{\beta^2 \eta^2 n\sigma^2}{\delta} . \notag
    \end{align}
\end{lemma}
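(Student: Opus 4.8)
Write $E^{t}:=\Theta^{t}-\eta G^{t}-\hatTheta^{t}$, the matrix fed to the compressor in \eqref{eq:docom_b}, so the target is $\mathbb E[\norm{E^{t+1}}_F^2]$. The plan is to derive a one-step recursion for $E^t$. From the matrix-form updates, $\Theta^t-\eta G^t=\hatTheta^t+E^t$, hence $\hatTheta^{t+1}=\hatTheta^t+{\cal Q}(E^t)=(\Theta^t-\eta G^t)-(E^t-{\cal Q}(E^t))$, and substituting into $\Theta^{t+1}=(\Theta^t-\eta G^t)+\gamma(\W-\I)\hatTheta^{t+1}$ gives
\[
E^{t+1}=\big(\I-\gamma(\W-\I)\big)\big(E^t-{\cal Q}(E^t)\big)+\gamma(\W-\I)(\Theta^t-\eta G^t)-\eta G^{t+1}.
\]
First I would apply Young's inequality with a parameter $\alpha=\Theta(\delta)$ (say $\alpha=\delta/4$) to split off the contracting term from the residual $R^t:=\gamma(\W-\I)(\Theta^t-\eta G^t)-\eta G^{t+1}$. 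Using $\norm{\I-\gamma(\W-\I)}\le1+\gamma\bw$, the bound $\gamma\bw\le\delta/8$ (implied by the $\gamma^2\le\delta^2/(64\bw^2)$ part of the hypothesis), and \Cref{ass:compress} applied conditionally to the step-$t$ compression (so, by the tower property, $\mathbb E[\norm{E^t-{\cal Q}(E^t)}_F^2]\le(1-\delta)\mathbb E[\norm{E^t}_F^2]$), the contracting term contributes at most $(1+\alpha)(1+\gamma\bw)^2(1-\delta)\mathbb E[\norm{E^t}_F^2]\le(1-\tfrac{\delta}{3})\mathbb E[\norm{E^t}_F^2]$, leaving slack $\tfrac\delta3-\tfrac\delta8$ for later feedback, while the residual is weighted by $1+\alpha^{-1}=\BO(\delta^{-1})$.

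The bulk of the work is bounding $\mathbb E[\norm{R^t}_F^2]$, which I would do term by term so everything cascades down to the six quantities on the right-hand side. Since $(\W-\I)$ annihilates $\mathbf 1$, $\gamma(\W-\I)(\Theta^t-\eta G^t)=\gamma(\W-\I)\U\U^\top(\Theta^t-\eta G^t)$, bounded by $\gamma^2\bw^2$ times a convex combination of $\norm{\Theta_o^t}_F^2$ and $\eta^2\norm{G_o^t}_F^2$. For $\eta G^{t+1}$ I would expand $G^{t+1}=G^t+(V^{t+1}-V^t)+\gamma(\W-\I)\hatG^{t+1}$ and bound the three pieces: $\norm{G^t}_F^2=n\norm{\avgg^t}^2+\norm{G_o^t}_F^2$; $\mathbb E[\norm{V^{t+1}-V^t}_F^2]$ via \eqref{eq:docom_c}, writing $V^{t+1}-V^t=\beta(\stocgrdF^{t+1}-V^t)+(1-\beta)(\stocgrdF^{t+1}-\stocgrdFp^t)$, then using the (relaxed) Lipschitz bound $\mathbb E[\norm{\stocgrdF^{t+1}-\stocgrdFp^t}_F^2]\le L^2\mathbb E[\norm{\Theta^{t+1}-\Theta^t}_F^2]$ on the second piece and splitting the first into a noise part ($\le n\sigma^2$ by \Cref{ass:stoc}), a drift part $L^2\mathbb E[\norm{\Theta^{t+1}-\Theta^t}_F^2]$, and $\norm{V^t-\grdF^t}_F^2$; and $\gamma^2\norm{(\W-\I)\hatG^{t+1}}_F^2\le\gamma^2\bw^2\norm{\U\U^\top\hatG^{t+1}}_F^2$. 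The consensus errors $\norm{\U\U^\top\hatTheta^{t+1}}_F^2$ and $\norm{\U\U^\top\hatG^{t+1}}_F^2$ I would re-expand through their compressor updates; this re-introduces $\norm{E^t}_F^2$, $\norm{G^t-\hatG^t}_F^2$, $\norm{\Theta_o^t}_F^2$, $\norm{G_o^t}_F^2$, $\norm{V^t-\grdF^t}_F^2$ and $n\sigma^2$, each multiplied by $\gamma^2\bw^2$ (and, where it passes through $\eta G^{t+1}$, a further $\eta^2$). The displacement bound $\norm{\Theta^{t+1}-\Theta^t}_F^2\le2\eta^2\norm{G^t}_F^2+2\gamma^2\bw^2\norm{\U\U^\top\hatTheta^{t+1}}_F^2$ closes the remaining loop.

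Collecting terms and multiplying the residual by the $\BO(\delta^{-1})$ Young weight, the net coefficient of $\norm{E^t}_F^2$ is $(1-\tfrac\delta3)+\BO\big(\gamma^2\bw^2(1+\eta^2L^2)\delta^{-1}(1-\delta)^{-1}\big)$, and the hypothesis $\gamma^2\le\delta/\big(16\bw^2(1-\delta)(1+3\eta^2L^2)(1+2/\delta)\big)$ is exactly what forces this below $1-\tfrac\delta8$; the coefficient of $\norm{G^t-\hatG^t}_F^2$ emerges as $\BO(\eta^2\gamma^2\bw^2\delta^{-1})$, which $\eta^2\gamma\le\delta^2\rho/(1248\bw^2L^2)$ pushes below $3\bw\eta^2/\rho$; and the $\eta^2$, $\beta^2$ prefactors on the remaining contributions, together with one further $\rho\gamma$-weighted Young split (the source of the $84/(\rho\gamma)$, akin to the step in \Cref{lem:gt_new}), produce the stated $\tfrac{50\eta^2}\delta$, $\tfrac{29}\delta\bw^2\gamma^2+\tfrac{38L^2\bw\eta^2}\rho$, $\tfrac{18\eta^2}\delta n$, $(18+\tfrac{84}{\rho\gamma})\tfrac{\beta^2\eta^2}\delta$ and $(18+\tfrac{84}{\rho\gamma})\tfrac{\beta^2\eta^2 n\sigma^2}\delta$ coefficients. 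The remaining step size conditions $\eta\le1/(4L)$ and $\eta\le\rho\gamma/\big(10L(1-\rho\gamma)\sqrt{1+\gamma^2\bw^2}\big)$ are reused to keep the $\norm{\Theta^{t+1}-\Theta^t}_F^2$-type drifts from inflating these constants. The only real obstacle I anticipate is bookkeeping: the handful of split parameters (the global $\alpha$ and one $\varepsilon_k$ per use of $\norm{a+b}^2\le(1+\varepsilon_k)\norm{a}^2+(1+\varepsilon_k^{-1})\norm{b}^2$) must be tuned so that all seven target coefficients land at or below their claimed values simultaneously; there is no conceptual difficulty beyond this constant-chasing, and the step size hypotheses are precisely the ones that make it close.
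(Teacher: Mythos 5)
Your overall strategy is the right one and mirrors the paper's: derive a one-step recursion for $E^t := \Theta^t - \eta G^t - \hatTheta^t$, split off the compression-error contraction with a Young inequality, then bound the residual by cascading down to $\norm{G_o^t}_F^2$, $\norm{\Theta_o^t}_F^2$, $\norm{G^t - \hatG^t}_F^2$, $\norm{V^t - \grdF^t}_F^2$, $\norm{\avgg^t}^2$, with one further recursion of Lemma~\ref{lem:gt_new} type to produce the $84/(\rho\gamma)$.

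The one place I would push back is the particular rearrangement. You derived
$E^{t+1} = (\I - \gamma(\W - \I))(E^t - {\cal Q}(E^t)) + \gamma(\W - \I)(\Theta^t - \eta G^t) - \eta G^{t+1}$,
which is algebraically correct, but the paper instead groups as
$E^{t+1} = \big(\Theta^{t+1} - \Theta^t\big) - \eta\big(G^{t+1} - G^t\big) + \big(E^t - {\cal Q}(E^t)\big)$.
The paper's form leaves $E^t - {\cal Q}(E^t)$ unmultiplied by any operator, so Young with $\alpha = \delta/2$ gives the clean contraction $(1+\delta/2)(1-\delta) \le 1 - \delta/2$ and a residual weight of $1 + 2/\delta$. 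Your form attaches $(\I - \gamma(\W - \I))$ to the contracting term, which costs the $(1+\gamma\bw)^2$ factor you correctly identify; to absorb it you shrink $\alpha$ to $\delta/4$, but then the residual weight doubles to $1 + 4/\delta$. Since the hypothesis $\gamma^2 \le \delta / \bigl(16\bw^2(1-\delta)(1+3\eta^2 L^2)(1 + 2/\delta)\bigr)$ is calibrated precisely for a $1 + 2/\delta$ Young weight (it makes the residual's $E^t$-feedback come out to $\delta/4$), the doubled weight would overshoot the $\delta$-budget ($\delta/2$-contraction minus $\approx \delta/2$-feedback leaves no room for the extra $\delta/8$ re-injected through $G_o^{t+1}$, and you would not land at $1 - \delta/8$). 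So "the step size hypotheses are precisely the ones that make it close" holds for the paper's grouping, not yours; your route would either need tighter $\gamma^2$, $\eta^2\gamma$ conditions, or — simpler — you should rewrite $E^{t+1}$ in the paper's form (subtracting and adding $\Theta^t - \eta G^t$ rather than factoring out $\I - \gamma(\W - \I)$). The rest of your cascade (bounding $\eta^2\norm{G^{t+1}}_F^2$ via $G^t + (V^{t+1}-V^t) + \gamma(\W-\I)\hatG^{t+1}$ rather than $\eta^2\norm{G^{t+1} - G^t}_F^2$ via ${\bf 1}(\avgg^{t+1} - \avgg^t)^\top + \U(G_o^{t+1} - G_o^t)$ as the paper does; re-expanding $\hatG^{t+1}$, $\hatTheta^{t+1}$ through their compressor updates rather than quoting Lemmas~\ref{lem:theta_1stepnew} and \ref{lem:vt_diffnew}) reaches the same quantities and is fine modulo bookkeeping.
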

\begin{lemma} \label{lem:gtghat_new} Under \Cref{ass:lips}, \ref{ass:mix}, \ref{ass:stoc}, \ref{ass:compress} and the step size conditions $\gamma \leq \frac{\delta}{8 \bw}$, $\eta \leq \frac{ \rho \gamma }{ 10 L (1-\rho\gamma) \sqrt{1 + \gamma^2 \bw^2} }$. For any $t \geq 0$, it holds
    \begin{align}
    \mathbb{E} \left[ \norm{ G^{t+1} - \hatG^{t+1} }_F^2 \right] & \leq \left(1 - \frac{\delta}{8} \right) \mathbb{E} \left[ \norm{ G^{t} - \hatG^{t} }_F^2 \right] + \frac{10}{\delta} \gamma^2 \left( \bw^2 + \frac{\rho^2 }{8} \right) \mathbb{E} \left[ \norm{G_o^t}_F^2 \right] \\
    & \quad + \gamma^2 \, \frac{122L^2\bw^2}{\delta} \mathbb{E} \left[ \norm{\Theta_o^t}_F^2 \right] + \gamma^2 \, \frac{60 L^2 \bw^2}{\delta} \mathbb{E} \left[ \norm{ \Theta^t - \eta G^t - \hatTheta^t }_F^2 \right] \nl
    & \quad + \gamma^2 \, \frac{3 \rho^2}{5 \delta} n \mathbb{E} \left[ \norm{ \avgg^t }^2 \right]  + \frac{31}{\delta} \beta^2 \mathbb{E} \left[ \norm{ V^t - \grdF^t }_F^2 \right] + \frac{31}{\delta} \beta^2 n \sigma^2. \notag
    \end{align}
\end{lemma}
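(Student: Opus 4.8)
The plan is to expand $G^{t+1}-\hatG^{t+1}$ through the two $G$-updates in \eqref{eq:algo} and peel off the compression contraction factor, the residual mixing term, and the momentum drift, reducing every quantity to the $t$-indexed terms on the right-hand side. Write $G^{t+\half} := G^t + V^{t+1} - V^t$, so that $\hatG^{t+1} = \hatG^t + {\cal Q}(G^{t+\half} - \hatG^t)$ and $G^{t+1} = G^{t+\half} + \gamma(\W - \I)\hatG^{t+1}$; then
\[
G^{t+1} - \hatG^{t+1} = \underbrace{\big[ (G^{t+\half} - \hatG^t) - {\cal Q}(G^{t+\half}-\hatG^t) \big]}_{=:\,A} \;+\; \underbrace{\gamma (\W - \I)\hatG^{t+1}}_{=:\,B}.
\]
I would bound $\mathbb{E}\|A+B\|_F^2$ by Young's inequality with a parameter of order $\delta$; taking the conditional expectation over the step-$t$ compression of $G$ and applying \Cref{ass:compress} replaces $\mathbb{E}\|A\|_F^2$ by $(1-\delta)\|G^{t+\half}-\hatG^t\|_F^2$, and a further order-$\delta$ Young split $\|G^{t+\half}-\hatG^t\|_F^2 \le (1+\BO(\delta))\|G^t-\hatG^t\|_F^2 + (1+\BO(\delta^{-1}))\|V^{t+1}-V^t\|_F^2$ produces a leading factor below $1-\delta/4$ on $\|G^t-\hatG^t\|_F^2$, a drift term of order $\delta^{-1}\|V^{t+1}-V^t\|_F^2$, and the mixing term.

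For $B$, I would exploit $(\W-\I){\bf 1}={\bf 0}$ together with $\hatG^{t+1} = G^{t+\half} - A$ to write $(\W-\I)\hatG^{t+1} = (\W-\I)G^{t+\half} - (\W-\I)A$; then $\|(\W-\I)G^{t+\half}\|_F \le \bw\|\U^\top G^{t+\half}\|_F \le \bw\big(\|G_o^t\|_F + \|V^{t+1}-V^t\|_F\big)$, while $\mathbb{E}\|(\W-\I)A\|_F^2 \le \bw^2(1-\delta)\|G^{t+\half}-\hatG^t\|_F^2$. This contributes a term of order $\gamma^2\bw^2/\delta$ times $\big(\|G_o^t\|_F^2 + \|G^t-\hatG^t\|_F^2 + \|V^{t+1}-V^t\|_F^2\big)$; the $\|G^t-\hatG^t\|_F^2$ part is absorbed into the leading factor, pushing it down to $1-\delta/8$, which is exactly where the restriction $\gamma \le \delta/(8\bw)$ is used, and the $\|G_o^t\|_F^2$ piece is the bare $\gamma^2\bw^2$-type contribution appearing in the statement.

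It remains to expand the drift $\|V^{t+1}-V^t\|_F^2$. From $V^{t+1}-V^t = \beta(\stocgrdF^{t+1}-V^t) + (1-\beta)(\stocgrdF^{t+1}-\stocgrdFp^t)$, I would bound $\|\stocgrdF^{t+1}-\stocgrdFp^t\|_F \le L\|\Theta^{t+1}-\Theta^t\|_F$ by \Cref{ass:lips} (the two stochastic gradients use the same batch $\zeta^{t+1}$), and split $\|\stocgrdF^{t+1}-V^t\|_F^2$ by inserting $\grdF^{t+1}$ and $\grdF^t$ so that, after taking expectation over $\zeta^{t+1}$ via \Cref{ass:stoc}, $\mathbb{E}\|\stocgrdF^{t+1}-V^t\|_F^2 \lesssim n\sigma^2 + L^2\,\mathbb{E}\|\Theta^{t+1}-\Theta^t\|_F^2 + \|V^t-\grdF^t\|_F^2$. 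Finally $\|\Theta^{t+1}-\Theta^t\|_F^2 \le 2\eta^2\|G^t\|_F^2 + 2\gamma^2\|(\W-\I)\hatTheta^{t+1}\|_F^2$ with $\|G^t\|_F^2 = \|G_o^t\|_F^2 + n\|\avgg^t\|^2$, and the nested mixing term is reduced exactly as $B$ above, via $\hatTheta^{t+1} = (\Theta^t-\eta G^t) - \big[(\Theta^t-\eta G^t-\hatTheta^t) - {\cal Q}(\Theta^t-\eta G^t-\hatTheta^t)\big]$, which produces contributions in $\|\Theta_o^t\|_F^2$, $\eta^2\|G_o^t\|_F^2$, and $(1-\delta)\|\Theta^t-\eta G^t-\hatTheta^t\|_F^2$. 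Collecting terms, the $\beta^2$-weighted pieces give the $\sigma^2$ and $\|V^t-\grdF^t\|_F^2$ entries of the statement, while every $\eta^2 L^2$-weighted cross term is dominated by $\rho^2\gamma^2$ (hence absorbed into the $\gamma^2\rho^2$-type coefficients) thanks to $\eta \le \rho\gamma/(10 L (1-\rho\gamma)\sqrt{1+\gamma^2\bw^2})$.

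I expect the main obstacle to be the bookkeeping of the Young parameters and step-size thresholds so that the leading factor lands at exactly $1-\delta/8$ rather than becoming vacuous, while every auxiliary cross term — in particular those routed through $\|\Theta^{t+1}-\Theta^t\|_F^2$, which itself carries a compressed-consensus term — is genuinely absorbed into the six terms of the statement with no $t{+}1$-indexed quantity left over. A secondary subtlety is the ordering of the conditional expectations: the step-$t$ compression of $\Theta$, then the step-$t{+}1$ stochastic gradients, then the step-$t$ compression of $G$ must be peeled in that order so that \Cref{ass:compress} and \Cref{ass:stoc} each apply to the appropriate $\sigma$-field.
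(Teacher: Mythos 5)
Your overall route — peel the $G$-compression contraction, separate out the mixing term, reduce the residual to $\|V^{t+1}-V^t\|_F^2$, then expand that to $\|\Theta^{t+1}-\Theta^t\|_F^2$ plus $\beta^2$-weighted pieces, and finally absorb the $\eta^2L^2$-weighted quantities via the step-size restriction — is exactly the structure of the paper's proof (which combines the paper's Lemmas~1.6 and~1.7 for $\|V^{t+1}-V^t\|_F^2$ and $\|\Theta^{t+1}-\Theta^t\|_F^2$, respectively). Two remarks on the points where you deviate.

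First, your split of $G^{t+1}-\hatG^{t+1}$ into $A$ and $B=\gamma(\W-\I)\hatG^{t+1}$ followed by the expansion $B=\gamma(\W-\I)G^{t+\half}-\gamma(\W-\I)A$ is algebraically the same as the paper's identity
\[
G^{t+1}-\hatG^{t+1}\;=\;(I-\gamma(\W-\I))A\;+\;\gamma(\W-\I)G^{t+\half},
\]
but the paper first collects the two $A$-contributions into a single operator $(I-\gamma(\W-\I))$ (norm $\leq 1+\gamma\bw$) before applying Young's inequality, whereas you apply Young's inequality first to $A$ and $B$ and only then split $B$. Both lead to a $1-\delta/8$ contraction under $\gamma\leq\delta/(8\bw)$, so this is cosmetic, but the paper's order is a little tighter and easier to bookkeep.

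Second, and more substantively, your decomposition $V^{t+1}-V^t=\beta(\stocgrdF^{t+1}-V^t)+(1-\beta)(\stocgrdF^{t+1}-\stocgrdFp^t)$ is not the one the paper uses and yields constants about a factor of two too large. The paper's Lemma~1.6 uses
\[
V^{t+1}-V^t = (\stocgrdF^{t+1}-\stocgrdFp^t) - \beta(V^t-\grdF^t) + \beta(\stocgrdFp^t-\grdF^t),
\]
a clean three-term split giving directly $\mathbb{E}\|V^{t+1}-V^t\|_F^2 \le 3L^2\mathbb{E}\|\Theta^{t+1}-\Theta^t\|_F^2 + 3\beta^2\mathbb{E}\|V^t-\grdF^t\|_F^2 + 3n\beta^2\sigma^2$. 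Your version nests a two-term Young split on top of a three-term split inside $\|\stocgrdF^{t+1}-V^t\|_F^2$, producing roughly $6\beta^2$ on the $\sigma^2$ and $\|V^t-\grdF^t\|_F^2$ terms and up to $(6\beta^2+2(1-\beta)^2)L^2$ on $\|\Theta^{t+1}-\Theta^t\|_F^2$. Propagated through the $\tfrac{10(1+\gamma^2\bw^2)}{\delta}$ prefactor this would give something like $\tfrac{62}{\delta}\beta^2$ in place of the stated $\tfrac{31}{\delta}\beta^2$, and correspondingly roughly double the $L^2\bw^2/\delta$ constants. So while the strategy and all the asymptotics are right, carrying out your plan literally would not reproduce the exact numerical constants in the lemma; to do so you should switch to the paper's $V^{t+1}-V^t$ identity before applying Young's inequality.
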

The proofs are in \Cref{app:gt_new}, \ref{app:thetahat_new}, \ref{app:gtghat_new}. 

\paragraph{Potential Function} As we are equipped with the above lemmas, showing the convergence of {\aname} requires tracking the error quantities in a unified fashion. This may not be obvious at the first glance due to the coupling between error quantities illustrated in the lemmas. Naturally, one can proceed by defining the sequence of potential function values: 
\begin{align}
 \ER^{t} & = \mathbb{E} \left[ L^2 \norm{\Theta_o^{t}}_F^2 + n \norm{ \avgv^{t} - \avggrdF^{t} }^2 + {\textstyle \frac{1}{n}} \norm{V^{t} - \grdF^{t} }_F^2 \right] \notag \\
    & \quad + \mathbb{E} \left[ a \norm{G_o^{t}}_F^2 + b \norm{ G^{t} - \hatG^{t} }_F^2 + c \norm{ \Theta^{t} - \eta G^{t} - \hatTheta^{t} }_F^2 \right] \label{eq:Vt_design} 
\end{align}
that comprises of the coupled error quantities, where $a,b,c>0$ are constants to be determined. Our plan is then to study the convergence of $\ER^{t}$. 
However, fully specifying the potential function and to ensure the \emph{near-optimal sample complexity} for {\aname} require finding the tight conditions on $a,b,c>0$, together with the step size conditions, which is not trivial as it requires approximately solving a $5 \times 5$ system of (nonlinear) inequalities; see \eqref{eq:big_inequality_sys} in the appendix. 

In \Cref{app:wholesys}, we provide a systematic construction for finding the parameters of the tight potential function, which results in the following lemma:
\begin{lemma} \label{lem:wholesys}
    Under \Cref{ass:lips}, \ref{ass:mix}, \ref{ass:stoc}, \ref{ass:compress} and let $\beta \in (0,1)$. 
    Suppose that the step sizes satisfy: 
    \small \begin{align}
        \gamma & \leq \min \left\{ \frac{1}{4 \rho}, \frac{\rho n}{64 \bw^2}, \frac{ \delta}{10 \bw}, \frac{\delta \rho {\redtmp \sqrt{1-\delta/(8 \bw^2)}}}{259 \bw^2} \right\} =: \gamma_\infty , \notag \\
        \eta & \leq \frac{\gamma}{L}  \min \bigg\{
            \sqrt{\frac{1-\beta}{\beta n}}\frac{\sqrt{ \gamma \rho^3} }{45} , \frac{\rho^2}{240 \bw }
        \bigg\} =: \eta_\infty . \label{eq:stepsize_whole}
    \end{align} 
    \normalsize Set the parameters in the potential function $\ER^t$ such that
    \small \begin{equation}
    a = \frac{96L^2}{\rho^2\gamma^2}\eta^2
    , ~~b = \frac{\eta^2}{\gamma(1-\gamma)}\frac{3072\bw^2 L^2}{\delta\rho^3}
    , ~~c = \frac{\gamma}{1-\gamma}\frac{48 L^2\bw^2}{\delta\rho}.
    \end{equation}
    \normalsize Then, for any $t \geq 0$, it holds 
    \begin{align} \label{eq:erbound}
    \hspace{-.1cm} \ER^{t+1} \leq ( 1- \brho ) \ER^t + \beta^2 \ConstS \sigma^2 + \eta^2 \ConstG \mathbb{E} \left[ \norm{\avgg^t}^2 \right],
    \end{align}
    where 
    $\ConstS, \ConstG, \brho$ were defined in \Cref{th:main}.
\end{lemma}
The above lemma shows that the potential function $\ER^t$ is connected to the noise variance $\sigma^2$ and the gradient norm $\norm{\avgg^t}^2$. The convergence of the latter term is of interest to our theorem. We observe the following consequence.

Equipped with \eqref{eq:erbound} and define $\Delta^t := \mathbb{E} [ f( \avgtheta^t )] - f^\star$. From \Cref{lem:f_1stepnew}, we can deduce that \vspace{-.1cm}
\begin{align}
    \Delta^{t+1} + \frac{2\eta}{n \brho} \ER^{t+1} &\leq \Delta^{t} + \frac{2\eta}{n \brho} \ER^{t} + \frac{2 \eta}{n \brho} \beta^2 \ConstS \sigma^2 - \eta \, \mathbb{E} \left[ \frac{1}{2} \norm{\nabla f(\avgtheta^t)}^2 + \frac{L^2}{n} \norm{\Theta_o^t}_F^2 \right]\label{eq:sumup} \\
    & \quad+ \left( ({ n \brho })^{-1} {2 \eta^3} \ConstG - 4^{-1} {\eta}  \right) \mathbb{E} \left[ \norm{\avgg^t}^2 \right]. \nonumber 
\end{align}
Setting $\eta \leq \sqrt{ \frac{\brho n}{ 8 \ConstG } }$ as in \eqref{eq:docom_stepsize} shows that the last term in the r.h.s.~of the above can be upper bounded by zero. Summing up both sides of \eqref{eq:sumup} from $t=0$ to $t=T-1$ yields \vspace{-.1cm}
\begin{equation}
    \eta \sum_{t=0}^{T-1} \mathbb{E} \left[ \frac{1}{2} \norm{\nabla f(\avgtheta^t)}^2 + \frac{L^2}{n} \norm{\Theta_o^t}_F^2 
    \right] \leq \Delta^0 + \frac{2\eta}{n \brho} \ER^{0} + \frac{2 \eta T}{n \brho} \beta^2 \ConstS \sigma^2 .\label{eq:fin_main_bd} 
\end{equation}
Furthermore, with the initialization, choice of $a,b,c$ and the step size $\gamma \leq \gamma_\infty$, it can be shown that \vspace{-.1cm}
\begin{align} \label{eq:er0}
    \ER^0 \leq \frac{2 \sigma^2}{b_0} + \frac{{\redtmp 118} L^2 n}{{ \rho^2 \gamma^2 (1-\gamma)}} \InitG \eta^2.
\end{align}
Dividing \eqref{eq:fin_main_bd} by $\eta T$ and observing $\norm{\Theta_o^t}_F^2 = \norm{ ( \I - (1/n) {\bf 1} {\bf 1}^\top ) \Theta^t }_F^2$ concludes the proof.

\paragraph{Proof of \Cref{cor:pl}} Applying the PL condition of \Cref{ass:pl} to the inequality \eqref{eq:f_1stepnew} shows \vspace{-.1cm}
\begin{align}
    \Delta^{t+1} & \leq (1 - \eta \mu) \Delta^t + \eta \mathbb{E} \left[ \frac{L^2}{n} \norm{\Theta_o^t}_F^2 + \norm{ \avgv^t - \avggrdF^t}^2  \right] - \frac{\eta}{4} \mathbb{E} \left[ \norm{\avgg^t}^2 \right] \notag \\
    & \leq (1 - \eta \mu) \Delta^t + \frac{\eta}{n} \ER^t - \frac{\eta}{4} \mathbb{E} \left[ \norm{\avgg^t}^2 \right]  
\end{align}
Combining with \Cref{lem:wholesys} shows that  \vspace{-.1cm}
\begin{align} 
     \Delta^{t+1} + \frac{2 \eta}{\brho n} \ER^{t+1} & \leq \left( 1 - \bmu \right) \left[ \Delta^t + \frac{2 \eta}{\brho n} \ER^t \right] + \frac{\eta \beta^2}{\brho n} 2 \ConstS \sigma^2 + \left( \frac{\eta^3}{\brho n} 2\ConstG  - \frac{\eta}{4} \right) \mathbb{E} \left[ \norm{\avgg^t}^2 \right],
\end{align}
where we used $1 - \brho + \frac{\brho n}{2 \eta} \frac{\eta}{n} \leq 1 - \min\{ \eta \mu, \brho / 2 \}$. 
Set $\eta^2 \leq \frac{\brho n}{4 \ConstG}$ and telescope the relation concludes the proof.

\section{Numerical Experiments} 

\paragraph{Setup}
We run the decentralized optimization algorithms on a 40 threads Intel(R) Xeon(R) Gold 6148 CPU @ 2.40GHz server with MPI-enabled PyTorch and evaluate the performance of trained models on a Tesla K80 GPU server.
To simulate heterogeneous data distribution, each agent has a disjoint set of training samples, while we evaluate each trained model on all training/testing data. 

\paragraph{Hyperparameter Tuning} 
For all algorithms we choose the learning rate $\eta$ from $\{0.1, 0.01, 0.001\}$, and fix the regularization parameter as $\lambda = 10^{-4}$ [cf.~\eqref{eq:syn_obj}]. For compressed algorithms, we implement the top-$k$ compressor and random quantizer, and we tune the consensus step size $\gamma$ starting from the theoretical value of $\delta$.
For {\tt DeTAG}, we adopt the parameters from \citep{pmlr-v139-lu21a}. For \aname~and {\tt GT-HSGD}, we choose the best momentum parameter $\beta$ in $\{0.0001, 0.001, 0.01, 0.1, 0.5, 0.9\}$ and fix the initial batch number as $b_{0,i} = m_i$. We choose the batch sizes such that all algorithms spend the same amount of computation on stochastic gradient per iteration, except for {\tt BEER} which requires large batch size according to \citep{zhao2022beer}. The tuned parameters and additional numerical results can be found in \Cref{app:more_plots}. 

\begin{figure*}[hbtp]
    \centering \vspace{-0.15cm}
    \includegraphics[width=0.95\textwidth]{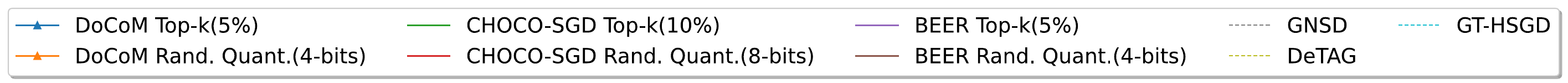}\\
    \includegraphics[width=0.238\textwidth]{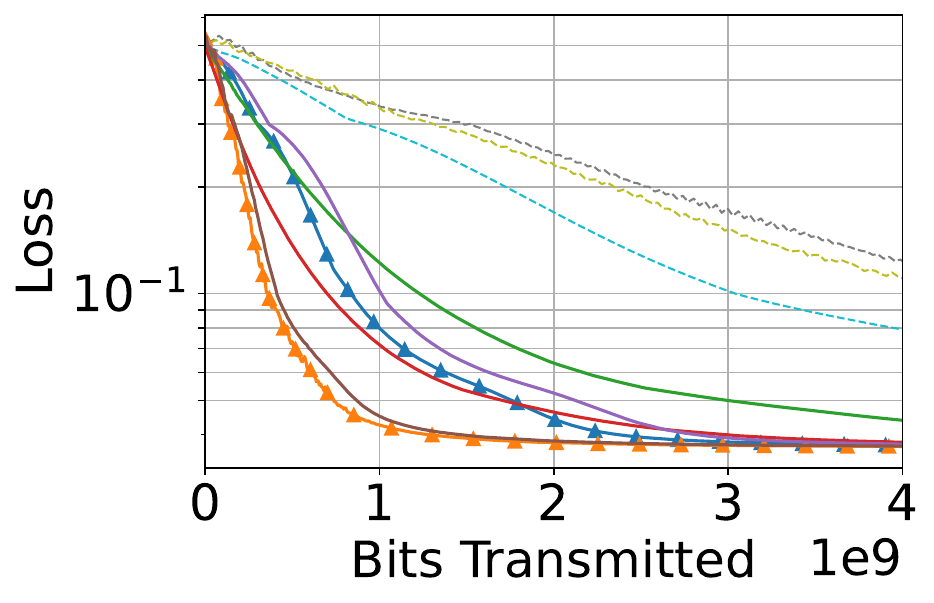}
    \includegraphics[width=0.238\textwidth]{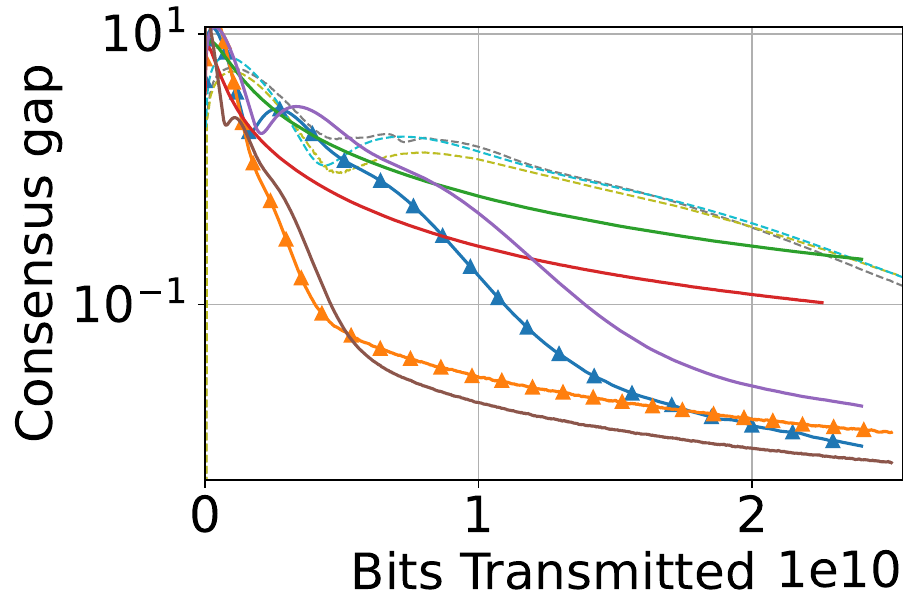}
    \includegraphics[width=0.238\textwidth]{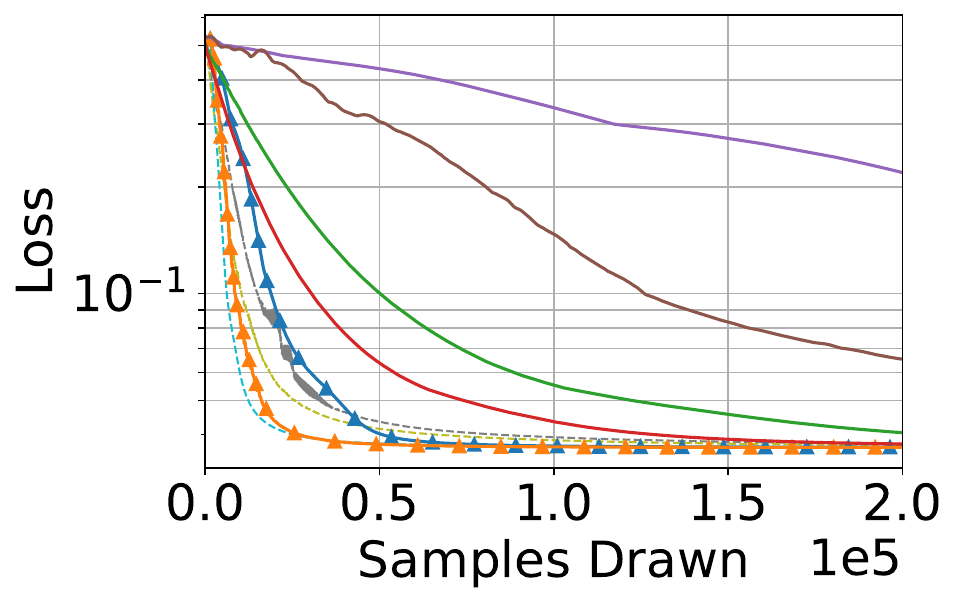}
    \includegraphics[width=0.238\textwidth]{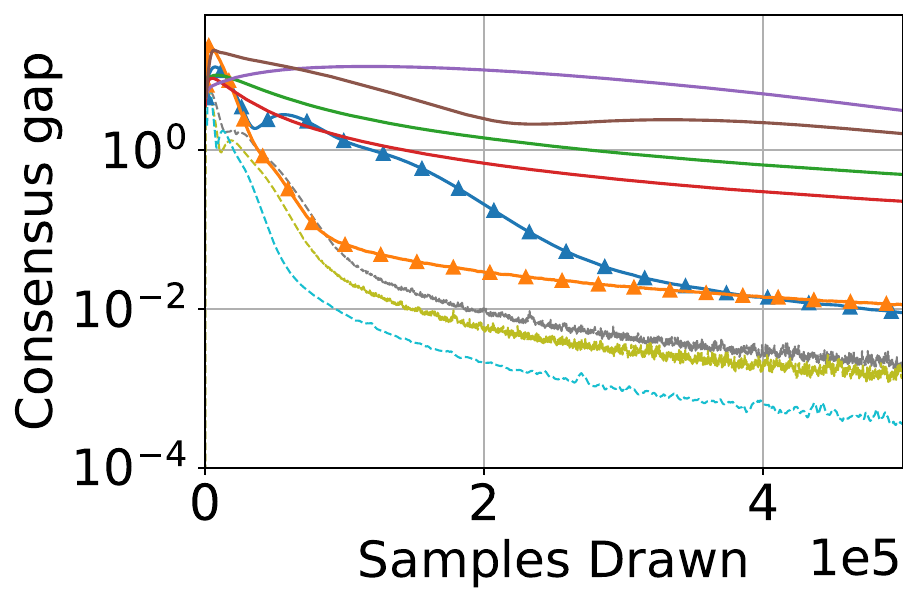}\\
    \caption{\textbf{Experiments on Synthetic Data with Linear Model.} Worst-agent's train loss value and consensus gap against the number of bits transmitted (left) and total number of samples drawn for gradient approximation (right).
    }\label{fig:syn}
\end{figure*}

\paragraph{Synthetic Data with Linear Model}
Consider a set of synthetic data generated with the {\tt leaf} benchmarking framework \citep{caldas2019leaf} which provides features from agent-dependent distributions. The task is to train a linear classifier for a set of $d=1000$-dimensional features with $m=1443$ samples partitioned into $n=25$ non-i.i.d.~portions, each held by an agent that is connected to the others on a ring graph with uniform edge weights. Each feature vector is labeled into one of 5 classes. Altogether, the local dataset for the $i$th agent is given by $\{ x_j^i , \{ \ell_{j,k}^i \}_{k=1}^5 \}_{j=1}^{m_i}$, where $m = \sum_{i=1}^{25} m_i$, $x_j^i \in \mathbb{R}^{1000}$ denotes the $j$th feature, and $\{ \ell_{j,k}^i \}_{k=1}^5 \in \{0,1\}^5$ is the label such that $\ell_{j,k}^i = 1$ if the $j$th feature has label $k \in \{1,...,5\}$. 

\vspace{-.0cm}
To train a linear classifier $\theta = ( \theta_1, \ldots, \theta_5 ) \in \mathbb{R}^{5000}$, we consider \eqref{eq:opt} with the following objective function that models a modified logistic regression problem with sigmoid loss and $\ell_2$ regularization:\vspace{-.0cm}
\begin{equation} \label{eq:syn_obj}
    f_i(\theta) = \frac{1}{m_i} \sum_{j=1}^{m_i} \sum_{k=1}^5 \phi \big( \ell_{j,k}^i \dotp{x_j^i}{\theta_k} \big) + \frac{\lambda}{2}\norm{\theta}_2^2,  \vspace{-.0cm}
\end{equation}
where $\phi(z) = (1 + e^{-z})^{-1}$ and $\lambda = 10^{-4}$ is the regularization parameter. The function $f_i(\cdot)$ is not convex, and we estimate its gradient by sampling a mini-batch of data.

Figure~\ref{fig:syn} compares the worst agent's loss values $\max_i f( \theta_i^t)$ and consensus gap $\sum_{i=1}^n \| \theta_i^t - \avgtheta^t \|$ against the communication and gradient computation costs. For compressed algorithms that require communication of two compressed variables (\aname~and {\tt BEER}), we use half the amount of bits/retained non-zeros after sparsification with \eqref{eq:sparsifier} to make a fair comparison with {\tt CHOCO-SGD}.
\aname~achieves the fastest convergence in terms of the communication cost (number of bits transmitted) and shows fast convergence on par with uncompressed algorithms in terms of gradient computation cost, when used with a $4$-bit random quantizer. Comparing among compressed algorithms, \aname~and {\tt BEER} find solutions with the lower consensus gap (10 times lower than {\tt CHOCO-SGD}) and \aname~stands out to be more sample efficient than all existing compressed approaches. Observe that \aname~outperforms {\tt CHOCO-SGD} and {\tt BEER} due to the use of gradient tracking and variance reduction.


\begin{figure*}
    \centering 
    \includegraphics[width=0.95\textwidth]{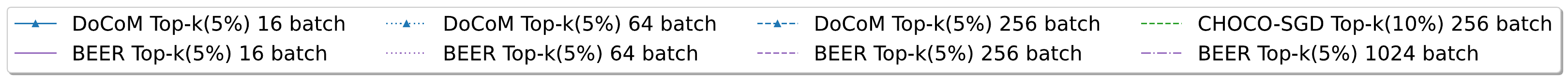}\\
    \includegraphics[width=0.238\textwidth]{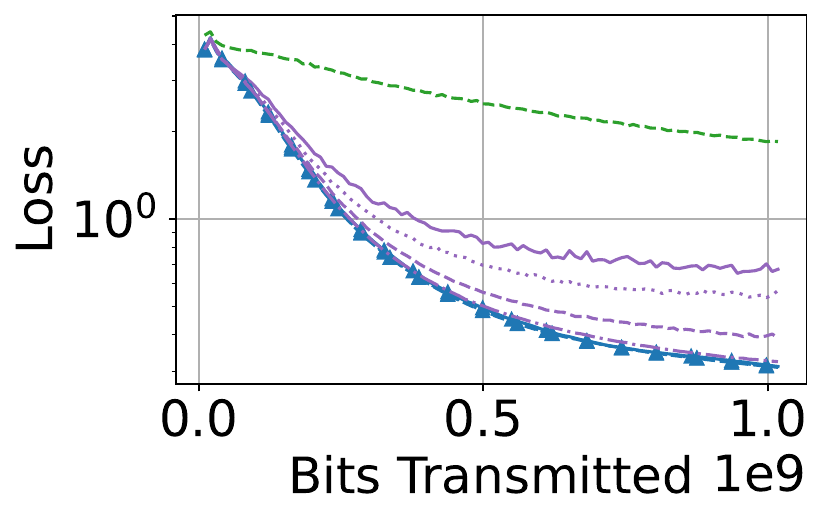}
    \includegraphics[width=0.238\textwidth]{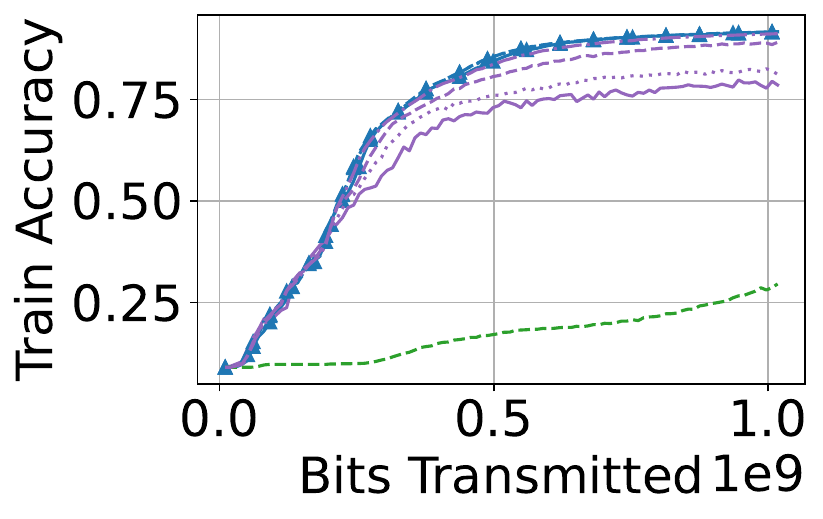}
    \includegraphics[width=0.238\textwidth]{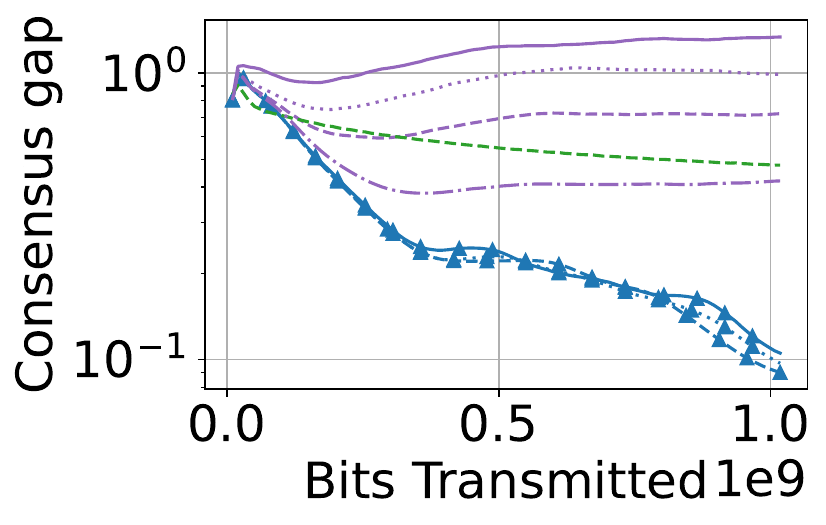}
    \includegraphics[width=0.238\textwidth]{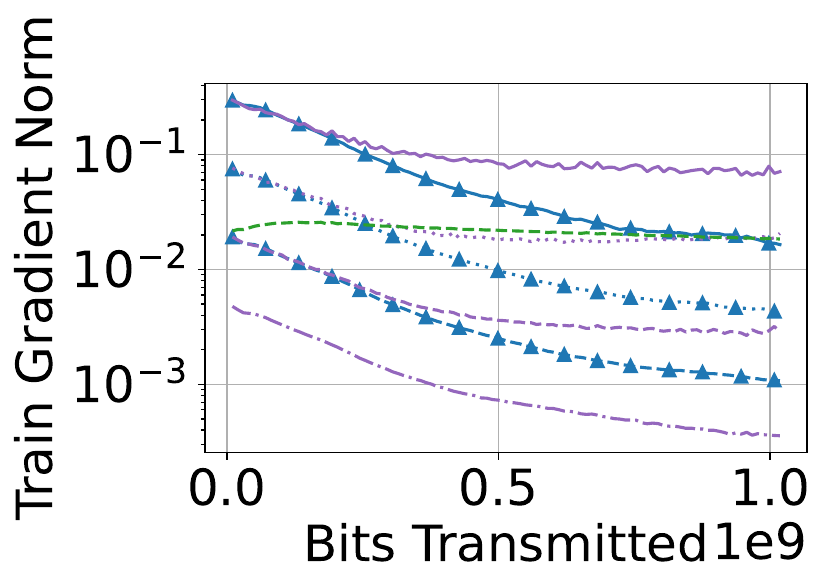}
    \caption{\textbf{Experiments on MNIST Data with Feed-forward Network.} Worst-agent's train loss values, train accuracy, consensus gap and train gradient norm against the number of bits transmitted.} \label{fig:real_main}
\end{figure*}

\paragraph{MNIST Data with Feed-forward Network}
We consider training a 1 hidden layer (with 100 neurons) feed-forward neural network with sigmoid activation function on the MNIST dataset. The samples are partitioned into $n=10$ agents where each agent only {\revision gets} 1 class of samples. These agents are arranged according to a ring topology with uniform edge weights. We tackle \eqref{eq:opt} with $f_i(\theta)$ taken as the cross entropy loss function of the local dataset and an $\ell_2$ regularization is applied with the parameter of $\lambda = 10^{-4}$. 

Figure~\ref{fig:real_main} compares the worst-agent's loss function, $\max_i f( \theta_i^t )$, and other metrics in the same manner against the communication cost (i.e., bits transmitted). We observe that {\aname} already achieved nearly the best performance in loss and accuracy using just a small batch size {\revision of 16}. On the other hand, {\tt CHOCO-SGD} suffered from slower convergence due to the heterogeneity nature of data under the unshuffled MNIST setup, {\revision and the performance of {\tt BEER} is sensitive to the choice of batch sizes}.
Notice that in this experiment, we selected a compression ratio $k/d$ of $0.05$ for \aname~and {\tt BEER}, and $0.1$ for {\tt CHOCO-SGD} for a fair comparison.


\begin{figure*}
    \centering 
    \includegraphics[width=0.8\textwidth]{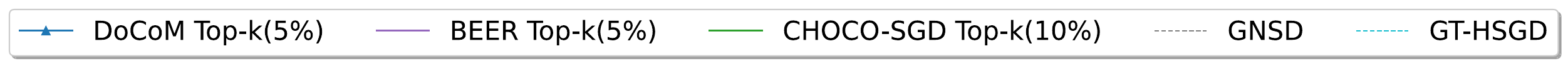}\\
    \includegraphics[width=0.238\textwidth]{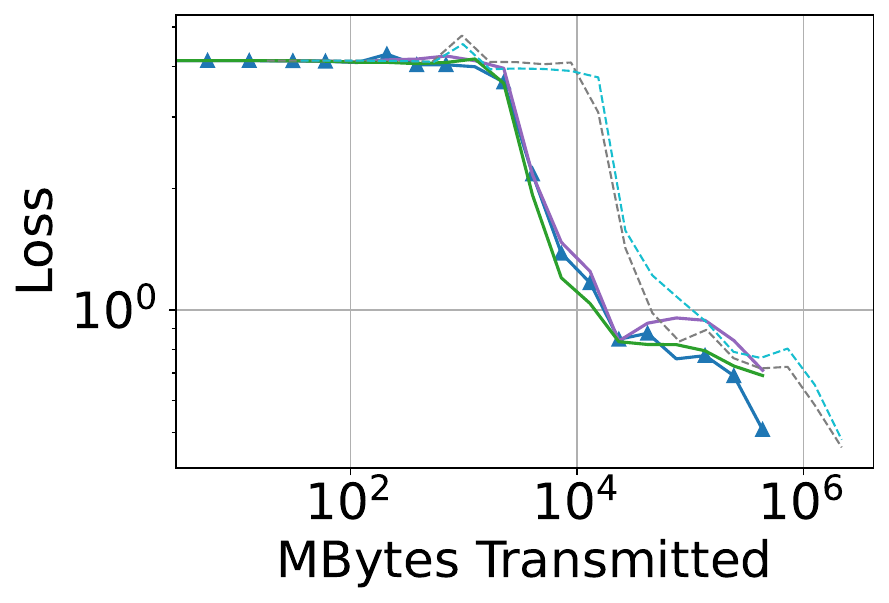}
    \includegraphics[width=0.238\textwidth]{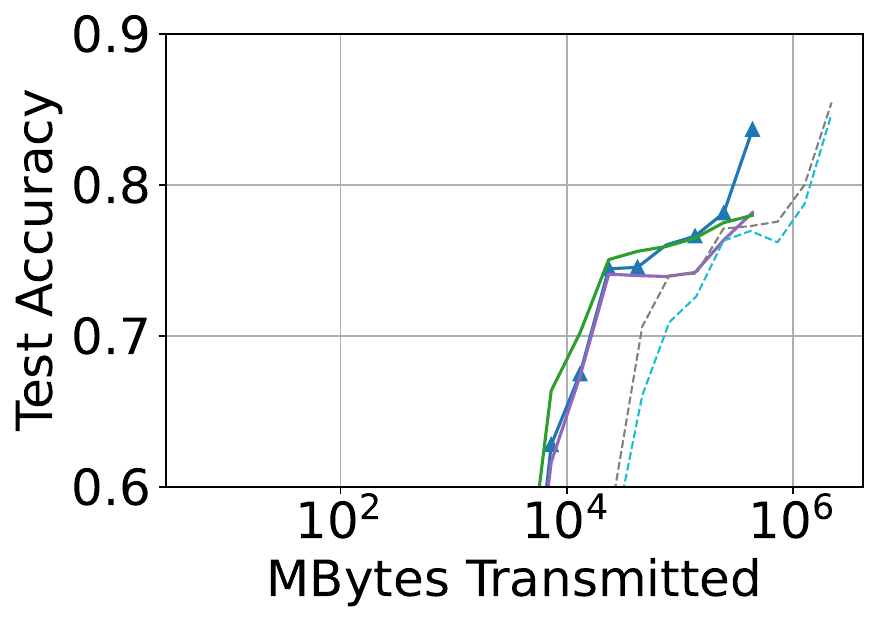}
    \includegraphics[width=0.238\textwidth]{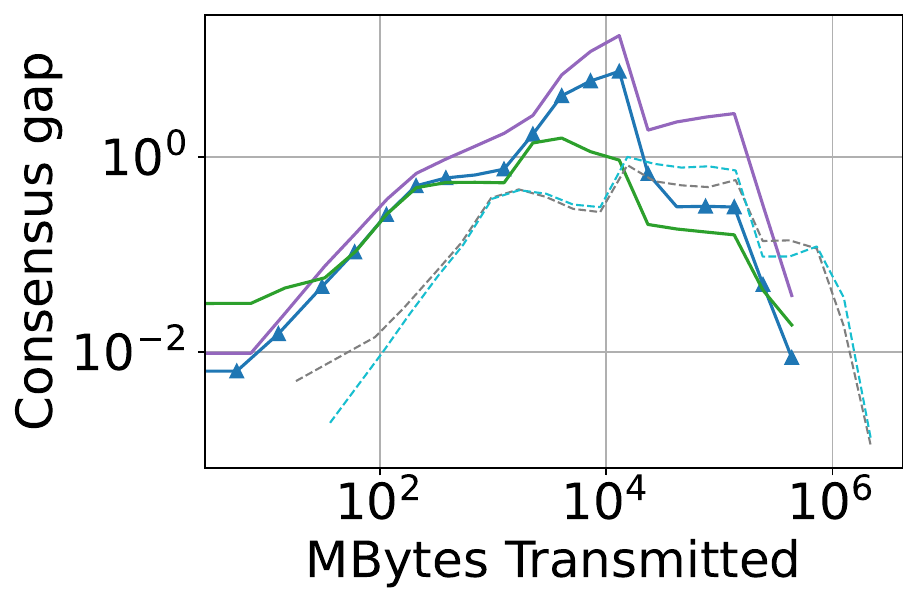}
    \includegraphics[width=0.238\textwidth]{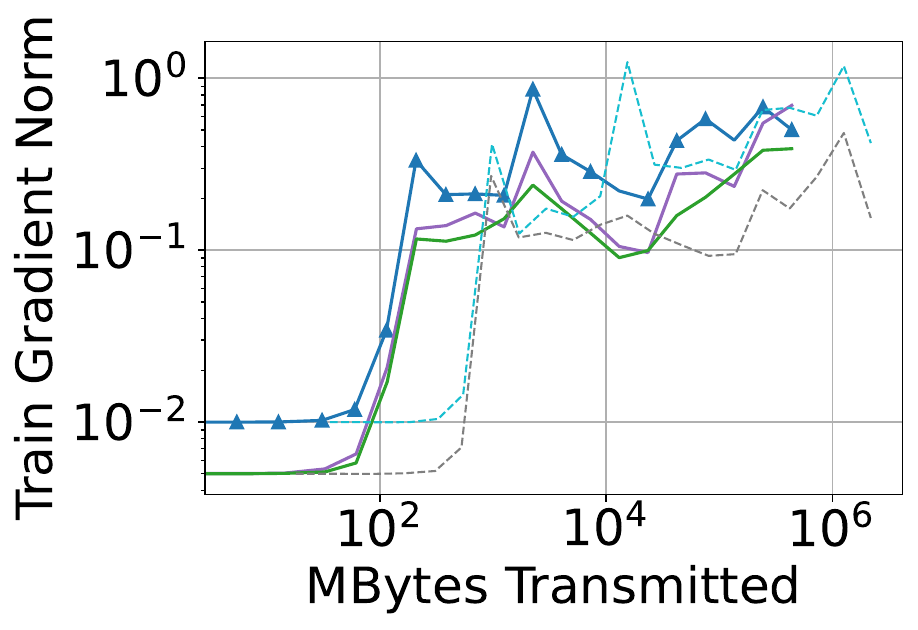}
    \caption{\textbf{Experiments on FEMNIST Data with LeNet-5.} Worst-agent's train loss values, test accuracy, consensus gap and train gradient norm against the number of MBytes transmitted.} \label{fig:femnist_main}
\end{figure*}

\paragraph{FEMNIST Data with LeNet-5}
Lastly, we consider training the LeNet-5 (with $d=60850$ parameters) neural network on the FEMNIST dataset. The dataset contains $m=805263$ samples of $28 \times 28$ hand-written character images, each belonging to one of the 62 classes. The samples are partitioned into $n=36$ agents according to the groups specified in \citep{caldas2019leaf}. These agents are arranged according to a ring topology with uniform edge weights.
We scale the learning rate $\eta$ by 0.1 at the {\revision $\{4, 40, 80\}$-th} thousand iteration and momentum $\beta$ by 0.1 at the {\revision $\{10, 40, 80 \}$-th} thousand iteration {\revision for {\aname}}; see \Cref{app:hypprm_lenet} in the appendix for the hyperparameters values.\vspace{-.1cm}

{\revision
Denote the LeNet-5 classifier ${\bf g} (x; \theta): \mathbb{R}^{28\times 28} \rightarrow \mathbb{R}^{62}$ which is parameterized by the weights vector $\theta \in \mathbb{R}^{66126}$. We optimize \eqref{eq:opt} with cross-entropy loss and $\ell_2$ regularization such that $f_i$ is defined over the local dataset $\{(x_j^i,y_j^i) \}_{j=1}^{m_i}$ of (image, class) pairs by
\begin{equation}
    f_i(\theta) = - \frac{1}{m_i} \sum_{j=1}^{m_i} \log\left(\frac{\exp( [{\bf g} (x_j^i; \theta) ]_{y_j^i})}{\sum_{k=1}^{62} \exp( [ {\bf g} (x_j^i; \theta) ]_k )} \right) + \frac{\lambda}{2} \| \theta\|_2^2,
\end{equation}
where $x_j^i \in \mathbb{R}^{28 \times 28}$, $y_j^i \in \{1, \ldots ,62\}$.
}

Figure \ref{fig:femnist_main} compares the performance of benchmarked algorithms against the communication cost. We observe that {\aname} achieves similar performance as {\tt CHOCO-SGD} and {\tt BEER}, while demonstrating a slightly better performance in terms of the consensus gap. 
We speculate that the performance gap has narrowed due to the highly non-smooth nature of LeNet-5.



\section{Conclusions} We have proposed the \aname~algorithm for communication efficient decentralized learning and shown that the algorithm achieves a state-of-the-art ${\cal O}(\epsilon^{-3})$ sampling complexity. Future works include investigating the effect of reducing the frequency of (compressed) communication. For example, through considering asynchronous updates with possibly time varying or random graph.


\bibliography{ref}
\bibliographystyle{tmlr}

\appendix
\section{Missing Proofs from \Cref{sec:pf}}
Using the matrix notations defined in the preface of \Cref{sec:pf}, we observe that \aname~\eqref{eq:algo} can be  expressed conveniently as
\begin{equation*}
\begin{array}{ll}
    \Theta^{t+1} & = \Theta^t - \eta G^t + \gamma ( {\bf W} - {\bf I} ) \hatTheta^{t+1} \\
    \hatTheta^{t+1} & = \hatTheta^t + {\cal Q} ( \Theta^t - \eta G^t - \hatTheta^t ) \\
    V^{t+1} & = \beta \stocgrdF^{t+1} + (1-\beta) ( V^t + \stocgrdF^{t+1} - \stocgrdFp^{t} ) \\
    G^{t+1} & = G^t + V^{t+1} - V^t + \gamma ( {\bf W} - {\bf I} ) \hatG^{t+1} \\
    \hatG^{t+1} & = \hatG^t + {\cal Q} ( G^t + V^{t+1} - V^t - \hatG^t )
\end{array}~~
, {\cal Q}( X ) = \left( 
    \begin{array}{c}
        {\cal Q}( x_1 )^\top \\ 
        \vdots \\
        {\cal Q}( x_n )^\top
    \end{array}
\right),~~\forall~X \in \mathbb{R}^{n \times d}.
\end{equation*}
The above simplified expression of the algorithm will be useful for our subsequent analysis.

\subsection{Proof of \Cref{lem:f_1stepnew}} \label{app:f1step}
Using the $L$-smoothness of $f$ [cf.~\Cref{ass:lips}], we obtain:
\begin{align}
    f(\bar{\theta}^{t+1}) &\le f(\bar{\theta}^t) + \dotp{\nabla f(\bar{\theta}^t)}{\bar{\theta}^{t+1} - \bar{\theta}^t} + \frac{L}{2} \norm{\avgtheta^{t+1} - \avgtheta^t}^2
    \nl & = f(\avgtheta^t) - \eta \dotp{\nabla f(\avgtheta^t)}{\avgg^t} + \frac{L \eta^2}{2} \norm{\avgg^t}^2
    \nl &= f(\avgtheta^t) - \frac{\eta}{2}\left(\norm{\avgg^t}^2 + \norm{\nabla f(\avgtheta^t)}^2 - \norm{\avgg^t - \nabla f(\avgtheta^t)}^2\right) + \frac{L\eta^2}{2} \norm{\avgg^t}^2
    \nl &\stackrel{(a)}{\leq} f(\avgtheta^t) - \frac{\eta}{4}\norm{\avgg^t}^2 - \frac{\eta}{2}\norm{\nabla f(\avgtheta^t)}^2 + \frac{\eta}{2}\norm{\avgg^t - \nabla f(\avgtheta^t)}^2
    \nl &\leq f(\avgtheta^t) - \frac{\eta}{4}\norm{\avgg^t}^2 - \frac{\eta}{2}\norm{\nabla f(\avgtheta^t)}^2 + \eta\left(\norm{\avgg^t - \avggrdF^t}^2 + \norm{\avggrdF^t - \nabla f(\avgtheta^t)}^2 \right)
    \nl &\leq f(\avgtheta^t) - \frac{\eta}{4}\norm{\avgg^t}^2 - \frac{\eta}{2}\norm{\nabla f(\avgtheta^t)}^2 + \eta\norm{\avgg^t - \avggrdF^t}^2 + \frac{L^2\eta}{n}\norm{\Theta^t - \avgTheta^t}_F^2
\end{align}
where (a) is due to $\eta\leq \frac{1}{2L}$. We remark that $\dotp{x}{y} = x^\top y$ denotes the inner product between the vectors $x,y$.

Note that by construction and the initialization $v_i^0 = g_i^0$, we have $\avgg^t = \avgv^t$ for any $t\ge0$; see \eqref{eq:docom_d}. Applying the upper bound 
\begin{align}
\norm{\Theta^t - \avgTheta^t}_F^2 = \norm{ ( \I - (1/n) {\bf 1}{\bf 1}^\top) \Theta^t }_F^2 = \norm{ \U \U^\top \Theta^t }_F^2 \leq \norm{\Theta^t_o}_F^2
\end{align}
leads to \Cref{lem:f_1stepnew}. 

\subsection{Proof of \Cref{lem:theta_o_new}}\label{app:theta_o_new}
Observe that
\begin{align}
    \Theta^{t+1}_o &= {\bf U}^\top(\Theta^t - \eta G^t + \gamma ( \W - \I) \hat{\Theta}^{t+1}) \nonumber\\
    &= {\bf U}^\top\left\{\Theta^t - \eta G^t + \gamma ( \W - \I)\left[\hat{\Theta}^t + {\cal Q} (\Theta^t - \eta G^t - \hat{\Theta}^t ) -\Theta^t + \eta G^t + \Theta^t - \eta G^t\right]\right\} \nonumber\\
    &= {\bf U}^\top\left\{\left[I + \gamma (\W-\I)\right](\Theta^t - \eta G^t) + \gamma(\W-\I)\left[ {\cal Q} (\Theta^t - \eta G^t - \hat{\Theta}^t ) - (\Theta^t - \eta G^t - \hat{\Theta}^t)\right] \right\} \label{eq:sub1}
\end{align}
Notice that it holds $\U^\top ( I + \gamma ( \W - \I ) ) = {\bf U}^\top ( I + \gamma(\W-\I) ) \U \U^\top$ and $\norm{\U^\top(I + \gamma (\W-\I)) \U} \leq 1 - \rho \gamma$. 
Taking the Frobenius norm on \eqref{eq:sub1} and the conditional expectation $\mathbb{E}_t [ \cdot ]$ on the randomness in \aname~up to the $t$th iteration:
\begin{align}
    & \mathbb{E}_t [ \norm{\Theta^{t+1}_o}^2_F ] \\ 
    & = \mathbb{E}_t \left[ \norm{\U^\top(I + \gamma (\W-\I)) \U \U^\top(\Theta^t - \eta G^t) + \gamma \U^\top (\W-\I) \left[ {\cal Q} (\Theta^t - \eta G^t - \hat{\Theta}^t ) - (\Theta^t - \eta G^t - \hat{\Theta}^t)\right]}^2_F \right] \nonumber \\
    &\leq (1+\alpha)\norm{\U^\top(I + \gamma (\W-\I)) \U (\Theta^t_o - \eta G^t_o)}^2_F  \nonumber \\ 
    &\quad + (1+\alpha^{-1}) \mathbb{E}_t \left[ \norm{\gamma \U^\top (\W-\I) \left[ {\cal Q} (\Theta^t - \eta G^t - \hat{\Theta}^t ) - (\Theta^t - \eta G^t - \hat{\Theta}^t)\right]}^2_F  \right]\nonumber \\
    &\leq (1+\alpha) \norm{\U^\top(I + \gamma (\W-\I)) \U}^2 \norm{\Theta^t_o - \eta G^t_o}_F^2  \nonumber \\
    &\quad +  (1+\alpha^{-1})\gamma^2 \norm{\U^\top(\W-\I)}^2 \mathbb{E}_t \left[ \norm{ {\cal Q} (\Theta^t - \eta G^t - \hat{\Theta}^t ) - (\Theta^t - \eta G^t - \hat{\Theta}^t)}^2_F \right] \nonumber  \\
    & \overset{(a)}{\leq} (1+\alpha) (1-\rho\gamma)^2 \norm{\Theta^t_o - \eta G^t_o}_F^2 + (1+\alpha^{-1})\bw^2 \gamma^2 (1-\delta)\norm{\Theta^t - \eta G^t - \hat{\Theta}^t}^2_F  \nonumber \\
    &\leq (1+\alpha)(1-\rho \gamma)^2(1+\beta) \norm{\Theta^t_o}_F^2 + (1+\alpha)(1-\rho \gamma)^2(1+\beta^{-1})\eta^2 \norm{G^t_o}_F^2 \nonumber \\
    & \quad + \bw^2 \gamma^2(1+\alpha^{-1})(1-\delta)\norm{\Theta^t - \eta G^t - \hat{\Theta}^t}^2_F  \nonumber \\
    & \overset{(b)}{\leq} (1-\frac{\rho\gamma}{2}) \norm{\Theta_o^t}_F^2
    + \frac{2}{\rho \gamma} \eta^2 \norm{G_o^t}_F^2 \nonumber
    + \frac{\bw^2 \gamma}{\rho} \norm{\Theta^t - \eta G^t - \hat{\Theta}^t}^2_F \nonumber
    \label{eq:Theta_0_norm_summary}
\end{align}
where (a) is due to $\norm{ \W - \I } \leq \bw$ and (b) is due to the choices $\alpha  = \frac{\rho \gamma}{1-\rho \gamma}$, $\beta = \frac{ \rho \gamma }{2}$.
The proof is completed.

\subsection{Proof of \Cref{lem:vt_bound}} \label{app:vt_bound}
Defining $\avgstocgrdF^{t} = n^{-1} {\bf 1}^\top \stocgrdF^t, \avgstocgrdFp^{t} = n^{-1} {\bf 1}^\top \stocgrdFp^t$, we get
\begin{align}
    \avgv^{t+1} - \avggrdF^{t+1} & = \avgstocgrdF^{t+1} + (1-\beta) \big( \avgv^t - \avgstocgrdFp^t \big) - \avggrdF^{t+1} \\
    & = (1-\beta) \big( \avgv^t - \avggrdF^t \big) - \beta ( \avggrdF^{t+1} - \avgstocgrdF^{t+1} ) \nl
    & \quad + (1-\beta) \big( \avggrdF^t - \avgstocgrdFp^{t} - ( \avggrdF^{t+1} - \avgstocgrdF^{t+1} ) \big) \nonumber
\end{align}
It follows that 
\begin{align*}
    \mathbb{E} \left[ \norm{ \avgv^{t+1} - \avggrdF^{t+1} }^2 \right] & \leq (1-\beta)^2 \mathbb{E} \left[ \norm{ \avgv^{t} - \avggrdF^{t} }^2 \right] + 2 \beta^2 \mathbb{E} \left[ \norm{\avggrdF^{t+1} - \avgstocgrdF^{t+1} }^2 \right] \\
    & \quad + 2(1-\beta)^2 \mathbb{E} \left[ \norm{ \avggrdF^t - \avgstocgrdFp^{t} - ( \avggrdF^{t+1} - \avgstocgrdF^{t+1} ) }^2 \right] \\
    & \leq (1-\beta)^2 \mathbb{E} \left[ \norm{ \avgv^{t} - \avggrdF^{t} }^2 \right] + 2 \beta^2 \frac{\sigma^2}{n} + 2(1-\beta)^2 \frac{L^2}{n^2} \mathbb{E} \left[ \norm{ \Theta^{t+1} - \Theta^t }^2 \right]
\end{align*}
Furthermore, applying \Cref{lem:theta_1stepnew} leads to 
\begin{equation}
\begin{split}
    \mathbb{E} \left[ \norm{ \avgv^{t+1} - \avggrdF^{t+1} }^2 \right] & \leq (1-\beta)^2 \mathbb{E} \left[ \norm{ \avgv^{t} - \avggrdF^{t} }^2 \right]  + 2 \beta^2 \frac{\sigma^2}{n} \\
    & \quad + 8 (1-\beta)^2 \frac{L^2}{n^2} \eta^2(1-\rho\gamma)^2 \mathbb{E} \left[ \norm{G_o^t}_F^2 \right] \\
    & \quad + 4n (1-\beta)^2 \frac{L^2}{n^2}  \eta^2(1-\rho\gamma)^2 \mathbb{E} \left[ \norm{ \avgg^t }^2 \right] \\
    & \quad + 8 (1-\beta)^2 \frac{L^2}{n^2} \bw^2 \gamma^2 \mathbb{E} \left[ \norm{\Theta_o^t}_F^2 \right] \\
    & \quad + 4 (1-\beta)^2 \frac{L^2}{n^2} \bw^2 \gamma^2(1-\delta) \mathbb{E} \left[ \norm{\Theta^t - \eta G^t - \hat{\Theta}^t}_F^2 \right]
\end{split}
\end{equation}
This concludes our proof for the stated lemma. 

\paragraph{Bound on the Matrix Form} Observe that 
\begin{align}
    V^{t+1} - \grdF^{t+1} & = \stocgrdF^{t+1} + (1-\beta) \big( V^t - \stocgrdFp^{t} \big) - \grdF^{t+1} \\
    & = (1-\beta) \big( V^t - \grdF^t \big) - \beta ( \grdF^{t+1} - \stocgrdF^{t+1} ) \nl
    & \quad + (1-\beta) \big( \grdF^t - \stocgrdFp^t - ( \grdF^{t+1} - \stocgrdF^{t+1} ) \big) \nonumber
\end{align}
Taking the full expectation yields 
\begin{align*}
    \mathbb{E} \left[ \norm{ V^{t+1} - \grdF^{t+1} }_F^2 \right] & \leq (1-\beta)^2 \mathbb{E} \left[ \norm{ V^{t} - \grdF^{t} }_F^2 \right] + 2 \beta^2 n \sigma^2 + 2 (1-\beta)^2 L^2 \mathbb{E} \left[ \norm{ \Theta^{t+1} - \Theta^t }_F^2 \right]
\end{align*}
Again, applying \Cref{lem:theta_1stepnew} leads to 
\begin{equation} \label{eq:vtmatform} 
\begin{split}
    \mathbb{E} \left[ \norm{ V^{t+1} - \grdF^{t+1} }_F^2 \right] & \leq (1-\beta)^2 \mathbb{E} \left[ \norm{ V^{t} - \grdF^{t} }_F^2 \right] + 2 \beta^2 n \sigma^2 \\
    & \quad + 8 (1-\beta)^2 L^2 \eta^2(1-\rho\gamma)^2 \mathbb{E} \left[ \norm{G_o^t}_F^2 \right] \\
    & \quad + 4n (1-\beta)^2 L^2 \eta^2(1-\rho\gamma)^2 \mathbb{E} \left[ \norm{ \avgg^t }^2 \right] \\
    & \quad + 8 (1-\beta)^2 L^2 \bw^2 \gamma^2 \mathbb{E} \left[ \norm{\Theta_o^t}_F^2 \right] \\
    & \quad + 4 (1-\beta)^2 L^2 \bw^2 \gamma^2(1-\delta) \mathbb{E} \left[ \norm{\Theta^t - \eta G^t - \hat{\Theta}^t}_F^2 \right]
\end{split}
\end{equation}
This concludes our proof.

\subsection{Proof of \Cref{lem:gt_new}} \label{app:gt_new}
We begin by observing the update for $G_o^{t+1}$ as:
\begin{align*}
G^{t+1}_o &= \U^\top[G^t + V^{t+1} - V^t +\gamma(\W-\I)\hat{G}^{t+1}] \\
& = \U^\top \left[ G^t + V^{t+1} - V^t +\gamma(\W-\I) \left( \hat{G}^t + {\cal Q}( G^t + V^{t+1} - V^t - \hat{G}^t ) \right) \right] \\
& = \U^\top \left[ \left( I + \gamma(\W-\I) \right) ( G^t + V^{t+1} - V^t ) \right] \\
& \quad + \gamma \U^\top (\W-\I) \left[ {\cal Q}( G^t + V^{t+1} - V^t - \hat{G}^t ) - (G^t + V^{t+1} - V^t - \hat{G}^t) \right]
\end{align*}
The above implies that 
\begin{align*}
\mathbb{E}_t \left[ \norm{G_o^{t+1}}_F^2 \right] & \leq (1 + \alpha_0) ( 1 - \rho \gamma )^2 \mathbb{E}_t \left[ \norm{ G_o^t + \U^\top ( V^{t+1} - V^t ) }_F^2 \right] \\ 
& \quad + (1 + \alpha_0^{-1}) \gamma^2 \bw^2 (1 - \delta) \mathbb{E}_t \left[ \norm{ G^t + V^{t+1} - V^t - \hat{G}^t }_F^2 \right] \\
& \leq (1 + \alpha_0) ( 1 - \rho \gamma )^2 \mathbb{E}_t \left[ (1 + \alpha_1) \norm{ G_o^t }_F^2 + (1 + \alpha_1^{-1}) \norm{ V^{t+1} - V^t }_F^2 \right] \nl 
& \quad + 2 (1 + \alpha_0^{-1}) \gamma^2 \bw^2 (1 - \delta) \mathbb{E}_t \left[ \norm{ G^t - \hat{G}^t }_F^2 + \norm{ V^{t+1} - V^t }_F^2 \right]
\end{align*}
Taking $\alpha_0 = \frac{ \rho \gamma }{ 1 - \rho \gamma }$, $\alpha_1 = \frac{\rho \gamma}{2}$ gives
\begin{align*}
\mathbb{E}_t \left[ \norm{G_o^{t+1}}_F^2 \right] & \leq \left( 1 - \frac{ \rho \gamma }{2} \right) \norm{ G_o^t }_F^2 + \gamma \frac{ 2 \bw^2}{ \rho } \norm{ G^t - \hat{G}^t }_F^2 \nl 
& \quad + \frac{2}{\rho \gamma} \left( 1 + \gamma^2 \bw^2 \right) \mathbb{E}_t \left[ \norm{ V^{t+1} - V^t }_F^2 \right]
\end{align*}
Taking the full expectation and applying \Cref{lem:vt_diffnew} give
\begin{align*}
\mathbb{E} \left[ \norm{G_o^{t+1}}_F^2 \right] & \leq \left( 1 - \frac{ \rho \gamma }{2} \right) \mathbb{E} \left[ \norm{ G_o^t }_F^2 \right] + \gamma \frac{ 2 \bw^2}{ \rho } \mathbb{E} \left[ \norm{ G^t - \hat{G}^t }_F^2 \right] \nl 
& \quad + \frac{2}{\rho \gamma} \left( 1 + \gamma^2 \bw^2 \right) \left( 3 L^2 \mathbb{E} \left[ \norm{ \Theta^{t+1} - \Theta^t }_F^2 \right] + 3 \beta^2 \mathbb{E} \left[ \norm{V^t - \grdF^t}_F^2 \right] + 3 n \beta^2 \sigma^2 \right)
\end{align*}
Furthermore, applying \Cref{lem:theta_1stepnew} yields
\begin{align*}
\mathbb{E} \left[ \norm{G_o^{t+1}}_F^2 \right] & \leq \left( 1 - \frac{ \rho \gamma }{2} + \eta^2 \frac{ 24 L^2 (1 + \gamma^2 \bw^2) ( 1 - \rho \gamma)^2 }{ \rho \gamma } \right) \mathbb{E} \left[ \norm{ G_o^t }_F^2 \right] + \gamma \frac{ 2 \bw^2}{ \rho } \mathbb{E} \left[ \norm{ G^t - \hat{G}^t }_F^2 \right] \nl 
& \quad + \frac{24 L^2 (1 + \gamma^2 \bw^2)}{\rho \gamma} \bw^2 \gamma^2 \, \mathbb{E} \left[ \norm{ \Theta_o^t}_F^2 \right] \nl 
& \quad + \frac{12 L^2(1 + \gamma^2 \bw^2)}{ \rho \gamma } \bw^2 \gamma^2 (1 - \delta) \mathbb{E} \left[ \norm{ \Theta^t - \eta G^t - \hat{\Theta}^t }_F^2 \right] \nl 
& \quad + \frac{12 L^2 (1 + \gamma^2 \bw^2)}{\rho \gamma} (1 - \rho \gamma)^2 n \, \eta^2 \, \mathbb{E} \left[ \norm{ \avgg^t }^2 \right] \nl 
& \quad + \frac{6 (1 + \gamma^2 \bw^2 )}{ \rho \gamma } \beta^2 \mathbb{E} \left[ \norm{V^t - \grdF^t}_F^2 \right] \nl
& \quad + \frac{6}{\rho \gamma} \left( 1 + \gamma^2 \bw^2 \right) \beta^2 n \sigma^2 
\end{align*}
The step size condition 
\begin{equation} \label{eq:eta_gcond}
\eta \leq \frac{ \rho \gamma }{ 10 L (1-\rho\gamma) \sqrt{1 + \gamma^2 \bw^2} }, \quad \gamma \leq \frac{1}{8 \bw}
\end{equation}
implies that 
\begin{align*}
\mathbb{E} \left[ \norm{G_o^{t+1}}_F^2 \right] & \leq \left( 1 - \frac{ \rho \gamma }{4} \right) \mathbb{E} \left[ \norm{ G_o^t }_F^2 \right] + \gamma \frac{ 2 \bw^2}{ \rho } \mathbb{E} \left[ \norm{ G^t - \hat{G}^t }_F^2 \right] + \gamma \, \frac{25 L^2 \bw^2}{\rho} \, \mathbb{E} \left[ \norm{ \Theta_o^t}_F^2 \right] \nl 
& \quad + \gamma \, \frac{13 L^2}{ \rho } \bw^2 (1 - \delta) \mathbb{E} \left[ \norm{ \Theta^t - \eta G^t - \hat{\Theta}^t }_F^2 \right] + \gamma \, \frac{ \rho n}{5} \, \mathbb{E} \left[ \norm{ \avgg^t }^2 \right] \\
& \quad + \frac{7}{ \rho \gamma } \beta^2 \mathbb{E} \left[ \norm{V^t - \grdF^t}_F^2 \right] + \frac{7n }{\rho \gamma} \beta^2 \sigma^2 .
\end{align*}
This concludes our proof.

\subsection{Proof of \Cref{lem:thetahat_new}} \label{app:thetahat_new}
Observe that 
\begin{align}
    & \mathbb{E}_t \left[ \norm{\Theta^{t+1}- \eta G^{t+1}- \hat{\Theta}^{t+1}}_F^2 \right] = \mathbb{E}_t \left[ \norm{\Theta^{t+1}- \eta G^{t+1}- (\hat{\Theta}^t + {\cal Q}(\Theta^t - \eta G^t - \hat{\Theta}^t ))}_F^2 \right] 
    \nl &= \mathbb{E}_t \left[ \norm{\Theta^{t+1}- \eta G^{t+1}- (\Theta^t - \eta G^t) + (\Theta^t - \eta G^t - \hat{\Theta}^t) - {\cal Q}(\Theta^t - \eta G^t - \hat{\Theta}^t )}_F^2 \right]
    \nl & \leq (1+\frac{2}{\delta}) \mathbb{E}_t \left[ \norm{\Theta^{t+1}- \Theta^t - \eta(G^{t+1}- G^t)}_F^2 \right] + (1+\frac{\delta}{2})(1-\delta)\norm{\Theta^t - \eta G^t - \hat{\Theta}^t}_F^2
    \nl &\leq 2(1+\frac{2}{\delta}) \mathbb{E}_t \left[ \norm{\Theta^{t+1}- \Theta^t}_F^2 \right] + 2\eta^2 (1+\frac{2}{\delta}) \mathbb{E}_t \left[ \norm{G^{t+1}- G^t}_F^2 \right]
     + (1-\frac{\delta}{2})\norm{\Theta^t - \eta G^t - \hat{\Theta}^t}_F^2 
     \label{int:theta_comm_a}
\end{align}
Note that as 
\begin{align*}
G^{t+1} - G^t & = G^{t+1} - {\bf 1} (\avgg^{t+1})^\top + {\bf 1} (\avgg^{t+1})^\top - \big[ G^{t} - {\bf 1} (\avgg^{t})^\top + {\bf 1} (\avgg^{t})^\top \big] \\
& = \U G_o^{t+1} - \U G_o^t + {\bf 1} ( \avgg^{t+1} - \avgg^t )^\top,
\end{align*}
we obtain the bound 
\begin{align}
    \mathbb{E} \left[ \norm{G^{t+1} - G^t}_F^2 \right] & \leq \frac{1}{n} \mathbb{E} \left[ \norm{ {\bf 1}^\top ( V^{t+1} - V^t )}^2 \right] + 2 \mathbb{E} \left[ \norm{ G_o^{t+1} }_F^2 \right] + 2 \mathbb{E} \left[ \norm{ G_o^{t} }_F^2 \right] \nonumber
\end{align}

With \Cref{lem:vt_diffnew}, we substitute back into \eqref{int:theta_comm_a} and obtain
\begin{align}
    \mathbb{E} \left[ \norm{\Theta^{t+1}- \eta G^{t+1}- \hat{\Theta}^{t+1}}_F^2 \right] & \leq (1-\frac{\delta}{2}) \mathbb{E} \left[ \norm{\Theta^t - \eta G^t - \hat{\Theta}^t}_F^2 \right] + 4 \eta^2 ( 1 + \frac{2}{\delta}) \mathbb{E} \left[ \norm{G_o^{t+1}}_F^2 + \norm{G_o^t}_F^2 \right] \nl 
    & \quad + 2(1 + \frac{2}{\delta}) ( 3 \eta^2 L^2 + 1 ) \mathbb{E} \left[ \norm{ \Theta^{t+1} - \Theta^t }_F^2 \right] + 
    \nl &\quad 6\beta^2\eta^2(1+\frac{2}{\delta}) \mathbb{E} \left[ \norm{ V^t - \grdF^t }^2 \right]  + 6 \beta^2 \eta^2  (1 + \frac{2}{\delta})n\sigma^2
\end{align}
We further apply \Cref{lem:theta_1stepnew} to obtain
\begin{align}
    & \mathbb{E} \left[ \norm{\Theta^{t+1}- \eta G^{t+1}- \hat{\Theta}^{t+1}}_F^2 \right] \leq (1-\frac{\delta}{2}) \mathbb{E} \left[ \norm{\Theta^t - \eta G^t - \hat{\Theta}^t}_F^2 \right] + 4 \eta^2 ( 1 + \frac{2}{\delta}) \mathbb{E} \left[ \norm{G_o^{t+1}}_F^2 + \norm{G_o^t}_F^2 \right] \nl 
    & \quad + 6\beta^2\eta^2(1+\frac{2}{\delta}) \mathbb{E} \left[ \norm{ V^t - \grdF^t }^2 \right] + 6 \beta^2 \eta^2  (1 + \frac{2}{\delta}) n\sigma^2 + 8 (1 + \frac{2}{\delta}) ( 3 \eta^2 L^2 + 1 ) \eta^2 (1 - \rho \gamma)^2 \mathbb{E} \left[ \norm{G_o^t}_F^2 \right] \nl 
    & \quad + 8 (1 + \frac{2}{\delta}) ( 3 \eta^2 L^2 + 1 ) \bw^2 \gamma^2 \mathbb{E} \left[ \norm{\Theta_o^t}_F^2 \right] + 4 (1 + \frac{2}{\delta}) ( 3 \eta^2 L^2 + 1 ) n \eta^2 (1 - \rho \gamma)^2 \mathbb{E} \left[ \norm{ \avgg^t }^2 \right] \nl 
    & \quad + 4 (1 + \frac{2}{\delta}) ( 3 \eta^2 L^2 + 1 ) \bw^2 \gamma^2 (1 - \delta) \mathbb{E} \left[ \norm{ \Theta^t - \eta G^t - \hat{\Theta}^t }_F^2 \right] \label{int:tx_int}
\end{align}
Using the step size condition:
\[ 
\gamma^2 \leq \frac{\delta}{ 16 \bw^2 (1-\delta) (1 + 3\eta^2 L^2) (1 + 2/\delta)} 
\]
and we recall that $\eta \leq 1/(4L)$, the upper bound in \eqref{int:tx_int} can be simplified as 
\begin{align}
    \mathbb{E} \left[ \norm{\Theta^{t+1}- \eta G^{t+1}- \hat{\Theta}^{t+1}}_F^2 \right] & \leq (1-\frac{\delta}{4}) \mathbb{E} \left[ \norm{\Theta^t - \eta G^t - \hat{\Theta}^t}_F^2 \right] +  \frac{12}{\delta} \eta^2 \mathbb{E} \left[ \norm{G_o^{t+1}}_F^2 \right] + \frac{18}{\delta}\beta^2\eta^2 \mathbb{E} \left[ \norm{ V^t - \grdF^t }^2 \right] \nl 
    & \quad + \frac{18}{\delta} \beta^2 \eta^2 n\sigma^2 + \frac{41}{\delta} \eta^2 \mathbb{E} \left[ \norm{G_o^t}_F^2 \right] + \frac{29}{\delta} \bw^2 \gamma^2  \mathbb{E} \left[  \norm{\Theta_o^t}_F^2 \right] +   \frac{15}{\delta}  \eta^2 n\, \mathbb{E} \left[ \norm{ \avgg^t }^2 \right] \nonumber
\end{align}
The above bound can be combined with \Cref{lem:gt_new} and $\gamma \rho \leq 1$ to give
\begin{align}
    \mathbb{E} \left[ \norm{\Theta^{t+1}- \eta G^{t+1}- \hat{\Theta}^{t+1}}_F^2 \right] & \leq \left(1-\frac{\delta}{4} + \eta^2 \gamma \frac{156 \bw^2 L^2 (1-\delta)}{\rho \delta} \right) \mathbb{E} \left[ \norm{\Theta^t - \eta G^t - \hat{\Theta}^t}_F^2 \right] \nl 
    & \quad + \eta^2 \frac{50}{\delta} \mathbb{E} \left[ \norm{G_o^t}_F^2 \right] + \eta^2 \gamma \frac{24 \bw^2}{\delta \rho} \mathbb{E} \left[ \norm{G^{t} - \hat{G}^t}_F^2 \right] \nl 
    & \quad + \left[ \frac{29}{\delta} \bw^2 \gamma^2 + \frac{300 L^2 \bw^2 \eta^2 \gamma}{\rho \delta} \right] \mathbb{E} \left[ \norm{\Theta_o^t}_F^2 \right] + \frac{18 \eta^2}{\delta} \, n \, \mathbb{E}  \left[ \norm{ \avgg^t }^2 \right] \nl 
    & \quad +  \left( 18 + \frac{84}{\rho\gamma} \right)\frac{ \beta^2\eta^2 }{\delta} \mathbb{E} \left[ \norm{ V^t - \grdF^t }_F^2 \right] + \left( 18 + \frac{84}{\rho \gamma} \right) \frac{\beta^2 \eta^2 n\sigma^2}{\delta} \nonumber
\end{align}
Taking  $\eta^2 \gamma \leq \frac{\delta^2 \rho}{1248 (1-\delta)\bw^2 L^2}$ and $\gamma \leq \frac{\delta}{8\bw}$ simplifies the bound into
\begin{align}
    \mathbb{E} \left[ \norm{\Theta^{t+1}- \eta G^{t+1}- \hat{\Theta}^{t+1}}_F^2 \right] & \leq \left(1-\frac{\delta}{8} \right) \mathbb{E} \left[ \norm{\Theta^t - \eta G^t - \hat{\Theta}^t}_F^2 \right] \nl 
    & \quad + \eta^2 \frac{50}{\delta} \mathbb{E} \left[ \norm{G_o^t}_F^2 \right] + \eta^2 \frac{3 \bw}{\rho} \mathbb{E} \left[ \norm{G^{t} - \hat{G}^t}_F^2 \right] \nl 
    & \quad + \left[ \frac{29}{\delta} \bw^2 \gamma^2 + \frac{38 L^2 \bw \eta^2}{\rho} \right] \mathbb{E} \left[ \norm{\Theta_o^t}_F^2 \right] + \frac{18 \eta^2}{\delta} \, n \, \mathbb{E}  \left[ \norm{ \avgg^t }^2 \right] \nl 
    & \quad + \left( 18 + \frac{84}{\rho\gamma} \right)\frac{ \beta^2\eta^2 }{\delta} \mathbb{E} \left[ \norm{ V^t - \grdF^t }_F^2 \right] + \left( 18 + \frac{84}{\rho \gamma} \right) \frac{\beta^2 \eta^2 n\sigma^2}{\delta} \nonumber
\end{align}
This concludes the proof.

\subsection{Proof of \Cref{lem:gtghat_new}} \label{app:gtghat_new}
We begin by observing the following recursion for $G^{t} - \hat{G}^t$:
\begin{align*}
G^{t+1} - \hat{G}^{t+1} & = G^t + V^{t+1} - V^t + ( \gamma (\W-\I) - I ) \hat{G}^{t+1} \nl 
& = \gamma (\W-\I) ( G^t + V^{t+1} - V^t ) \nl 
& \quad + ( \gamma (\W-\I) - I ) \left[ {\cal Q}( G^t + V^{t+1} - V^t - \hat{G}^t ) - (G^t + V^{t+1} - V^t - \hat{G}^t) \right] .
\end{align*}
This implies
\begin{align*}
\mathbb{E} \left[ \norm{ G^{t+1} - \hat{G}^{t+1} }_F^2 \right] & \leq (1 + \alpha_0) (1 + \gamma \bw)^2 (1 - \delta) \mathbb{E} \left[ (1+\alpha_1) \norm{ G^t- \hat{G}^t}_F^2 + (1 + \alpha_1^{-1}) \norm{ V^{t+1} - V^t }_F^2 \right] \nl 
& \quad + 2 (1 + \alpha_0^{-1}) \gamma^2 \bw^2 \mathbb{E} \left[ \norm{G_o^t}_F^2 + \norm{ V^{t+1} - V^t }_F^2 \right] 
\end{align*}
Taking $\alpha_0 = \frac{\delta}{4}$, $\alpha_1 = \frac{\delta}{8}$ and the step size condition
\[
\gamma \leq \frac{ \delta }{ 8\bw } \leq \frac{ \delta }{ 6 \bw }
\]
give
\begin{align*}
\mathbb{E} \left[ \norm{ G^{t+1} - \hat{G}^{t+1} }_F^2 \right] & \leq \left(1 - \frac{\delta}{8} \right) \mathbb{E} \left[ \norm{ G^{t} - \hat{G}^{t} }_F^2 \right] + \frac{10 \gamma^2 \bw^2}{\delta} \mathbb{E} \left[ \norm{G_o^t}_F^2 \right] \nl
& \quad + \left( (1 - \frac{\delta}{4})(1 + \frac{8}{\delta}) + 2 \gamma^2 \bw^2 (1 + \frac{4}{\delta} ) \right) \mathbb{E} \left[ \norm{ V^{t+1} - V^t }_F^2 \right] \nl 
& \leq \left(1 - \frac{\delta}{8} \right) \mathbb{E} \left[ \norm{ G^{t} - \hat{G}^{t} }_F^2 \right] + \frac{10 \gamma^2 \bw^2}{\delta} \mathbb{E} \left[ \norm{G_o^t}_F^2 \right] + \frac{10(1+\gamma^2\bw^2)}{\delta} \mathbb{E} \left[ \norm{ V^{t+1} - V^t }_F^2 \right] 
\end{align*}
Applying \Cref{lem:vt_diffnew} and \Cref{lem:theta_1stepnew} gives
\begin{align*}
\mathbb{E} \left[ \norm{ G^{t+1} - \hat{G}^{t+1} }_F^2 \right] & \leq \left(1 - \frac{\delta}{8} \right) \mathbb{E} \left[ \norm{ G^{t} - \hat{G}^{t} }_F^2 \right] + \frac{10 \gamma^2 \bw^2}{\delta} \mathbb{E} \left[ \norm{G_o^t}_F^2 \right] + \frac{31}{\delta} \beta^2 \left( \mathbb{E} \left[ \norm{ V^t - \grdF^t}_F^2 \right] + n \sigma^2 \right) \nl
& \quad + \frac{30 (1+\gamma^2 \bw^2) L^2}{\delta} \mathbb{E} \left[ \norm{ \Theta^{t+1} - \Theta^t }_F^2 \right] \nl 
& \leq \left(1 - \frac{\delta}{8} \right) \mathbb{E} \left[ \norm{ G^{t} - \hat{G}^{t} }_F^2 \right] + \frac{10}{\delta} \left(  \gamma^2 \bw^2 + 12 \eta^2 L^2 (1 + \gamma^2 \bw^2) (1 - \rho \gamma)^2 \right) \mathbb{E} \left[ \norm{G_o^t}_F^2 \right] \nl 
& \quad + \frac{120 ( 1+\gamma^2\bw^2) L^2}{\delta} \bw^2\gamma^2\mathbb{E} \left[ \norm{\Theta_o^t}_F^2 \right] \nl
& \quad + \frac{60 ( 1+\gamma^2\bw^2) L^2}{\delta} \bw^2 \gamma^2 (1 - \delta) \mathbb{E} \left[ \norm{ \Theta^t - \eta G^t - \hatTheta^t }_F^2 \right] \nl
& \quad + \frac{60 ( 1+\gamma^2\bw^2) L^2}{\delta} n \eta^2 (1 - \rho\gamma)^2 \mathbb{E} \left[ \norm{ \avgg^t }^2 \right] \nl
& \quad + \frac{31}{\delta} \beta^2 \left( \mathbb{E} \left[ \norm{ V^t - \grdF^t}_F^2 \right] + n \sigma^2 \right)
\end{align*}
Using the step size condition from \eqref{eq:eta_gcond}, i.e., $\eta^2 L^2 (1-\rho \gamma)^2 (1+\gamma^2 \bw^2) \leq \frac{ \rho^2 \gamma^2 }{ 100 }$, and $\gamma\leq\frac{\delta}{8\bw}$ simplifies the above to 
\begin{align*}
\mathbb{E} \left[ \norm{ G^{t+1} - \hat{G}^{t+1} }_F^2 \right] & \leq \left(1 - \frac{\delta}{8} \right) \mathbb{E} \left[ \norm{ G^{t} - \hat{G}^{t} }_F^2 \right] + \frac{10}{\delta} \gamma^2 \left( \bw^2 + \frac{\rho^2 }{8} \right) \mathbb{E} \left[ \norm{G_o^t}_F^2 \right] \nl 
& \quad + \gamma^2 \,\frac{122L^2\bw^2}{\delta} \mathbb{E} \left[ \norm{\Theta_o^t}_F^2 \right] + \gamma^2 \, \frac{60 L^2 \bw^2}{\delta} \mathbb{E} \left[ \norm{ \Theta^t - \eta G^t - \hat{G}^t }_F^2 \right] \nl
& \quad + \gamma^2 \, \frac{3 \rho^2}{5 \delta} n \mathbb{E} \left[ \norm{ \avgg^t }^2 \right]  + \frac{31}{\delta} \beta^2 \mathbb{E} \left[ \norm{ V^t - \grdF^t }_F^2 \right] + \frac{31}{\delta} \beta^2 n \sigma^2 .
\end{align*}
This concludes our proof.

\subsection{Proof of \Cref{lem:wholesys}} \label{app:wholesys}
Below, we illustrate how to find a set of tight conditions for the free parameters $a,b,c>0$.
Combining \Cref{lem:theta_o_new}, \ref{lem:vt_bound}, \ref{lem:gt_new}, \ref{lem:gtghat_new}, \ref{lem:thetahat_new} and \eqref{eq:vtmatform} yields
\begin{align*}
    &\scalebox{0.9}{ $\mathbb{E} \left[ L^2 \norm{\Theta_o^{t+1}}_F^2 + n \norm{ \avgv^{t+1} - \avggrdF^{t+1}}^2 + \frac{1}{n} \norm{V^{t+1} - \grdF^{t+1} }_F^2 + a \norm{G_o^{t+1}}_F^2 + b \norm{ G^{t+1} - \hat{G}^{t+1} }_F^2 + c \norm{ \Theta^{t+1} - \eta G^{t+1} - \hat{\Theta}^{t+1} }_F^2 \right]$ }
    \nl &\leq \left( 1 - \frac{\rho\gamma}{2} + \frac{16}{n}(1-\beta)^2\bw^2\gamma^2 + a\gamma\frac{25\bw^2}{\rho} + b\gamma^2 \frac{122\bw^2}{\delta} + c(\gamma^2\frac{29\bw^2}{\delta L^2} + \eta^2 \frac{38\bw}{\rho})\right) 
\mathbb{E}\left[L^2 \norm{ \Theta_o^t}_F^2\right]
    \nl &\quad + ( 1 - \beta )^2 \, n \mathbb{E}\left[ \norm{ \avgv^t - \avggrdF^t }^2 \right] + \left( (1-\beta)^2 + a\beta^2\frac{7n}{\rho\gamma} + b\beta^2 \frac{31n}{\delta} + c \beta^2\eta^2 (18 + \frac{84}{\rho\gamma}) \frac{n}{\delta} \right) \, \frac{1}{n} 
\mathbb{E}\left[\norm{ V^{t} - \grdF^{t} }_F^2\right]
    \nl &\quad + a\left( 1-\frac{\rho\gamma}{4} + \frac{1}{a}\eta^2\frac{2L^2}{\rho\gamma} + \frac{1}{a} \frac{16}{n} \eta^2(1-\beta)^2 (1-\rho\gamma)^2 L^2 + \frac{b}{a}\gamma^2 \frac{10}{\delta} (\bw^2 + \frac{\rho^2}{8}) + \frac{c}{a}\eta^2\frac{50}{\delta} \right) 
\mathbb{E}\left[\norm{ G_o^{t} }_F^2\right]
    \nl &\quad + b \left(1-\frac{\delta}{8} + \frac{a}{b}\gamma\frac{2\bw^2}{\rho} +  \frac{c}{b}\eta^2\frac{3\bw}{\rho} \right) 
\mathbb{E}\left[\norm{ G^{t} - \hat{G}^{t} }_F^2\right]
    \nl &\quad + c\left(1-\frac{\delta}{8} + \frac{1}{c}\gamma\frac{L^2\bw^2}{\rho} + \frac{1}{c} \frac{8}{n} \gamma^2(1-\beta)^2L^2\bw^2(1-\delta) + \frac{a}{c}\gamma\frac{13L^2}{\rho}\bw^2(1-\delta) + \frac{b}{c}\gamma^2 \frac{60L^2\bw^2}{\delta} \right) 
\mathbb{E}\left[\norm{ \Theta^{t} - \eta G^{t} - \hat{\Theta}^{t}}_F^2\right]
    \nl &\quad + \left( \frac{4}{n} \beta^2 + a\beta^2\frac{7}{\rho\gamma} + b\beta^2\frac{31}{\delta} + c\beta^2\eta^2(18+\frac{84}{\rho\gamma})\frac{1}{\delta}\right)
n\sigma^2
    \nl &\quad + \left( \frac{8}{n} (1-\beta)^2 L^2 \eta^2 (1-\rho\gamma)^2 + a\gamma \frac{\rho}{5} + b\gamma^2 \frac{3\rho^2}{5\delta} + c\eta^2 \frac{18}{\delta} \right)
n\,\mathbb{E}\left[ \norm{\avgg^t}^2 \right]
\end{align*}
Our goal is to find conditions on step sizes and the choices of $a,b,c$ such that
\begin{equation} \label{eq:big_inequality_sys}
\begin{aligned}
    & 1 - \frac{\rho\gamma}{2} + \frac{16}{n} (1-\beta)^2\bw^2\gamma^2 + a\gamma\frac{25\bw^2}{\rho} + b\gamma^2 \frac{122\bw^2}{\delta} + c(\gamma^2\frac{29\bw^2}{\delta L^2} + \eta^2 \frac{38\bw}{\rho}) \leq 1 - \frac{\rho\gamma}{8}
    \nl & (1-\beta)^2 + a\beta^2\frac{7n}{\rho\gamma} + b\beta^2 \frac{31n}{\delta} + c\beta^2\eta^2 (18 + \frac{84}{\rho\gamma}) \frac{n}{\delta} \leq (1-\beta)
    \nl & 1-\frac{\rho\gamma}{4} + \frac{1}{a}\eta^2\frac{2L^2}{\rho\gamma} + \frac{1}{a} \frac{16}{n} \eta^2(1-\beta)^2 (1-\rho\gamma)^2 L^2 + \frac{b}{a}\gamma^2 \frac{10}{\delta} (\bw^2 + \frac{\rho^2}{8}) + \frac{c}{a}\eta^2\frac{50}{\delta} \leq 1 - \frac{\rho\gamma}{8}
    \nl & 1-\frac{\delta}{8} + \frac{a}{b}\gamma\frac{2\bw^2}{\rho} +  \frac{c}{b}\eta^2\frac{3\bw}{\rho} \leq 1 - \frac{\delta\gamma}{8}
    \nl & 1-\frac{\delta}{8} + \frac{1}{c}\gamma\frac{L^2\bw^2}{\rho} + \frac{1}{c} \frac{8}{n} \gamma^2(1-\beta)^2L^2\bw^2(1-\delta) + \frac{a}{c}\gamma\frac{13L^2}{\rho}\bw^2(1-\delta) + \frac{b}{c}\gamma^2 \frac{60L^2\bw^2}{\delta} \leq 1 - \frac{\delta\gamma}{8}.
\end{aligned}  
\end{equation}
To this end, with the step size condition 
\begin{align} \label{eq:whole_step_1st}
    \gamma \leq \min \left\{ \frac{1}{4\rho}, \frac{\rho n} {64(1-\beta)^2\bw^2}, \frac{n}{8(1-\beta)^2(1-\delta)\rho}\right\}, \tag{S1}
\end{align}
the above set of inequalities can be guaranteed if $a,b,c$ satisfy 
\begin{align}
    & \frac{96L^2}{\rho^2\gamma^2}\eta^2 \leq a \leq \min \left\{\frac{(1-\beta)\gamma\rho}{21\beta n}, \frac{2}{13(1-\delta)}, \frac{\rho^2}{600\bw^2}\right\} \label{eq:a_bounds}
    \\ &\max \left\{ a \frac{\gamma}{1-\gamma}\frac{32\bw^2}{\rho\delta}, c\frac{\eta^2}{1-\gamma}\frac{48\bw^2}{\rho\delta} \right\} \leq b 
    \leq \min \left\{ \frac{\eta^2}{\gamma^3} \frac{2\delta L^2}{5\rho (\bw^2 + \frac{\rho^2}{8})}, \frac{\delta}{30 \rho\gamma}, \frac{(1-\beta)\delta}{93\beta n}, \frac{\delta\rho}{2928\gamma\bw^2}\right\} \label{eq:b_bounds}
    \\ & \frac{\gamma}{1-\gamma} \frac{48L^2\bw^2}{\delta\rho} \leq c \leq \min \left\{ \frac{\delta\rho L^2}{1392 \gamma \bw^2}, \frac{2\delta L^2}{25\rho\gamma}, \frac{1-\beta}{\beta\eta^2} \frac{\delta}{3n (18+\frac{84}{\rho\gamma})}, \frac{\gamma}{\eta^2}\frac{\rho^2}{1824\bw}\right\} \label{eq:c_bounds}
\end{align}
Notice that the step size conditions:
\begin{align} \label{eq:whole_step_2nd}
\eta^2 \leq \min 
\left\{ \frac{(1-\beta)\gamma^3}{2016\beta n }\frac{\rho^3}{L^2},
        \frac{\gamma^2\rho^2}{624(1-\delta)L^2},
        \frac{\gamma^2\rho^4}{57600\bw^2L^2} 
\right\} \tag{S2}
\end{align}
guarantees the existence of $a$ which satisfies \eqref{eq:a_bounds}. In particular, we take $a = \frac{96L^2}{\rho^2\gamma^2}\eta^2$. 

At the same time, with the step size conditions:
\begin{align} \label{eq:whole_step_3rd}
   \eta^2 \leq \min \left\{ \frac{(1-\beta)\gamma}{\beta n} \frac{464\bw^2}{(18+\frac{84}{\rho\gamma})\rho L^2}, \gamma^2 \frac{29 \rho\bw}{38\delta L^2} \right\}, \quad \frac{\gamma^2}{1-\gamma} \leq \frac{\delta^2\rho^2}{66816\bw^4}, \tag{S3}
\end{align}
we guarantee the existence of $c$ which satisfies \eqref{eq:c_bounds}. In particular, we take $c = \frac{\gamma}{1-\gamma}\frac{48 L^2\bw^2}{\delta\rho}$.
This simplifies \eqref{eq:b_bounds} into
\begin{align}
    \max \left\{ \frac{\eta^2}{\gamma(1-\gamma)}\frac{3072\bw^2 L^2}{\delta\rho^3}, \frac{\eta^2\gamma}{(1-\gamma)^2}\frac{2304\bw^4 L^2}{\delta^2\rho^2} \right\} \leq b \leq \min \left\{ \frac{\eta^2}{\gamma^3} \frac{2\delta L^2}{5\rho (\bw^2 + \frac{\rho^2}{8})}, \frac{\delta}{30 \rho\gamma}, \frac{(1-\beta)\delta}{93\beta n}, \frac{\delta\rho}{2928\gamma\bw^2}\right\}  \label{eq:b_bounds_2} 
\end{align}
Combining with the step size conditions:
\begin{align}
    & \eta^2 \leq \min \left\{ \frac{\gamma^2(\bw^2 + \frac{\rho^2}{8})}{12 L^2}, \frac{(1-\beta)\gamma^3}{\beta n} \frac{5\rho(\bw^2 + \frac{\rho^2}{8})}{186 L^2}, \gamma^2 \frac{5\rho^2(\bw^2+\frac{\rho^2}{8})}{5856 \bw^2 L^2} \right\}, 
    \nl & \frac{\gamma^2}{1-\gamma} \leq \min \left\{ \frac{\delta^2\rho^2}{7680 \bw^2 (\bw^2 + \frac{\rho^2}{8})}, \frac{4\delta}{3\bw^2\rho}\right\} \tag{S4} \label{eq:whole_step_4th}
\end{align}
guarantees the existence of $b$ which satisfies \eqref{eq:b_bounds_2}. Finally, we take $b = \frac{\eta^2}{\gamma(1-\gamma)}\frac{3072\bw^2 L^2}{\delta\rho^3}$. 

Using the upper bound on $\gamma^2/(1-\gamma)$ from \eqref{eq:whole_step_4th} and the above choices of $a,b,c$ yield:
\begin{align*}
    \ER^{t+1} & \leq \left( 1- \min \left\{ \frac{\rho\gamma}{8}, \frac{\delta\gamma}{8}, \beta \right\}\right) \ER^t  + \beta^2 \left[4 + \frac{\eta^2}{\gamma^3}\frac{672L^2 n}{\rho^3} + \frac{\eta^2}{\gamma} \frac{6 L^2 n \rho^4 \delta}{25 \bw^2} + \frac{\eta^2}{\gamma^2} \frac{4 L^2 n}{\bw^2} \right] \sigma^2
    \nl &\quad + \eta^2 \left[ 8(1-\beta)^2 L^2 (1-\rho\gamma)^2 + \frac{L^2 n}{\rho \gamma} \left( 96 + \frac{141}{400} \frac{\rho^2}{\bw^2} \right) \right] \mathbb{E}\left[ \norm{\avgg^t}^2 \right]
\end{align*}
Furthermore, we observe that the above steps require step size conditions \eqref{eq:whole_step_1st}, \eqref{eq:whole_step_2nd}, \eqref{eq:whole_step_3rd}, \eqref{eq:whole_step_4th}. Together with the requirements in \Cref{lem:theta_o_new}, \ref{lem:vt_bound}, \ref{lem:gt_new}, \ref{lem:gtghat_new}, \ref{lem:thetahat_new}, we need
\begin{align*}
    &\eta^2 \leq \min \bigg\{
    \frac{(1-\beta)\gamma^3}{2016\beta n}\frac{\rho^3}{L^2},
    \frac{(1-\beta)\gamma}{\beta n}  \frac{464\bw^2}{(18+\frac{84}{\rho\gamma})\rho L^2},
    \gamma^2 \frac{29 \rho\bw}{38\delta L^2},
    \frac{\gamma^2(\bw^2 + \frac{\rho^2}{8})}{12 L^2}, 
    \frac{(1-\beta)\gamma^3}{\beta n} \frac{5\rho(\bw^2 + \frac{\rho^2}{8})}{186 L^2}, 
    \\
    &\quad \quad \quad \quad \frac{5 \gamma^2  \rho^2(\bw^2+\frac{\rho^2}{8})}{5856 \bw^2 L^2}, \frac{\gamma^2\rho^2}{624(1-\delta)L^2},
    \frac{\gamma^2\rho^4}{57600\bw^2L^2}, \frac{ \rho^2 \gamma^2 }{ 100 L^2 (1-\rho\gamma)^2 ( 1 + \gamma^2 \bw^2)}, \frac{ \delta^2 \rho }{ 1248 \bw^2 L^2 \gamma }
    \bigg\}
    \nl 
    &\gamma \leq \min \left\{ 
    \frac{1}{4\rho}, 
    \frac{\rho n}{64(1-\beta)^2\bw^2},
    \frac{n}{8(1-\beta)^2(1-\delta)\rho}, \frac{ \delta}{8 \bw}, \frac{ \sqrt{\delta}} {4 \bw \sqrt{(1-\delta)(1+3 \eta^2 L^2)(1 + 2/\delta)}}
    \right\}
    , \nl 
    & \frac{\gamma^2}{1-\gamma} \leq \min \left\{ 
    \frac{\delta^2\rho^2}{66816\bw^4}
    \frac{\delta^2\rho^2}{7680 \bw^2 (\bw^2 + \frac{\rho^2}{8})},
    \frac{4\delta}{3\bw^2\rho}
    \right\}
\end{align*}
Taking the restriction that $\bw \in [1,2]$, the above can be simplified and implied by 
\begin{align*}
    \gamma & \leq \min \left\{ \frac{1}{4 \rho}, \frac{\rho n}{64 \bw^2}, \frac{ \delta}{10 \bw}, \frac{\delta \rho {\redtmp \sqrt{1-\delta/(8 \bw^2)}} }{259 \bw^2} \right\} =: \gamma_\infty , \\
    \eta & \leq \frac{\gamma}{L}  \min \bigg\{
        \sqrt{\frac{1-\beta}{\beta n}}\frac{\sqrt{ \gamma \rho^3} }{45} , \frac{\rho^2}{240 \bw }
    \bigg\} =: \eta_\infty .
\end{align*}
{\redtmp where we have used the upper bound $\gamma \leq \frac{\delta}{8 \bw}$ to remove the self-dependence on $1-\gamma$ for the constraints on $\gamma$.} This concludes the proof.

\subsection{Auxilliary Lemmas}
\begin{lemma} \label{lem:theta_1stepnew} Under \Cref{ass:mix}, \ref{ass:compress}. For any $t \geq 0$, it holds 
    \begin{align}
        \mathbb{E} \left[ \norm{\Theta^{t+1} - \Theta^t}_F^2 \right] &\leq 4 \eta^2(1-\rho\gamma)^2 \mathbb{E} \left[ \norm{G_o^t}_F^2 \right] + 2 n \eta^2(1-\rho\gamma)^2 \mathbb{E} \left[ \norm{ \avgg^t }^2 \right]
        \\ &\quad  + 4 \bw^2 \gamma^2 \mathbb{E} \left[ \norm{\Theta_o^t}_F^2 \right] + 2 \bw^2 \gamma^2(1-\delta) \mathbb{E} \left[ \norm{\Theta^t - \eta G^t - \hat{\Theta}^t}_F^2 \right]. \nonumber
    \end{align}
\end{lemma}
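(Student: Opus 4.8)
The plan is to reuse the opening manipulation from the proof of \Cref{lem:theta_o_new} but to keep the \emph{full} increment $\Theta^{t+1}-\Theta^t$ instead of its $\U^\top$-projection. Writing $Z^t:=\Theta^t-\eta G^t-\hatTheta^t$ and substituting $\hatTheta^t=(\Theta^t-\eta G^t)-Z^t$ into $\hatTheta^{t+1}=\hatTheta^t+{\cal Q}(Z^t)$ in the matrix-form update yields the exact identity
\begin{equation*}
\Theta^{t+1}-\Theta^t = \gamma(\W-\I)\Theta^t - \eta\bigl(\I+\gamma(\W-\I)\bigr)G^t + \gamma(\W-\I)\bigl({\cal Q}(Z^t)-Z^t\bigr),
\end{equation*}
i.e.\ precisely the expression obtained in the proof of \Cref{lem:theta_o_new} just before $\U^\top$ is applied. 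Note also that $\mathbf{1}^\top(\W-\I)=\mathbf{0}^\top$, so the averaged iterate moves by exactly $\avgtheta^{t+1}-\avgtheta^t=-\eta\avgg^t$.

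I would then split the increment along the consensus subspace and its orthogonal complement. Since $M=\avgone M+\U\U^\top M$ is an orthogonal decomposition in the Frobenius inner product,
\begin{equation*}
\norm{\Theta^{t+1}-\Theta^t}_F^2 = n\,\norm{\avgtheta^{t+1}-\avgtheta^t}^2 + \norm{\U^\top(\Theta^{t+1}-\Theta^t)}_F^2 = n\eta^2\,\norm{\avgg^t}^2 + \norm{\U^\top(\Theta^{t+1}-\Theta^t)}_F^2 .
\end{equation*}
Applying $\U^\top$ to the identity and using $(\W-\I)\mathbf{1}=\mathbf{0}$ and $\U^\top\mathbf{1}=\mathbf{0}$ to annihilate the averaged pieces, the consensus increment becomes a sum $a+b+c$ with $a=\gamma\,\U^\top(\W-\I)\U\,\Theta_o^t$, $b=-\eta\,\U^\top\bigl(\I+\gamma(\W-\I)\bigr)\U\,G_o^t$, and $c=\gamma\,\U^\top(\W-\I)\bigl({\cal Q}(Z^t)-Z^t\bigr)$. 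I would bound $\norm{a+b+c}_F^2\le 2\norm{a+b}_F^2+2\norm{c}_F^2\le 4\norm{a}_F^2+4\norm{b}_F^2+2\norm{c}_F^2$ and control the three operators: $\norm{\U^\top(\W-\I)\U}\le\norm{\W-\I}\le\bw$ (bounded-eigenvalue part of \Cref{ass:mix}) gives $\norm{a}_F^2\le\gamma^2\bw^2\norm{\Theta_o^t}_F^2$; $\norm{\U^\top(\I+\gamma(\W-\I))\U}\le 1-\rho\gamma$ (mixing part of \Cref{ass:mix}, the same contraction used in the proof of \Cref{lem:theta_o_new}) gives $\norm{b}_F^2\le\eta^2(1-\rho\gamma)^2\norm{G_o^t}_F^2$; and $\norm{\W-\I}\le\bw$ together with \Cref{ass:compress} applied row-wise to $Z^t$, i.e.\ $\mathbb{E}[\norm{{\cal Q}(Z^t)-Z^t}_F^2]\le(1-\delta)\norm{Z^t}_F^2$, gives $\mathbb{E}\norm{c}_F^2\le\gamma^2\bw^2(1-\delta)\,\mathbb{E}\norm{Z^t}_F^2$. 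Adding the averaged contribution $n\eta^2\norm{\avgg^t}^2$ and taking total expectations yields the stated bound.

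I do not expect a genuine obstacle — this is a one-step bookkeeping lemma paralleling \Cref{lem:theta_o_new}. The two points needing care are (i) keeping the averaged direction $-\eta\mathbf{1}(\avgg^t)^\top$ separate via the Frobenius-orthogonal split, so that its contribution stays ${\cal O}(n\eta^2\norm{\avgg^t}^2)$ and is not inflated by a joint triangle inequality with the consensus terms (this is why the $\norm{\avgg^t}^2$ coefficient is only linear in $n$); and (ii) invoking \Cref{ass:compress} row-wise on $Z^t$ so that $(1-\delta)$ is correctly attached to $\norm{Z^t}_F^2$. The precise constants $4,\,2n,\,4,\,2$ in the statement are artefacts of the particular Young-type split above; in particular the averaged term $n\eta^2\norm{\avgg^t}^2$ is bounded by $2n\eta^2(1-\rho\gamma)^2\norm{\avgg^t}^2$ over the step-size range used in \Cref{th:main}, and any of the usual regroupings produces constants of this order.
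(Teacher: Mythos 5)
Your proof is correct, and while it starts from the same algebraic identity as the paper's argument, it takes a genuinely cleaner route at one key step and in doing so fixes a small error in the paper's proof. The paper expands $\norm{A+B}_F^2 = \norm{A}_F^2 + \norm{B}_F^2 + 2\dotp{A}{B}$ with $A = -\eta(\I+\gamma(\W-\I))G^t$ and $B=\gamma(\W-\I)\Theta^t$, then applies Young's inequality to the cross term after restricting its $G^t$-part to $\U G_o^t$. In the course of this it invokes the bound $\norm{(\I+\gamma(\W-\I))G^t}_F^2 \le (1-\rho\gamma)^2\norm{G^t}_F^2$, which is \emph{not} valid: $\I+\gamma(\W-\I)$ has eigenvalue $1$ on the all-ones direction, so the $1-\rho\gamma$ contraction applies only to the consensus part, and the correct bound is $\norm{(\I+\gamma(\W-\I))G^t}_F^2 \le n\norm{\avgg^t}^2 + (1-\rho\gamma)^2\norm{G_o^t}_F^2$. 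Carried through, the paper's argument should yield $2n\eta^2\norm{\avgg^t}^2$ rather than $2n\eta^2(1-\rho\gamma)^2\norm{\avgg^t}^2$ on the averaged term. Your Frobenius-orthogonal split $\norm{\Theta^{t+1}-\Theta^t}_F^2 = n\norm{\avgtheta^{t+1}-\avgtheta^t}^2 + \norm{\U^\top(\Theta^{t+1}-\Theta^t)}_F^2$ sidesteps this entirely: the averaged piece equals $n\eta^2\norm{\avgg^t}^2$ \emph{exactly}, and the $(1-\rho\gamma)$ contraction is applied only where it belongs. You are also right that the constants $4,\,4,\,2$ on the consensus pieces emerge from the same Young-type split, and that $n\eta^2 \le 2n\eta^2(1-\rho\gamma)^2$ over the range $\gamma\le\gamma_\infty\le 1/(4\rho)$ (so $\rho\gamma\le 1/4$ and $(1-\rho\gamma)^2\ge 9/16 > 1/2$), which is how the stated coefficient on $\norm{\avgg^t}^2$ is recovered. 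The one caveat worth noting is that the lemma as written carries no step-size hypothesis, so strictly speaking the $(1-\rho\gamma)^2$ factor on the $\norm{\avgg^t}^2$ term is not unconditionally justified by either proof; this is immaterial downstream since the lemma is only ever invoked under the step-size conditions of \Cref{th:main} and \Cref{lem:wholesys}, and the discrepancy is a benign $\BO(1)$ constant.
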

\begin{proof}
    We observe that:
\begin{align}
    & \norm{\Theta^{t+1} - \Theta^t}_F^2 = \norm{ \eta G^t - \gamma(\W-\I) \hat{\Theta}^{t+1} }_F^2 \nonumber \\
    & = \norm{(I + \gamma (\W-\I)) (- \eta G^t) + \gamma(\W-\I)\Theta ^t + \gamma(\W-\I) \left[ {\cal Q}(\Theta^t - \eta G^t - \hat{\Theta}^t ) - (\Theta^t - \eta G^t-\hat{\Theta}^t)\right]}_F^2 \nonumber \\
    & \leq 2\norm{( I + \gamma(\W-\I) ) (- \eta G^t) + \gamma(\W-\I)\Theta^t}_F^2 \nonumber \\
    & \quad + 2\norm{\gamma(\W-\I)\left[ {\cal Q}(\Theta^t - \eta G^t - \hat{\Theta}^t ) - (\Theta^t - \eta G^t-\hat{\Theta}^t)\right]}_F^2 \nl 
    & \leq 2 \left[ \eta^2 \norm{ (I+\gamma(\W-\I)) G^t }_F^2 + \gamma^2 \norm{ (\W-\I) \Theta^t }_F^2 + 2 \dotp{( I + \gamma(\W-\I) ) (- \eta G^t)}{\gamma(\W-\I)\Theta^t}\right] \nl 
    & \quad + 2 \bw^2 \gamma^2 \norm{ {\cal Q}(\Theta^t - \eta G^t - \hat{\Theta}^t ) - (\Theta^t - \eta G^t-\hat{\Theta}^t) }_F^2
\end{align}
Observe that $\dotp{( \I + \gamma(\W-\I) ) (- \eta G^t)}{\gamma(\W-\I)\Theta^t} = \dotp{( \I + \gamma(\W-\I) ) (- \eta \U G_o^t)}{\gamma(\W-\I)\Theta^t}$,
the above leads to
\begin{align}
    \mathbb{E}_t \left[ \norm{\Theta^{t+1} - \Theta^t}_F^2 \right] & \leq 2\eta^2(1-\rho\gamma)^2\norm{G^t}_F^2 + 2\eta^2(1-\rho\gamma)^2\norm{G_o^t}_F^2 + 4\gamma^2\norm{(\W-\I)\Theta^t}_F^2 \nl 
    & \quad + 2 \bw^2 \gamma^2(1-\delta)\norm{\Theta^t - \eta G^t - \hat{\Theta}^t}_F^2 \label{int:Theta_diff_a}
\end{align}
Notice that 
\begin{align}
    \mathbb{E} \left[ \norm{G^t}_F^2 \right] & = \mathbb{E} \left[ \norm{ ( (1/n) {\bf 11}^\top + \U\U^\top ) G^t }_F^2 \right] = \mathbb{E} \left[ \norm{  \U\U^\top G^t }_F^2 + \norm{ (1/n){\bf 11}^\top G^t }_F^2 \right]
    \nl & \leq \mathbb{E} \left[ \norm{G_o^t}_F^2 + n \norm{ \avgg^t }^2 \right] \label{int:Theta_diff_c}
\end{align}
By combining \eqref{int:Theta_diff_a}, \eqref{int:Theta_diff_c}, and the fact $\norm{ (\W-\I)\Theta^t}_F^2 \leq \bw^2 \norm{\Theta_o^t}_F^2$, we have
\begin{align*}
    \mathbb{E} \left[ \norm{\Theta^{t+1} - \Theta^t}_F^2 \right] &\leq 4 \eta^2(1-\rho\gamma)^2 \mathbb{E} \left[ \norm{G_o^t}_F^2 \right] + 2 n \eta^2(1-\rho\gamma)^2 \mathbb{E} \left[ \norm{ \avgg^t }^2 \right]
    \\ &\quad  + 4 \bw^2 \gamma^2 \mathbb{E} \left[ \norm{\Theta_o^t}_F^2 \right] + 2 \bw^2 \gamma^2(1-\delta) \mathbb{E} \left[ \norm{\Theta^t - \eta G^t - \hat{\Theta}^t}_F^2 \right]
\end{align*}
This concludes the proof.
\end{proof}

\begin{lemma} \label{lem:vt_diffnew} Under \Cref{ass:lips}, \ref{ass:stoc}. For any $t \geq 0$, it holds
    \begin{align*}
        \mathbb{E} \left[ \norm{V^{t+1} - V^t}_F^2 \right] & \leq 3 L^2 \mathbb{E} \left[ \norm{ \Theta^{t+1} - \Theta^t }_F^2 \right] + 3 \beta^2 \mathbb{E} \left[ \norm{V^t - \grdF^t}_F^2 \right] + 3 n \beta^2 \sigma^2
    \end{align*}
\end{lemma}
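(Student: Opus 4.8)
The plan is to expand the momentum recursion \eqref{eq:docom_c} in matrix form, cancel the $V^t$ contributions, and split the increment $V^{t+1}-V^t$ into three parts: one controlled by the iterate displacement $\Theta^{t+1}-\Theta^t$ via smoothness, one that is the stochastic-gradient noise at the fixed point $\Theta^t$, and one that is the variance proxy $V^t-\grdF^t$ scaled by $\beta$. Concretely, substituting $\stocgrdF^{t+1}$ (rows $\nabla f_i(\theta_i^{t+1};\zeta_i^{t+1})$) and $\stocgrdFp^t$ (rows $\nabla f_i(\theta_i^{t};\zeta_i^{t+1})$, the \emph{same} batch) into $V^{t+1} = \beta\stocgrdF^{t+1} + (1-\beta)(V^t + \stocgrdF^{t+1} - \stocgrdFp^t)$ and simplifying gives
\[
V^{t+1}-V^t = \big(\stocgrdF^{t+1} - \stocgrdFp^t\big) + \beta\big(\stocgrdFp^t - \grdF^t\big) + \beta\big(\grdF^t - V^t\big).
\]

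Next I would apply $\norm{a+b+c}_F^2 \le 3\norm{a}_F^2 + 3\norm{b}_F^2 + 3\norm{c}_F^2$ and take the total expectation term by term. For the first term, apply \Cref{ass:lips} row-wise in its mean-square form (footnote~\ref{foot:lips}): $\mathbb{E}[\norm{\nabla f_i(\theta_i^{t+1};\zeta_i^{t+1}) - \nabla f_i(\theta_i^{t};\zeta_i^{t+1})}^2] \le L^2\,\mathbb{E}[\norm{\theta_i^{t+1}-\theta_i^{t}}^2]$, then sum over $i$ to obtain $L^2\,\mathbb{E}[\norm{\Theta^{t+1}-\Theta^t}_F^2]$. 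For the second term, use that $\zeta_i^{t+1}$ is drawn after $\theta_i^t$ is fixed and is independent of it: conditioning on $\theta_i^t$ and invoking the bounded-variance part of \Cref{ass:stoc} gives $\mathbb{E}[\norm{\nabla f_i(\theta_i^{t};\zeta_i^{t+1}) - \nabla f_i(\theta_i^{t})}^2] \le \sigma^2$, hence $\mathbb{E}[\norm{\stocgrdFp^t - \grdF^t}_F^2] \le n\sigma^2$. The third term is kept as $\mathbb{E}[\norm{V^t-\grdF^t}_F^2]$. Collecting the three estimates yields exactly the claimed bound.

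There is essentially no hard step here; the only thing to be careful about is the regrouping, which is arranged so that the Lipschitz estimate sees only the displacement $\theta_i^{t+1}-\theta_i^t$ (with the \emph{same} realization $\zeta_i^{t+1}$ on both sides, so the random gradient term cancels inside that difference) and the noise estimate sees only the fresh sample $\zeta_i^{t+1}$ evaluated at the common point $\theta_i^t$. This is precisely why \eqref{eq:docom_c} is written with a common data batch for $\stocgrdF_i^{t+1}$ and $\stocgrdFp_i^{t}$.
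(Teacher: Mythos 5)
Your decomposition $V^{t+1}-V^t = (\stocgrdF^{t+1}-\stocgrdFp^t)+\beta(\stocgrdFp^t-\grdF^t)+\beta(\grdF^t-V^t)$ is algebraically identical to the paper's, and you bound the three pieces in exactly the same way (row-wise Lipschitz for the first, Assumption~\ref{ass:stoc} giving $n\sigma^2$ for the second, and the $\|a+b+c\|_F^2\le 3(\|a\|_F^2+\|b\|_F^2+\|c\|_F^2)$ inequality to combine). This is the paper's proof; the only cosmetic difference is your explicit appeal to the relaxed mean-square Lipschitz condition of Footnote~\ref{foot:lips}, which is implied by the stated Assumption~\ref{ass:lips} anyway.
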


\begin{proof}
    Observe that 
    \begin{align*}
        V^{t+1} - V^t & = \stocgrdF^{t+1} + (1-\beta) ( V^t - \stocgrdFp^{t} ) - V^t \\
        & = \stocgrdF^{t+1} - \stocgrdFp^{t} - \beta ( V^t - \grdF^t ) + \beta ( \stocgrdFp^{t} - \grdF^t )
    \end{align*}
    It holds that 
    \begin{align*}
        \mathbb{E} \left[ \norm{V^{t+1} - V^t}_F^2 \right] & \leq 3 L^2 \mathbb{E} \left[ \norm{ \Theta^{t+1} - \Theta^t }_F^2 \right] + 3 \beta^2 \mathbb{E} \left[ \norm{V^t - \grdF^t}_F^2 \right] + 3 \beta^2 \mathbb{E} \left[ \norm{ \stocgrdFp^{t}  - \grdF^t}_F^2 \right] \\
        & \leq 3 L^2 \mathbb{E} \left[ \norm{ \Theta^{t+1} - \Theta^t }_F^2 \right] + 3 \beta^2 \mathbb{E} \left[ \norm{V^t - \grdF^t}_F^2 \right] + 3 n \beta^2 \sigma^2
    \end{align*}
    This concludes the proof.
\end{proof}

\subsection{Transient Time of \aname} \label{app:step_twist}
We follow a similar argument as in \eqref{eq:grdFbound}. Particularly, consider setting the step sizes and parameters as $\beta = \Theta( \frac{ 1 }{ T^{2/3} } ), \eta = \Theta( \frac{ 1 }{ L T^{1/3} } ), \gamma = \gamma_\infty, b_0 = \Omega(  T^{1/3} )$. Then for sufficiently large $T$, we obtain
\begin{align}
& \frac{1}{n} \sum_{i=1}^n \mathbb{E} \left[ \norm{ \nabla f( \theta_i^{\sf T} )}^2 \right] = {\cal O} \left( \frac{L (f(\avgtheta^0) - f^\star)}{T^{2/3}} + \frac{\sigma^2}{ n T^{2/3}} + \frac{ \InitG}{ \delta^2 \rho^4 T} + \frac{\sigma^2}{ \delta^3 \rho^6 T^{4/3}} \right).
\end{align}
The transient time can be calculated by bounding $T$ such that the second term dominates over the last two terms. We get 
\begin{align}
T_{\sf trans} := \Omega \left( \max \left\{ \frac{n^3 \InitG^3}{ \sigma^6 \delta^6 \rho^{12} }, \frac{n^{1.5}}{\delta^{1.5} \rho^6} \right\} \right)  
\end{align}
Taking $\sigma \leq 1$ guarantees that $T_{\sf trans} = \Omega ( \frac{n^3 \InitG^3}{ \sigma^6 \delta^6 \rho^{12} } )$.

\subsection{Sublinear bound of ${\cal O}(\log T /T)$ under PL Condition} \label{app:largeT} 
We consider setting $\beta = \eta = \log T /(\mu T)$, $\gamma = \gamma_\infty$, $b_0 = 1$. Notice that for a sufficiently large $T$, these step sizes will satisfy \eqref{eq:docom_stepsize}. Furthermore, setting $t = T$, the upper bound in \eqref{eq:plcase} is given by:
\begin{equation} \label{eq:pl_simplified}
\left( 1 - \frac{  \log T }{ T} \right)^T \left( \Delta^0 + \frac{2}{ n } \left( 2\sigma^2 + \frac{{\redtmp 118}L^2n}{\rho^2 \gamma^2(1-\gamma)} \InitG \frac{(\log T)^2}{T^2} \right) \right)  + \frac{2 \ConstS \sigma^2}{n} \frac{\log T }{\mu T}
\end{equation}
We observe that 
\[ 
\left( 1 - \frac{\log T}{T} \right)^T \leq e^{- \log T} = \frac{1}{T}.
\]
Thus the expression in \eqref{eq:pl_simplified} can be further upper bounded by ${\cal O}(\log T/T)$. This implies \eqref{eq:epsreq1}.


\section{Convergence Analysis of \aname~with $\beta = 1$}\label{app:betaequalone}
This section provides the convergence analysis of \eqref{eq:algo} for the special case when the momentum parameter is $\beta = 1$, i.e., there is no momentum applied. Observe that the \aname~algorithm can be simplified as
\begin{equation*}
    \begin{array}{ll}
        \Theta^{t+1} & = \Theta^t - \eta G^t + \gamma ( {\bf W} - {\bf I} ) \hatTheta^{t+1} \\
        \hatTheta^{t+1} & = \hatTheta^t + {\cal Q} ( \Theta^t - \eta G^t - \hatTheta^t ) \\
        G^{t+1} & = G^t + \stocgrdF^{t+1} - \stocgrdF^{t} + \gamma ( {\bf W} - {\bf I} ) \hatG^{t+1} \\
        \hatG^{t+1} & = \hatG^t + {\cal Q} ( G^t + \stocgrdF^{t+1} - \stocgrdF^{t} - \hatG^t )
    \end{array}
\end{equation*}
where we have eliminated the use of $V^t$ since $V^t = \stocgrdF^t$ for any $t \geq 0$. Additionally, we define $\bar{\Theta}^t = (1/n) {\bf 1} {\bf 1}^\top \Theta^t$. 

In such setting, the convergence analysis has to follow a different path from the case of $\beta < 1$. We begin by analyzing the 1-iteration progress with
\begin{lemma} \label{lem:onestep}
    Under \Cref{ass:lips}, \ref{ass:mix}, \ref{ass:stoc}, and the step size satisfies $\eta \leq 1/(4L)$. Then, for any $t \geq 0$, it holds
    \begin{equation} \label{eq:onestep}
    \mathbb{E}_t [f(\avgtheta^{t+1})] \leq  f(\avgtheta^t) -\frac{\eta}{4}\norm{\nabla f(\avgtheta^t)}^2 + \frac{3L^2\eta}{4n}\norm{\Theta_o^t }_F^2 + \frac{ L \eta^2 \sigma^2}{2n}.
    \end{equation} 
\end{lemma}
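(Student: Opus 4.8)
The plan is to run the classical descent inequality along the network-averaged iterate, exploiting the identity $\avgtheta^{t+1} = \avgtheta^t - \eta\avgg^t$ together with the fact that when $\beta = 1$ the gradient-tracking variable collapses to an ordinary stochastic gradient. First I would record the structural facts. With $\beta = 1$ we have $V^t = \stocgrdF^t$, so that, combined with the initialization $\gog_i^0 = \ve_i^0$ and the double stochasticity of $\W$ from \Cref{ass:mix} (which lets $n^{-1}{\bf 1}^\top$ annihilate $\gamma(\W-\I)$), the conservation identity $\avgg^t = \avgv^t = \avgstocgrdF^t$ holds for every $t \geq 1$, while $\avgg^0 = \avgv^0$ is the $b_0$-sample minibatch average at $t = 0$. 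Since $\zeta_i^t$ is drawn after $\theta_i^t$ has been fixed, conditioning on $\Theta^t$ makes $\avgg^t$ an unbiased estimator of $\avggrdF^t := n^{-1}\sum_{i=1}^n \nabla f_i(\theta_i^t)$ with $\mathbb{E}_t[\norm{\avgg^t - \avggrdF^t}^2] \leq \sigma^2/n$, by \Cref{ass:stoc} and the independence of the samples across agents (the sharper bound $\sigma^2/(nb_0)$ holds at $t=0$).

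Next I would invoke the $L$-smoothness of $f$ from \Cref{ass:lips} to write
\[
f(\avgtheta^{t+1}) \leq f(\avgtheta^t) - \eta\dotp{\nabla f(\avgtheta^t)}{\avgg^t} + \tfrac{L\eta^2}{2}\norm{\avgg^t}^2 ,
\]
and then take $\mathbb{E}_t[\cdot]$. The cross term becomes $-\eta\dotp{\nabla f(\avgtheta^t)}{\avggrdF^t}$, which I would split via $\avggrdF^t = \nabla f(\avgtheta^t) + (\avggrdF^t - \nabla f(\avgtheta^t))$ and Young's inequality into $-\tfrac{\eta}{2}\norm{\nabla f(\avgtheta^t)}^2 + \tfrac{\eta}{2}\norm{\avggrdF^t - \nabla f(\avgtheta^t)}^2$. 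The quadratic term splits by the bias--variance identity into $\norm{\avggrdF^t}^2 + \mathbb{E}_t[\norm{\avgg^t - \avggrdF^t}^2] \leq \norm{\avggrdF^t}^2 + \sigma^2/n$, after which $\norm{\avggrdF^t}^2 \leq 2\norm{\nabla f(\avgtheta^t)}^2 + 2\norm{\avggrdF^t - \nabla f(\avgtheta^t)}^2$. Finally the ``gradient-consensus'' discrepancy is controlled exactly as in the proof of \Cref{lem:f_1stepnew}, by Jensen's inequality and \Cref{ass:lips}:
\[
\norm{\avggrdF^t - \nabla f(\avgtheta^t)}^2 = \norm{ \tfrac1n {\textstyle\sum_{i=1}^n}\big(\nabla f_i(\theta_i^t) - \nabla f_i(\avgtheta^t)\big) }^2 \leq \frac{L^2}{n}\norm{\Theta^t - \avgTheta^t}_F^2 \leq \frac{L^2}{n}\norm{\Theta_o^t}_F^2 .
\]

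Assembling these pieces, the coefficient of $\norm{\nabla f(\avgtheta^t)}^2$ comes out to $-\tfrac{\eta}{2} + L\eta^2$ and that of $\tfrac{L^2}{n}\norm{\Theta_o^t}_F^2$ to $\tfrac{\eta}{2} + L\eta^2$, leaving the residual noise term $\tfrac{L\eta^2\sigma^2}{2n}$; the step size restriction $\eta \leq 1/(4L)$ (so $L\eta^2 \leq \eta/4$) collapses the first to $-\tfrac{\eta}{4}$ and the second to $\tfrac{3\eta}{4}$, which is precisely \eqref{eq:onestep}. I expect the only genuinely delicate point to be the measurability bookkeeping for $\mathbb{E}_t[\cdot]$ — pinning down that $\zeta_i^t$ is fresh given $\Theta^t$ so that $\avgg^t$ is conditionally unbiased — together with the harmless special-casing of $t = 0$, where $\avgg^0$ is a $b_0$-batch average; the smoothness and Young's inequality manipulations themselves are entirely routine.
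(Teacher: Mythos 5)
Your proposal is correct and follows essentially the same route as the paper's proof: smoothness of $f$ with $\avgtheta^{t+1}=\avgtheta^t-\eta\avgg^t$, conditional unbiasedness $\mathbb{E}_t[\avgg^t]=\avggrdF^t$ plus Young's inequality on the cross term, a bias--variance split of $\mathbb{E}_t[\norm{\avgg^t}^2]$, and the Lipschitz consensus estimate $\norm{\avggrdF^t-\nabla f(\avgtheta^t)}^2\le (L^2/n)\norm{\Theta_o^t}_F^2$, with $\eta\le 1/(4L)$ finishing the coefficient bookkeeping. The only cosmetic difference is that you carry out the bias--variance decomposition directly on $\avgg^t$ while the paper performs the equivalent computation on $\mathbf{1}^\top G^t$; the resulting bound $\mathbb{E}_t[\norm{\avgg^t}^2]\le 2\norm{\nabla f(\avgtheta^t)}^2+\tfrac{2L^2}{n}\norm{\Theta_o^t}_F^2+\tfrac{\sigma^2}{n}$ is identical.
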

The proof is relegated to \Cref{app:pf_onestep}. Notice that the above lemma departs from \Cref{lem:f_1stepnew} as it results in a bound that depends only on the consensus error. 
Our next endeavor is to bound $\norm{\Theta_o^t }_F^2$, which can be conveniently controlled by \Cref{lem:theta_o_new} as quoted below:
\begin{align}
    \mathbb{E} [\norm{\Theta_o^{t+1}}_F^2] & \leq (1-\frac{\rho\gamma}{2}) \mathbb{E} [\norm{\Theta_o^t}_F^2] + \frac{2}{\rho} \, \frac{\eta^2}{\gamma} \mathbb{E} [ \norm{G_o^t}_F^2 ] + \frac{\bw^2}{\rho} \, \gamma \, \mathbb{E} \left[ \norm{\Theta^t - \eta G^t - \hatTheta^t}^2_F \right].
\end{align}
Moreover, 
\begin{lemma} \label{lemma:g_cons}
    Under \Cref{ass:lips}, \ref{ass:mix}, \ref{ass:stoc}, \ref{ass:compress}. Suppose the step size satisfies $\eta \leq \frac{ \rho \gamma }{ 8L (1-\rho\gamma) \sqrt{1 + \gamma^2 \bw^2} }$, $\gamma \leq \frac{1}{8 \bw}$. Then, for any $t \geq 0$, the consensus error of $G^{t+1}$ is bounded as follows:
    \begin{align*}
    \mathbb{E} \left[ \norm{G_o^{t+1}}_F^2 \right] & \leq \left( 1 - \frac{ \rho \gamma }{4} \right) \mathbb{E} \left[ \norm{ G_o^t }_F^2 \right] + \gamma \, \frac{ 2 \bw^2}{ \rho } \mathbb{E} \left[ \norm{ G^t - \hatG^t }_F^2 \right] + \gamma \, \left( \frac{\rho}{4} + \frac{18 L^2 \bw^2}{\rho} \right) \, \mathbb{E} \left[ \norm{ \Theta_o^t}_F^2 \right] \nl 
    & \quad + \gamma \, \frac{9 L^2}{ \rho } \bw^2 \mathbb{E} \left[ \norm{ \Theta^t - \eta G^t - \hatTheta^t }_F^2 \right] + \gamma \, \frac{ \rho n}{4} \, \mathbb{E} \left[ \norm{ \nabla f( \avgtheta^t ) }^2 \right] + \left( \frac{7n }{\rho \gamma} + \frac{\rho \gamma}{8} \right) \sigma^2 .
    \end{align*}
\end{lemma}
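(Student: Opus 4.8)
The plan is to retrace the proof of \Cref{lem:gt_new} almost line for line, exploiting the fact that for $\beta=1$ the tracking update \eqref{eq:docom_d} becomes $G^{t+1} = G^t + \stocgrdF^{t+1} - \stocgrdF^{t} + \gamma ( \W - \I ) \hatG^{t+1}$, i.e.\ the increment $V^{t+1}-V^t$ is simply $\stocgrdF^{t+1}-\stocgrdF^t$. Writing $\hatG^{t+1} = \hatG^t + {\cal Q}(D^t)$ with $D^t := G^t + \stocgrdF^{t+1} - \stocgrdF^t - \hatG^t$, substituting into the $G^{t+1}$ update, and using $\U^\top(\I + \gamma(\W-\I)) = \U^\top(\I + \gamma(\W-\I))\U\U^\top$, I would obtain
\begin{align*}
G_o^{t+1} &= \U^\top\big(\I + \gamma(\W-\I)\big)\U\,\big(G_o^t + \U^\top(\stocgrdF^{t+1}-\stocgrdF^t)\big) \\
&\quad + \gamma\,\U^\top(\W-\I)\big({\cal Q}(D^t) - D^t\big).
\end{align*}
Taking $\mathbb{E}_t[\cdot]$, splitting the two summands by Young's inequality with $\alpha_0 = \rho\gamma/(1-\rho\gamma)$, then splitting $G_o^t$ from $\U^\top(\stocgrdF^{t+1}-\stocgrdF^t)$ with $\alpha_1 = \rho\gamma/2$, and invoking $\|\U^\top(\I+\gamma(\W-\I))\U\| \le 1-\rho\gamma$, $\|\W-\I\|\le\bw$ and \Cref{ass:compress}, this gives
\begin{align*}
\mathbb{E}\big[\norm{G_o^{t+1}}_F^2\big] \le \big(1-\tfrac{\rho\gamma}{2}\big)\mathbb{E}\big[\norm{G_o^t}_F^2\big] + \gamma\tfrac{2\bw^2}{\rho}\mathbb{E}\big[\norm{G^t-\hatG^t}_F^2\big] + \tfrac{2(1+\gamma^2\bw^2)}{\rho\gamma}\mathbb{E}\big[\norm{\stocgrdF^{t+1}-\stocgrdF^t}_F^2\big],
\end{align*}
exactly as in \Cref{lem:gt_new} but with $\stocgrdF^{t+1}-\stocgrdF^t$ in place of $V^{t+1}-V^t$.

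The only genuinely new ingredient will be bounding $\norm{\stocgrdF^{t+1}-\stocgrdF^t}_F^2$, since with $\beta=1$ there is no recursive momentum structure to telescope. As $\zeta_i^{t+1}$ is drawn independently of $\theta_i^{t+1}$ and all prior randomness, I would write $\stocgrdf_i^{t+1}-\stocgrdf_i^t = \big(\stocgrdf_i^{t+1}-\nabla f_i(\theta_i^{t+1})\big) + \big(\nabla f_i(\theta_i^{t+1})-\stocgrdf_i^t\big)$, so that the cross term vanishes under $\mathbb{E}[\cdot]$ by the tower rule, the first term contributes at most $\sigma^2$ by \Cref{ass:stoc}, and the second at most $2L^2\,\mathbb{E}[\norm{\theta_i^{t+1}-\theta_i^t}^2] + 2\sigma^2$ by \Cref{ass:lips} and \Cref{ass:stoc}; summing over $i$,
\begin{align*}
\mathbb{E}\big[\norm{\stocgrdF^{t+1}-\stocgrdF^t}_F^2\big] \le 2L^2\,\mathbb{E}\big[\norm{\Theta^{t+1}-\Theta^t}_F^2\big] + 3n\sigma^2.
\end{align*}
Plugging in \Cref{lem:theta_1stepnew} for $\norm{\Theta^{t+1}-\Theta^t}_F^2$, and using the step sizes $\eta \le \frac{\rho\gamma}{8L(1-\rho\gamma)\sqrt{1+\gamma^2\bw^2}}$ (so $L^2\eta^2(1-\rho\gamma)^2(1+\gamma^2\bw^2)\le\rho^2\gamma^2/64$) and $\gamma\le 1/(8\bw)$ to fold the newly produced $\norm{G_o^t}_F^2$ term into the contraction factor $1-\rho\gamma/4$, I would arrive at a bound of the stated form, but with the last two terms still reading $\tfrac{n\rho\gamma}{8}\mathbb{E}[\norm{\avgg^t}^2]$ and a noise term $\tfrac{6n(1+\gamma^2\bw^2)}{\rho\gamma}\sigma^2 \le \tfrac{7n}{\rho\gamma}\sigma^2$.

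The last step is to convert $\norm{\avgg^t}^2$ into $\norm{\nabla f(\avgtheta^t)}^2$. Since $\beta=1$ forces $\avgg^t = \avgv^t = \avgstocgrdF^t$, the variance identity for an average of $n$ independent unbiased estimates gives $\mathbb{E}[\norm{\avgstocgrdF^t}^2] \le \mathbb{E}[\norm{\avggrdF^t}^2] + \sigma^2/n$, while $\norm{\avggrdF^t}^2 \le 2\norm{\nabla f(\avgtheta^t)}^2 + \tfrac{2L^2}{n}\norm{\Theta_o^t}_F^2$ follows from $L$-smoothness and $\norm{\Theta^t-\avgTheta^t}_F^2 \le \norm{\Theta_o^t}_F^2$. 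Thus $\tfrac{n\rho\gamma}{8}\mathbb{E}[\norm{\avgg^t}^2]$ becomes $\tfrac{n\rho\gamma}{4}\mathbb{E}[\norm{\nabla f(\avgtheta^t)}^2]$ plus an extra $\tfrac{\rho\gamma L^2}{4}\mathbb{E}[\norm{\Theta_o^t}_F^2]$ (which, as $\rho\le\bw^2$, is absorbed into the $\norm{\Theta_o^t}_F^2$ coefficient) plus $\tfrac{\rho\gamma}{8}\sigma^2$; collecting every term yields the claim with noise coefficient $\tfrac{7n}{\rho\gamma} + \tfrac{\rho\gamma}{8}$.

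The main obstacle will be the constant bookkeeping: one must track the coefficients through the two Young splits and the \Cref{lem:theta_1stepnew} substitution tightly enough that the $\norm{G_o^t}_F^2$ coefficient lands strictly below $1-\rho\gamma/4$ under these step sizes, while the remaining coefficients stay at the advertised size. The conceptual point worth isolating is that, without momentum, the stochastic-gradient variance now enters the $G_o^t$ recursion at full strength — the term of order $n\sigma^2/(\rho\gamma)$ above — instead of being damped by a vanishing $\beta^2$ factor as in \Cref{lem:gt_new}; this is exactly the mechanism that caps the $\beta=1$ analysis at $\BO(1/\sqrt{T})$.
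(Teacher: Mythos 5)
Your proposal follows the paper's proof essentially line for line: the Young-split decomposition of $G_o^{t+1}$ with $\alpha_0 = \rho\gamma/(1-\rho\gamma)$, $\alpha_1 = \rho\gamma/2$, the variance bound $\mathbb{E}[\norm{\stocgrdF^{t+1}-\stocgrdF^t}_F^2]\le 3n\sigma^2 + 2L^2\mathbb{E}[\norm{\Theta^{t+1}-\Theta^t}_F^2]$ (the paper's \Cref{lem:stoc}), and the step-size folding to get the $1-\rho\gamma/4$ contraction all match. The only reorganization is that you substitute \Cref{lem:theta_1stepnew} and convert $\norm{\avgg^t}^2$ to $\norm{\nabla f(\avgtheta^t)}^2$ at the end, while the paper substitutes \Cref{lem:succ_diff}, which has that conversion built into its one-step bound; the two orderings give identical arithmetic.

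There is one slip worth correcting. You absorb the extra $\tfrac{\rho\gamma L^2}{4}\,\mathbb{E}[\norm{\Theta_o^t}_F^2]$ by appealing to $\rho\le\bw^2$, which is false in general (e.g.\ $\rho=\bw=\tfrac12$ gives $\bw^2=\tfrac14<\rho$). The inequality that does hold, and suffices, is $\rho\le\bw$: for the doubly stochastic symmetric $\W$ of \Cref{ass:mix}, with eigenvalues $1=\lambda_1\ge\cdots\ge\lambda_n\ge -1$, one has $\rho\le 1-\max_{i\ge2}|\lambda_i|\le 1-\lambda_n=\norm{\W-\I}\le\bw$. Then $\tfrac{\rho L^2}{4}=\tfrac{\rho^2 L^2}{4\rho}\le\tfrac{\bw^2 L^2}{4\rho}\le\tfrac{L^2\bw^2}{\rho}$, so your running coefficient $\tfrac{16.25L^2\bw^2}{\rho}$ plus the extra term is at most $\tfrac{18L^2\bw^2}{\rho}$ and sits inside the stated $\gamma\bigl(\tfrac{\rho}{4}+\tfrac{18L^2\bw^2}{\rho}\bigr)$; this is the same implicit fact the paper leans on when, in the proof of \Cref{lem:combined}, it merges $\tfrac{\rho}{4}+\tfrac{18L^2\bw^2}{\rho}$ into $\tfrac{19L^2\bw^2}{\rho}$.
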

The proof is relegated to \Cref{app:pf_g_consensus}. We also bound the subsequent terms by
\begin{lemma} \label{lem:gt}
    Under \Cref{ass:lips}, \ref{ass:mix}, \ref{ass:stoc}, \ref{ass:compress}. Suppose the step size satisfies $\eta \leq \frac{ \rho \gamma }{ 8L (1-\rho\gamma) \sqrt{1 + \gamma^2 \bw^2} }$, $\gamma \leq \frac{\delta}{8 \bw}$. Then, for any $t \geq 0$, it holds:
    \begin{align*}
    \mathbb{E} \left[ \norm{ G^{t+1} - \hat{G}^{t+1} }_F^2 \right] & \leq \left(1 - \frac{\delta}{8} \right) \mathbb{E} \left[ \norm{ G^{t} - \hat{G}^{t} }_F^2 \right] + \frac{10}{\delta} \gamma^2 \left( \bw^2 + \frac{\rho^2 }{8} \right) \mathbb{E} \left[ \norm{G_o^t}_F^2 \right] \nl 
    & \quad + \gamma^2 \, \frac{5}{4 \delta} \left( \rho^2 + 72 L^2 \bw^2 \right) \mathbb{E} \left[ \norm{\Theta_o^t}_F^2 \right] + \gamma^2 \, \frac{40 L^2 \bw^2}{\delta} \mathbb{E} \left[ \norm{ \Theta^t - \eta G^t - \hat{G}^t }_F^2 \right] \nl
    & \quad + \gamma^2 \, \frac{5 \rho^2}{4 \delta} n \mathbb{E} \left[ \norm{\nabla f(\avgtheta^t)}^2 \right]  + \frac{5}{\delta} \left( \frac{\rho^2 \gamma^2}{8} + 7 n \right) \sigma^2.
    \end{align*}
    \end{lemma}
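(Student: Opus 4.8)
The plan is to follow the template of the proof of \Cref{lem:gtghat_new} (in \Cref{app:gtghat_new}), specialized to the $\beta=1$ dynamics in which $V^t=\stocgrdF^t$ for all $t\ge1$; the $t=0$ case is treated in parallel, the only change being that $V^1-V^0=\stocgrdF^1-V^0$ with $V^0$ the initial minibatch average, so that $\mathbb{E}[\norm{V^0-\grdF^0}_F^2]\le n\sigma^2/b_0\le n\sigma^2$ replaces $\mathbb{E}[\norm{\stocgrdF^t-\grdF^t}_F^2]\le n\sigma^2$ and leaves the final bound unchanged. First I would expand the recursion for the tracker's compression error. From $\hatG^{t+1}=\hatG^t+{\cal Q}(G^t+\stocgrdF^{t+1}-\stocgrdF^t-\hatG^t)$ one obtains
\begin{align*}
G^{t+1}-\hatG^{t+1} &= \gamma(\W-\I)(G^t+\stocgrdF^{t+1}-\stocgrdF^t) \\
&\quad + (\gamma(\W-\I)-\I)\big[{\cal Q}(G^t+\stocgrdF^{t+1}-\stocgrdF^t-\hatG^t)-(G^t+\stocgrdF^{t+1}-\stocgrdF^t-\hatG^t)\big].
\end{align*}
Taking expectations, using $\norm{\W-\I}\le\bw$, the contraction property of \Cref{ass:compress}, Young's inequality with the splitting weights $\alpha_0=\delta/4$ and $\alpha_1=\delta/8$, the step size condition $\gamma\le\delta/(8\bw)\le\delta/(6\bw)$ to absorb the cross terms, and $(\W-\I){\bf 1}={\bf 0}$ to replace $\norm{(\W-\I)G^t}_F^2$ by $\bw^2\norm{G_o^t}_F^2$, I would arrive at
\begin{align*}
\mathbb{E}[\norm{G^{t+1}-\hatG^{t+1}}_F^2] &\le \Big(1-\frac{\delta}{8}\Big)\mathbb{E}[\norm{G^t-\hatG^t}_F^2] \\
&\quad + \frac{10\gamma^2\bw^2}{\delta}\mathbb{E}[\norm{G_o^t}_F^2] + \frac{10(1+\gamma^2\bw^2)}{\delta}\mathbb{E}[\norm{\stocgrdF^{t+1}-\stocgrdF^t}_F^2],
\end{align*}
the exact analogue of the intermediate inequality in \Cref{app:gtghat_new} with $V^{t+1}-V^t$ replaced by $\stocgrdF^{t+1}-\stocgrdF^t$.

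Next I would control the stochastic-gradient drift. Writing $\stocgrdF^{t+1}-\stocgrdF^t=(\stocgrdF^{t+1}-\grdF^{t+1})+(\grdF^{t+1}-\grdF^t)-(\stocgrdF^t-\grdF^t)$, exploiting that $\stocgrdF^{t+1}-\grdF^{t+1}$ is conditionally zero-mean given $\Theta^{t+1}$ (which is fixed before $\zeta^{t+1}$ is drawn), and invoking \Cref{ass:stoc} together with the $L$-smoothness of \Cref{ass:lips}, I obtain $\mathbb{E}[\norm{\stocgrdF^{t+1}-\stocgrdF^t}_F^2]\le 2L^2\mathbb{E}[\norm{\Theta^{t+1}-\Theta^t}_F^2]+3n\sigma^2$. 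I would then plug in \Cref{lem:theta_1stepnew} for $\mathbb{E}[\norm{\Theta^{t+1}-\Theta^t}_F^2]$. The key $\beta=1$ specialization enters here: the tracking identity gives $\avgg^t=\avgv^t=\avgstocgrdF^t$, so the $\norm{\avgg^t}^2$ term coming out of \Cref{lem:theta_1stepnew} is not ``signal only''; I would further bound it via $\mathbb{E}[\norm{\avgstocgrdF^t}^2]=\mathbb{E}[\norm{\avggrdF^t}^2]+\frac{1}{n^2}\sum_i\mathbb{E}[\norm{\stocgrdf_i^t-\nabla f_i(\theta_i^t)}^2]\le 2\mathbb{E}[\norm{\nabla f(\avgtheta^t)}^2]+\frac{2L^2}{n}\mathbb{E}[\norm{\Theta_o^t}_F^2]+\frac{\sigma^2}{n}$, using independence of the samples across agents, \Cref{ass:stoc}, and \Cref{ass:lips}. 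This rewrites the $\norm{\avgg^t}^2$ contribution as a $\norm{\nabla f(\avgtheta^t)}^2$ term, a $\norm{\Theta_o^t}_F^2$ term, and a residual $\sigma^2$ term carrying an $\eta^2(1-\rho\gamma)^2$ prefactor.

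Finally I would collect all the coefficients and invoke the step size condition $\eta\le\rho\gamma/(8L(1-\rho\gamma)\sqrt{1+\gamma^2\bw^2})$, which is equivalent to $\eta^2L^2(1-\rho\gamma)^2(1+\gamma^2\bw^2)\le\rho^2\gamma^2/64$; this collapses every $\eta^2(1-\rho\gamma)^2$-weighted quantity produced by \Cref{lem:theta_1stepnew} into an $O(\gamma^2\rho^2/\delta)$- (or $O(\gamma^2\rho^2 n/\delta)$-) weighted one. In particular the residual noise from the $\avgg^t$ term becomes the characteristic $\frac{5\rho^2\gamma^2}{8\delta}\sigma^2$ contribution and its signal part the $\gamma^2\frac{5\rho^2}{4\delta}n\,\norm{\nabla f(\avgtheta^t)}^2$ term; the $\bw^2\gamma^2$-weighted $\norm{\Theta_o^t}_F^2$ and $\norm{\Theta^t-\eta G^t-\hatTheta^t}_F^2$ pieces from the last two terms of \Cref{lem:theta_1stepnew}, after passing through the $\times 2L^2$ and $\times\frac{10(1+\gamma^2\bw^2)}{\delta}$ factors, reduce with $\gamma\le\delta/(8\bw)$ and $\bw\ge1$ to $\gamma^2\frac{5}{4\delta}(\rho^2+72L^2\bw^2)$ and $\gamma^2\frac{40L^2\bw^2}{\delta}$; the direct $3n\sigma^2$ term contributes at most $\frac{35n}{\delta}\sigma^2$; and the $\norm{G_o^t}_F^2$ coefficient picks up the extra $\frac{\rho^2}{8}$ to become $\frac{10}{\delta}\gamma^2(\bw^2+\frac{\rho^2}{8})$, matching the statement. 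I expect the main obstacle to be precisely this bookkeeping --- threading the noise cleanly through the $\norm{\Theta^{t+1}-\Theta^t}_F^2$ bound, a step absent from the $\beta<1$ analysis (where $\avgg^t=\avgv^t$ is variance-reduced and its second moment is tracked by the Lyapunov function rather than crudely bounded by $\sigma^2$), while keeping the numerical constants tight enough to hit the stated values, and verifying that the $t=0$ initialization leaves the recursion valid uniformly in $t$.
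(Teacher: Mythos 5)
Your proposal is correct and follows essentially the same route as the paper's proof in \Cref{app:lemgt}: same recursion for $G^{t+1}-\hatG^{t+1}$, same Young splitting with $\alpha_0=\delta/4$, $\alpha_1=\delta/8$, same drift bound (which is exactly \Cref{lem:stoc}), and the same one-step $\Theta$-difference bound followed by a noise/signal/consensus decomposition of $\mathbb{E}[\norm{\avgg^t}^2]$. The only cosmetic difference is that the paper packages your composition of \Cref{lem:theta_1stepnew} with that decomposition into the single $\beta=1$-specific \Cref{lem:succ_diff}, which yields identical coefficients.
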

    \begin{lemma} \label{lemma:theta_comm}
        Under \Cref{ass:lips}, \ref{ass:mix}, \ref{ass:stoc}, \ref{ass:compress}. Suppose the step size satisfies $\gamma \leq \frac{\sqrt{\delta} }{ 4 \bw \sqrt{ (1-\delta) (1 + \eta^2 L^2) (1 + 2/\delta)}} $. Then, for any $t \geq 0$, it holds 
        \begin{align}
            \mathbb{E} \left[ \norm{\Theta^{t+1}- \eta G^{t+1}- \hat{\Theta}^{t+1}}_F^2 \right] & \leq (1-\frac{\delta}{4}) \mathbb{E} \left[ \norm{\Theta^t - \eta G^t - \hat{\Theta}^t}_F^2 \right] + \eta^2 \frac{12}{\delta} \mathbb{E} \left[ \norm{G_o^{t+1}}_F^2 \right] \nl 
            & \quad + \frac{24}{\delta} \eta^2 \sigma^2 + \frac{38}{\delta} \eta^2 \mathbb{E} \left[ \norm{G_o^t}_F^2 \right] + \frac{26}{\delta} \mathbb{E} \left[ ( \eta^2 L^2 + \bw^2 \gamma^2 ) \norm{\Theta_o^t}_F^2 + \eta^2 n \norm{ \nabla f( \avgtheta^t) }^2 \right] \nonumber
        \end{align}
        Furthermore, if the step size satisfies $\eta^2 \gamma \leq \frac{\delta^2 \rho}{864 \bw^2 L^2}$, then 
        \begin{align}
        \mathbb{E} \left[ \norm{\Theta^{t+1}- \eta G^{t+1}- \hat{\Theta}^{t+1}}_F^2 \right] & \leq \left(1-\frac{\delta}{8} \right) \mathbb{E} \left[ \norm{\Theta^t - \eta G^t - \hat{\Theta}^t}_F^2 \right] \nl 
        & \quad + \eta^2 \frac{50}{\delta} \mathbb{E} \left[ \norm{G_o^t}_F^2 \right] + \eta^2 \frac{3 \bw}{\rho} \mathbb{E} \left[ \norm{G^{t} - \hat{G}^t}_F^2 \right] \nl 
        & \quad + \left[ \eta^2 \, \frac{29 L^4}{\delta} \left( 1 + \frac{ \bw}{\rho} \right) + \frac{26}{\delta} \bw^2 \gamma^2 \right] \mathbb{E} \left[ \norm{\Theta_o^t}_F^2 \right] \nl 
        & \quad + \eta^2 \, \frac{29n }{\delta} \, \mathbb{E}  \left[ \norm{ \nabla f( \avgtheta^t) }^2 \right] + \eta^2\, \frac{24 \sigma^2}{\delta} \left( 1 + \frac{4n}{\rho \gamma} \right) \nonumber
    \end{align}
\end{lemma}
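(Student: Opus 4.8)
The plan is to follow the argument used for \Cref{lem:thetahat_new}, specialised to the momentum-free recursion where $V^t=\stocgrdF^t$ and $\avgg^t=\avgstocgrdF^t$. First I expand the compression-error term with the $\hatTheta$-update and split off the per-step change: writing
\begin{equation*}
\Theta^{t+1}-\eta G^{t+1}-\hatTheta^{t+1} = \big[(\Theta^{t+1}-\eta G^{t+1})-(\Theta^t-\eta G^t)\big] + \big[(\Theta^t-\eta G^t-\hatTheta^t)-{\cal Q}(\Theta^t-\eta G^t-\hatTheta^t)\big],
\end{equation*}
Young's inequality with parameter $2/\delta$ together with \Cref{ass:compress} (so that $(1+\tfrac\delta2)(1-\delta)\le 1-\tfrac\delta2$) gives a bound of the form $(1-\tfrac\delta2)\norm{\Theta^t-\eta G^t-\hatTheta^t}_F^2 + (1+\tfrac2\delta)\,\mathbb{E}_t[\norm{(\Theta^{t+1}-\Theta^t)-\eta(G^{t+1}-G^t)}_F^2]$. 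Bounding the last term by $2\norm{\Theta^{t+1}-\Theta^t}_F^2 + 2\eta^2\norm{G^{t+1}-G^t}_F^2$ and using the orthogonal decomposition $G^{t+1}-G^t=\U(G_o^{t+1}-G_o^t)+\tfrac1n{\bf 1}{\bf 1}^\top(\stocgrdF^{t+1}-\stocgrdF^t)$, hence $\norm{G^{t+1}-G^t}_F^2\le 2\norm{G_o^{t+1}}_F^2+2\norm{G_o^t}_F^2+\tfrac1n\norm{{\bf 1}^\top(\stocgrdF^{t+1}-\stocgrdF^t)}^2$, is how the $\eta^2\tfrac{12}{\delta}\norm{G_o^{t+1}}_F^2$ term enters the first displayed inequality.

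Next I control the remaining pieces. For $\norm{\Theta^{t+1}-\Theta^t}_F^2$ I invoke \Cref{lem:theta_1stepnew}; for $\norm{\stocgrdF^{t+1}-\stocgrdF^t}_F^2$ the triangle inequality with \Cref{ass:lips}, \ref{ass:stoc} gives $\mathbb{E}[\norm{\stocgrdF^{t+1}-\stocgrdF^t}_F^2]\le 3L^2\mathbb{E}[\norm{\Theta^{t+1}-\Theta^t}_F^2]+6n\sigma^2$, the $\sigma^2$ terms arising because the two stochastic gradients use independent minibatches. Since \Cref{lem:theta_1stepnew} carries $\norm{\avgg^t}^2$ rather than $\norm{\nabla f(\avgtheta^t)}^2$, I substitute $\mathbb{E}[\norm{\avgg^t}^2]=\mathbb{E}[\norm{\avgstocgrdF^t}^2]\le 2\mathbb{E}[\norm{\nabla f(\avgtheta^t)}^2]+\tfrac{2L^2}{n}\mathbb{E}[\norm{\Theta_o^t}_F^2]+\tfrac{\sigma^2}{n}$, valid at $\beta=1$ because $\avgg^t=\avgstocgrdF^t$. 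Collecting the coefficients of $\norm{\Theta^t-\eta G^t-\hatTheta^t}_F^2$, $\norm{G_o^t}_F^2$, $\norm{\Theta_o^t}_F^2$, $n\norm{\nabla f(\avgtheta^t)}^2$ and $\sigma^2$, then using $\gamma\le\tfrac{\sqrt\delta}{4\bw\sqrt{(1-\delta)(1+\eta^2L^2)(1+2/\delta)}}$ to shrink the $\bw^2\gamma^2$ contribution enough that the contraction improves from $1-\tfrac\delta2$ to $1-\tfrac\delta4$, yields the first inequality.

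For the second inequality I substitute the recursion of \Cref{lemma:g_cons} into the $\eta^2\tfrac{12}{\delta}\norm{G_o^{t+1}}_F^2$ term. This reintroduces $\norm{G_o^t}_F^2$, $\norm{G^t-\hatG^t}_F^2$, $\norm{\Theta_o^t}_F^2$, $\norm{\Theta^t-\eta G^t-\hatTheta^t}_F^2$, $n\norm{\nabla f(\avgtheta^t)}^2$ and $(\tfrac{7n}{\rho\gamma}+\tfrac{\rho\gamma}{8})\sigma^2$, each premultiplied by $\eta^2\tfrac{12}{\delta}$ times a further factor of $\gamma$ or $\eta^2$; the additional bound $\eta^2\gamma\le\tfrac{\delta^2\rho}{864\bw^2L^2}$ (together with $\eta\le\tfrac{\rho\gamma}{8L(1-\rho\gamma)\sqrt{1+\gamma^2\bw^2}}$, $\gamma\le\tfrac1{8\bw}$ inherited from \Cref{lemma:g_cons}, and $\rho\gamma\le1$) is precisely what makes the extra $\norm{\Theta^t-\eta G^t-\hatTheta^t}_F^2$ mass at most $\delta/8$, so the contraction tightens to $1-\tfrac\delta8$, while the $\norm{G^t-\hatG^t}_F^2$ coefficient drops to $\le\eta^2\tfrac{3\bw}{\rho}$ and the $\sigma^2$ terms consolidate to $\eta^2\tfrac{24}{\delta}(1+\tfrac{4n}{\rho\gamma})$, matching the statement. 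The only real obstacle I anticipate is the constant bookkeeping: tracking roughly half a dozen coupled quadratics through the two substitutions and verifying that each step size threshold is exactly strong enough to absorb its cross term into the successive contraction factors ($1-\tfrac\delta2\to1-\tfrac\delta4\to1-\tfrac\delta8$). There is no conceptual difficulty beyond \Cref{lem:thetahat_new}; the simplification at $\beta=1$ is that no separate $V^t-\grdF^t$ term has to be carried, at the cost of the $\sigma^2$ contributions no longer being scaled by $\beta^2$.
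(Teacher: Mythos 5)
Your decomposition and overall plan match the paper's proof of \Cref{lemma:theta_comm} in \Cref{app:pf_t_comm} exactly: expand the $\hatTheta$-update, apply Young's inequality with parameter $2/\delta$ and \Cref{ass:compress}, split $\norm{G^{t+1}-G^t}_F^2$ via the orthogonal decomposition $\frac{1}{n}{\bf 1}{\bf 1}^\top + \U\U^\top$, control $\norm{\Theta^{t+1}-\Theta^t}_F^2$ and $\norm{\avgg^t}^2$ at $\beta=1$, and then substitute \Cref{lemma:g_cons} into the $\norm{G_o^{t+1}}_F^2$ term with the extra step size condition $\eta^2\gamma\le\frac{\delta^2\rho}{864\bw^2L^2}$ to recover the $1-\delta/8$ contraction. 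Invoking \Cref{lem:theta_1stepnew} followed by the substitution $\mathbb{E}[\norm{\avgg^t}^2]\le 2\mathbb{E}[\norm{\nabla f(\avgtheta^t)}^2]+\frac{2L^2}{n}\mathbb{E}[\norm{\Theta_o^t}_F^2]+\frac{\sigma^2}{n}$ is equivalent to the paper's direct use of \Cref{lem:succ_diff}, so that step is fine.

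There is, however, a genuine bookkeeping error in how you control the noise piece of $G^{t+1}-G^t$. In your decomposition the term that needs bounding is the row-sum quantity $\frac{1}{n}\norm{{\bf 1}^\top(\stocgrdF^{t+1}-\stocgrdF^t)}^2$, but you instead bound the Frobenius norm $\mathbb{E}[\norm{\stocgrdF^{t+1}-\stocgrdF^t}_F^2]\le 3L^2\mathbb{E}[\norm{\Theta^{t+1}-\Theta^t}_F^2]+6n\sigma^2$. Passing from the Frobenius bound to the row-sum via the projection inequality $\frac{1}{n}\norm{{\bf 1}^\top X}^2\le\norm{X}_F^2$ keeps the $n\sigma^2$ factor, whereas bounding the row sum directly — exploiting that the gradient noise is independent \emph{across agents}, so that the cross terms in $\norm{\sum_i(\cdot)}^2$ vanish in expectation — gives $\mathbb{E}[\norm{{\bf 1}^\top(\grdSF^{t+1}-\grdSF^t)}^2]\le 3n\sigma^2+2nL^2\mathbb{E}[\norm{\Theta^{t+1}-\Theta^t}_F^2]$ and hence, after the $1/n$, a noise contribution of $3\sigma^2$ (not $3n\sigma^2$). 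Your route inflates the $\sigma^2$ coefficient by a factor of $n$: you would land on something of order $\frac{n}{\delta}\eta^2\sigma^2$ rather than the lemma's $\frac{24}{\delta}\eta^2\sigma^2$, which would in turn degrade the final rate. You also do not mention independence of the noise across agents — only independence across time — but the inter-agent independence is precisely what makes $\norm{{\bf 1}^\top(\text{noise})}^2$ scale like $n\sigma^2$ instead of $n^2\sigma^2$. The fix is local: bound $\mathbb{E}[\norm{{\bf 1}^\top(\stocgrdF^{t+1}-\stocgrdF^t)}^2]$ directly as the paper does, and the rest of your argument carries through unchanged.
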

The proofs are relegated to \Cref{app:lemgt}, \ref{app:pf_t_comm}, respectively. 
Combining the above lemmas and optimizing the bounds for step sizes lead to
\begin{lemma} \label{lem:combined}
    Under \Cref{ass:lips}, \ref{ass:mix}, \ref{ass:stoc}, \ref{ass:compress} and the step size conditions
    \begin{align*}
    & \eta \leq \frac{1}{L} \min \left\{ \frac{1}{4}, \frac{ \rho \gamma }{ 8 (1-\rho\gamma) \sqrt{1 + \gamma^2 \bw^2} }, \frac{ \rho^2 \gamma }{ 105 \bw} , \frac{ \rho^2 \delta \sqrt{1-\gamma} }{ 1303 \bw^2} , \frac{ \rho^{3/2} \delta \sqrt{1-\gamma} }{ 130 L \bw^{3/2} } \right\} \nl
    & \gamma \leq \min \left\{ \frac{1}{\rho}, \frac{\delta}{8\bw}, \frac{\sqrt{\delta} }{ 4 \bw \sqrt{ (1-\delta) (1 + \eta^2 L^2) (1 + 2/\delta)} }, \frac{ \rho \delta^{3/2} \sqrt{1-\delta/(8\bw)} }{ 88 \bw \sqrt{ \delta \bw^2 + \rho^2} }, \frac{ \rho \delta \sqrt{1-\delta/(8\bw)} }{ 123 \bw^2 }  \right\} , ~~ \eta^2 \gamma \leq \frac{\delta^2 \rho}{864 \bw^2 L^2}.
    \end{align*}
    Define the constants: 
    \[
    \ConstSS = \frac{192}{1-\gamma} \left[ \frac{ 2(1-\gamma) }{ \rho^3 \gamma^3 } + \frac{320 \bw^2}{\rho^3 \delta^2 \gamma } + \frac{15 \bw^2}{\rho^2 \delta^2 \gamma } \right],~~\ConstFF = \frac{12}{1-\gamma} \left[ \frac{ 1-\gamma }{ \rho \gamma } + \frac{256 \bw^2}{\rho \delta^2 } \gamma +  \frac{58 \bw^2 }{\rho \delta^2 } \right]
    \]
    Then, for any $t \geq 0$, it holds
    \begin{align}
    & \mathbb{E} \left[ \norm{\Theta_o^{t+1}}_F^2 + a \norm{ G_o^{t+1} }_F^2 + b \norm{ G^{t+1} - \hat{G}^{t+1} }_F^2 + c \norm{ \Theta^{t+1} - \eta G^{t+1} - \hat{\Theta}^{t+1} }_F^2 \right] \nl
    & \leq \left( 1 - \frac{ \min\{ \rho , \delta \} }{8} \gamma \right) \mathbb{E} \left[ \norm{\Theta_o^{t}}_F^2 + a \norm{ G_o^{t} }_F^2 + b \norm{ G^{t} - \hat{G}^{t} }_F^2 + c \norm{ \Theta^{t} - \eta G^{t} - \hat{\Theta}^{t} }_F^2 \right] \nl 
    & \quad + \eta^2 \ConstS n \sigma^2 + \eta^2 \ConstFF n \mathbb{E} \left[ \norm{ \nabla f( \bar{\theta}^t ) }^2 \right] \nonumber
    \end{align}
    where $a = \frac{48 \eta^2}{ \rho^2 \gamma^2}$, $b = \frac{1536 \bw^2 \eta^2}{\rho^3 \gamma \delta (1-\gamma)}$, $c = \frac{ 24 \bw^2 }{ \rho \delta (1-\gamma) }$.
\end{lemma}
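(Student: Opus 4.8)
The plan is to assemble a single Lyapunov recursion for
\[
\mathcal{L}^t := \norm{\Theta_o^t}_F^2 + a \norm{G_o^t}_F^2 + b \norm{G^t - \hatG^t}_F^2 + c \norm{\Theta^t - \eta G^t - \hatTheta^t}_F^2
\]
by taking the weighted sum, with weights $1,a,b,c$, of the four one-step bounds already in hand: \Cref{lem:theta_o_new} for $\norm{\Theta_o^{t+1}}_F^2$, \Cref{lemma:g_cons} for $\norm{G_o^{t+1}}_F^2$, \Cref{lem:gt} for $\norm{G^{t+1}-\hatG^{t+1}}_F^2$, and the \emph{second} (fully reduced) form of \Cref{lemma:theta_comm} for $\norm{\Theta^{t+1}-\eta G^{t+1}-\hatTheta^{t+1}}_F^2$; using the reduced form is essential, since it removes $\norm{G_o^{t+1}}_F^2$ from the right-hand side and so the recursion genuinely closes at iteration $t$. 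After collecting, the right-hand side is a nonnegative combination of $\norm{\Theta_o^t}_F^2$, $\norm{G_o^t}_F^2$, $\norm{G^t-\hatG^t}_F^2$, $\norm{\Theta^t-\eta G^t-\hatTheta^t}_F^2$, plus two forcing terms proportional to $n\sigma^2$ and $n\norm{\nabla f(\bar{\theta}^t)}^2$.

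First I would impose, for each of the four state quantities, that its aggregated coefficient be at most $(1-\tfrac{\min\{\rho,\delta\}}{8}\gamma)$ times the corresponding weight. The diagonal contraction rates are $\tfrac{\rho\gamma}{2}$ (\Cref{lem:theta_o_new}), $\tfrac{\rho\gamma}{4}$ (\Cref{lemma:g_cons}), and $\tfrac{\delta}{8}$ (both \Cref{lem:gt} and \Cref{lemma:theta_comm}); since $\tfrac{\min\{\rho,\delta\}}{8}\gamma\le\min\{\tfrac{\rho\gamma}{4},\tfrac{\delta}{8}\}$, each row leaves a slack of order $\rho\gamma$ (the $\Theta_o$ and $G_o$ rows) or $\delta(1-\gamma)$ (the $G-\hatG$ and $\Theta-\eta G-\hatTheta$ rows) with which to absorb the off-diagonal couplings. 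The binding requirements then reduce to: (i) the feedback into the $\norm{G_o^t}_F^2$ row (weight $\tfrac{2\eta^2}{\rho\gamma}$ from \Cref{lem:theta_o_new}, plus $b$- and $c$-weighted pieces from \Cref{lem:gt} and \Cref{lemma:theta_comm}) must be $\lesssim a\,\tfrac{\rho\gamma}{8}$, which forces $a=\tfrac{48\eta^2}{\rho^2\gamma^2}$; (ii) the feedback into the $\norm{G^t-\hatG^t}_F^2$ row (dominated by $a\,\tfrac{2\bw^2\gamma}{\rho}$) must be $\lesssim b\,\tfrac{\delta(1-\gamma)}{8}$, forcing $b=\tfrac{1536\bw^2\eta^2}{\rho^3\gamma\delta(1-\gamma)}$; (iii) the feedback of $\norm{\Theta^t-\eta G^t-\hatTheta^t}_F^2$ into the $\Theta_o$ row (weight $\tfrac{\bw^2\gamma}{\rho}$) must be $\lesssim c\,\tfrac{\delta(1-\gamma)}{8}$, forcing $c=\tfrac{24\bw^2}{\rho\delta(1-\gamma)}$; and (iv) the feedback of $\norm{\Theta_o^t}_F^2$ into the remaining three rows must stay below $\tfrac{\rho\gamma}{4}$, which is exactly what the conditions $\eta\lesssim\rho^2\gamma/(\bw L)$, $\eta\lesssim\rho^{3/2}\delta\sqrt{1-\gamma}/(L\bw^{3/2})$ and $\gamma^2/(1-\gamma)\lesssim\rho^2\delta^2/\bw^4$ deliver.

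With $a,b,c$ pinned down I would then verify, one coefficient inequality at a time, that the step-size bounds displayed in the statement are precisely what makes each of the four hold: after substituting the explicit $a,b,c$ and using $\bw\in[1,2]$, each reduces to a clean bound on $\gamma$, on $\eta/\gamma$, on $\eta^2\gamma$ (the role of $\eta^2\gamma\le\tfrac{\delta^2\rho}{864\bw^2L^2}$), or on $\gamma^2/(1-\gamma)$. Finally I would substitute $a,b,c$ into the two forcing coefficients: the $n\sigma^2$ coefficient is the sum of $a\cdot\tfrac{7n}{\rho\gamma}$ (\Cref{lemma:g_cons}), $b\cdot\tfrac{35n}{\delta}$ (\Cref{lem:gt}) and $c\cdot\tfrac{24\eta^2}{\delta}(1+\tfrac{4n}{\rho\gamma})$ (\Cref{lemma:theta_comm}) together with lower-order pieces, which after inserting $a,b,c$ is bounded by $\eta^2\ConstSS\,n\sigma^2$; and the $n\norm{\nabla f(\bar{\theta}^t)}^2$ coefficient, assembled from $a\cdot\tfrac{\rho\gamma}{4}$, $b\cdot\tfrac{5\rho^2\gamma^2}{4\delta}$ and $c\cdot\tfrac{29\eta^2}{\delta}$, is bounded by $\eta^2\ConstFF\,n$.

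The main obstacle is the circular coupling among the four state quantities: $\norm{\Theta_o^t}_F^2$ feeds $\norm{G_o^t}_F^2$ and $\norm{\Theta^t-\eta G^t-\hatTheta^t}_F^2$, which feed $\norm{G^t-\hatG^t}_F^2$ and feed back into $\norm{\Theta_o^t}_F^2$, so the weights must be chosen in the right order ($c$ relative to the $\Theta_o$ row, then $a$ relative to $c$ and the $\Theta_o$ row, then $b$ relative to $a$ and $c$), and one must confirm that the resulting simultaneous system of four inequalities is satisfied by a \emph{single} common step-size regime rather than by mutually incompatible ones. Keeping the numerical constants tight enough that the stated $\gamma_\infty$-type bounds (and no larger) suffice is the delicate part of the bookkeeping; everything else is routine expansion.
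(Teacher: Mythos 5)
Your proposal matches the paper's proof in all essential respects: you form the same weighted Lyapunov recursion with weights $1,a,b,c$ from Lemmas~\ref{lem:theta_o_new}, \ref{lemma:g_cons}, \ref{lem:gt}, and the fully reduced form of \Cref{lemma:theta_comm}, impose the same $\bigl(1-\tfrac{\min\{\rho,\delta\}}{8}\gamma\bigr)$ contraction on each of the four rows, extract exactly the stated $a,b,c$ from the binding off-diagonal constraints, and then read off the $n\sigma^2$ and $n\norm{\nabla f(\bar\theta^t)}^2$ forcing coefficients from the same per-lemma contributions ($a\cdot\tfrac{7n}{\rho\gamma}$, $b\cdot\tfrac{35n}{\delta}$, $c\cdot\tfrac{24\eta^2}{\delta}(1+\tfrac{4n}{\rho\gamma})$, and $a\cdot\tfrac{\rho\gamma}{4}$, $b\cdot\tfrac{5\rho^2\gamma^2}{4\delta}$, $c\cdot\tfrac{29\eta^2}{\delta}$). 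The only cosmetic difference is the order in which the weights are pinned down (you propose $c\to a\to b$, the paper fixes $a$ first, then refines $b$ and $c$ together after imposing $c\le 32/(\rho^2\gamma)$); the resulting system of inequalities and step-size regime are identical.
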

    The proof is relegated to \Cref{app:pf_combined}.

    To simplify notations, we let $\bbrho := \min\{ \rho, \delta \}/8$. Using $\mathbb{E} [ \norm{G_o^0}_F^2 ] = n \sigma^2$ and \Cref{lem:combined} imply that 
    \begin{align}
        \mathbb{E} \left[ \norm{ \Theta_o^t }_F^2 \right] & \leq a \left( 1 - \bbrho \gamma \right)^t n \sigma^2 + \eta^2 \sum_{s=0}^{t-1} (1 - \bbrho \gamma)^{t-s-1}\big\{  \ConstSS n \sigma^2 + \ConstFF n \mathbb{E} \left[ \norm{ \nabla f( \bar{\theta}^s ) }^2 \right] \big\} \nl
        & \leq \ConstSS \frac{ \eta^2 n \sigma^2 }{ \bbrho \gamma } + \eta^2 n \ConstFF \sum_{s=0}^{t-1} (1 - \bbrho \gamma)^{t-s-1} \mathbb{E} \left[ \norm{ \nabla f( \bar{\theta}^s ) }^2 \right], \label{eq:thetao_fin}
    \end{align}
    where the inequality is due to the fact that $\ConstSS \eta^2 \geq a$. 
    We now observe that \eqref{eq:onestep} implies
    \begin{align*}
        \frac{\eta}{4 T} \sum_{t=0}^{T-1} \mathbb{E} \left[ \norm{ \nabla f( \avgtheta^t)}^2 \right] & \leq \mathbb{E} \left[ \frac{f( \avgtheta^0 ) - f( \avgtheta^{T})}{T} \right] + \eta^2 \frac{\sigma^2 L}{2n} \left( 1 + \ConstSS \frac{3Ln}{2 \bbrho \gamma} \eta \right) \\
        & \quad + \eta^3 \ConstFF \frac{3L^2}{4} \frac{1}{T} \sum_{t=0}^{T-1} \sum_{s=0}^{t-1} (1 - \bbrho \gamma)^{t-s-1} \mathbb{E} \left[ \norm{ \nabla f( \bar{\theta}^s ) }^2 \right] \\
        & \leq \mathbb{E} \left[ \frac{f( \avgtheta^0 ) - f( \avgtheta^{T})}{T} \right] + \eta^2 \frac{\sigma^2 L}{2n} \left( 1 + \ConstSS \frac{3Ln}{2 \bbrho \gamma} \eta \right) \\
        & \quad + \eta^3 \ConstFF \frac{3L^2}{4 \bbrho \gamma } \frac{1}{T} \sum_{s=0}^{T-2} \mathbb{E} \left[ \norm{ \nabla f( \bar{\theta}^s ) }^2 \right]
    \end{align*}
    Therefore, under the additional step size condition $\eta \leq \sqrt{\frac{ \bbrho \gamma}{6 \ConstFF L^2}}$, it holds
    \begin{align}
        \frac{1}{ T} \sum_{t=0}^{T-1} \mathbb{E} \left[ \norm{ \nabla f( \avgtheta^t)}^2 \right] & \leq \frac{8}{\eta} \mathbb{E} \left[ \frac{f( \avgtheta^0 ) - f( \avgtheta^{T})}{T} \right] + \eta \frac{4 \sigma^2 L}{n} \left( 1 + \ConstSS \frac{3Ln}{2 \bbrho \gamma} \eta \right) \label{eq:docom_beta}
    \end{align}
    Setting $\eta = {\cal O}(1 / \sqrt{T})$ shows the expected convergence rate of $\frac{1}{ T} \sum_{t=0}^{T-1} \mathbb{E} \left[ \norm{ \nabla f( \avgtheta^t)}^2 \right] = {\cal O}( 1 / \sqrt{T} )$. We remark that similar result to \Cref{cor:pl} can be established under the PL condition \Cref{ass:pl}.

\subsection{Proof of \Cref{lem:onestep}} \label{app:pf_onestep}
Using the $L$-smoothness of $f$, we obtain:
\begin{align}
    f(\bar{\theta}^{t+1}) &\le f(\bar{\theta}^t) + \dotp{\nabla f(\bar{\theta}^t)}{\bar{\theta}^{t+1} - \bar{\theta}^t} + \frac{L}{2} \norm{\avgtheta^{t+1} - \avgtheta^t}^2 \nonumber\\
    &= f(\avgtheta^t) - \eta  \norm{\nabla f(\avgtheta^t)}^2 - \eta \dotp{\nabla f(\avgtheta^t)}{\avgg^t -  \nabla f(\avgtheta^t)} + \frac{L\eta^2}{2} \norm{\avgg^t}^2 \label{eq:main}
\end{align}
Taking the conditional expectation on the inner product:
\begin{align*}
    &-\eta \mathbb{E}_t\dotp{\nabla f(\avgtheta^t)}{\avgg^t -  \nabla f(\avgtheta^t)} \\
    &= -\eta \dotp{\nabla f(\avgtheta^t)}{\frac{1}{n} \sum^n_{i=1}  \nabla f_i(\theta^t_i)  -  \nabla f(\avgtheta^t)} \\
    &\leq \frac{\eta }{2}\norm{\nabla f(\avgtheta^t)}^2 + \frac{\eta }{2 } \norm{\frac{1}{n} \sum^n_{i=1}\left\{  \nabla f_i(\theta^t_i) - \nabla f_i(\avgtheta^t)\right\}}^2 \\
    &\leq \frac{\eta }{2}\norm{\nabla f(\avgtheta^t)}^2 + \frac{\eta }{2n}
            \sum^n_{i=1}\norm{\nabla f_i(\theta^t_i) - \nabla f_i(\avgtheta^t)}^2  \\
    &\leq \frac{\eta }{2}\norm{\nabla f(\avgtheta^t)}^2 + \frac{\eta }{2n}
            \sum^n_{i=1}L^2\norm{\theta^t_i - \avgtheta^t}^2  \\
    &=\frac{\eta }{2}\norm{\nabla f(\avgtheta^t)}^2 + \frac{\eta  L^2}{2n} \norm{\Theta^t - \bar{\Theta}^t}_F^2
\end{align*}
where we denote $\bar{\Theta}^t = \mathbf{1} (\avgtheta^t)^\top = \avgone \Theta^t$. \\ \\
Putting back into \eqref{eq:main} yields
\begin{align}
    \mathbb{E}_t [f(\avgtheta^{t+1})] &\leq f(\avgtheta^t) - \frac{\eta }{2}\norm{\nabla f(\avgtheta^t)}^2 + \frac{L\eta^2}{2}\mathbb{E}_t\norm{\avgg^t}^2 + \frac{\eta  L^2}{2n} \norm{\Theta^t - \bar{\Theta}^t}_F^2 \label{eq:main2}
\end{align}
Observe that as $\mathbb{E}_t \norm{ \avgg^t }^2 = \frac{1}{n^2} \mathbb{E}_t \norm{\mathbf{1}^\top G^t}^2$, we have
\begin{align}
    &\mathbb{E}_t \norm{\mathbf{1}^\top G^t}_F^2 = \mathbb{E}_t \norm{\sum^n_{i=1} \nabla f_i(\theta^t_i ; \zeta^{t+1}_i)}^2 = \mathbb{E}_t \norm{\sum^n_{i=1} \nabla f_i(\theta^{t}_i ; \zeta^{t+1}_i) - \sum^n_{i=1} \nabla f_i(\theta^t_i)}^2 + \norm{\sum^n_{i=1} \nabla f_i(\theta^t_i)}^2 
    \nonumber\\
    & \leq n\sigma^2 + 2 \norm{\sum^n_{i=1} \left\{\nabla f_i(\theta^t_i) - \nabla f_i(\avgtheta^t) \right\}}^2 + 2 \norm{\sum^n_{i=1} \nabla f_i(\avgtheta^t)}^2 \nonumber\\
    & \leq
    n\sigma^2 + 2nL^2 \norm{\Theta^t - \bar{\Theta}^t}_F^2 + 2n^2 \norm{\nabla f(\avgtheta^t)}^2. \label{eq:avgg_norm}
\end{align}
Putting back into \eqref{eq:main2}, and assuming that $-(\frac{1}{2}-L\eta )\leq - \frac{1}{4} \Leftrightarrow \eta \leq \frac{1}{4L}$,
\begin{align}
    \mathbb{E}_t [f(\avgtheta^{t+1})] &\leq f(\avgtheta^t) - \frac{\eta }{2} \norm{\nabla f(\avgtheta^t)}^2 + \frac{L\eta^2 }{2n^2}\left( n \sigma^2 + 2 n L^2 \norm{\Theta^t - \bar{\Theta}^t}_F^2 + 2 n^2 \norm{\nabla f(\avgtheta^t)}^2\right) + \frac{\eta  L^2}{2n} \norm{\Theta^t - \bar{\Theta}^t}_F^2 \nonumber \\
    &= f(\avgtheta^t) -\eta (\frac{1}{2}-L\eta )\norm{\nabla f(\avgtheta^t)}^2 + \left(\frac{L^3\eta^2 }{n} + \frac{L^2 \eta }{2n}\right)\norm{\Theta^t - \bar{\Theta}^t}_F^2 + \frac{ L \eta^2  \sigma^2}{2n} 
    \nonumber\\ &\stackrel{(\eta \leq\frac{1}{4L})}{\leq} f(\avgtheta^t) -\frac{\eta }{4}\norm{\nabla f(\avgtheta^t)}^2 + \frac{3L^2\eta }{4n}\norm{\Theta^t - \bar{\Theta}^t}_F^2 + \frac{ L \eta^2  \sigma^2}{2n} \label{eq:main3}
\end{align}
The proof is completed.

\subsection{Proof of Lemma \ref{lemma:g_cons}} \label{app:pf_g_consensus}
We preface the proof by stating two lemmas that will be instrumental to the proof of \Cref{lemma:g_cons}. Their proofs can be found in the later part of this subsection. 
\begin{lemma} \label{lem:stoc}
    Under \Cref{ass:stoc}. For any $t \geq 0$, it holds:
\begin{equation}
\mathbb{E} \left[ \norm{ \grdSF^{t+1} - \grdSF^t }_F^2 \right] \leq 2 L^2 \mathbb{E} \left[ \norm{ \Theta^{t+1} - \Theta^t }_F^2 \right] + 3 n \sigma^2.
\end{equation}
\end{lemma}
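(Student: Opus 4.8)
The plan is to treat $\grdSF^{t+1}-\grdSF^t$ row by row, using $\norm{\grdSF^{t+1}-\grdSF^t}_F^2=\sum_{i=1}^n\norm{\nabla f_i(\theta_i^{t+1};\zeta_i^{t+1})-\nabla f_i(\theta_i^t;\zeta_i^t)}^2$, and to insert the \emph{deterministic} gradient $\nabla f_i(\theta_i^{t+1})$ at the updated iterate as a pivot. For each $i$ I would write the $i$th row as $\epsilon_i^{t+1}+\Delta_i^t$, where $\epsilon_i^{t+1}:=\nabla f_i(\theta_i^{t+1};\zeta_i^{t+1})-\nabla f_i(\theta_i^{t+1})$ is the stochastic-gradient noise evaluated at $\theta_i^{t+1}$, and $\Delta_i^t:=\nabla f_i(\theta_i^{t+1})-\nabla f_i(\theta_i^t;\zeta_i^t)$.

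First I would exploit that in \aname~the fresh sample $\zeta_i^{t+1}$ is drawn only \emph{after} $\Theta^{t+1}$ has been formed. Letting $\mathcal{F}_t$ be the $\sigma$-field of all randomness generated up to and including the computation of $\Theta^{t+1}$, the quantity $\Delta_i^t$ is $\mathcal{F}_t$-measurable, whereas by \Cref{ass:stoc} we have $\mathbb{E}[\epsilon_i^{t+1}\mid\mathcal{F}_t]=0$ and $\mathbb{E}[\norm{\epsilon_i^{t+1}}^2\mid\mathcal{F}_t]\le\sigma^2$. Hence the cross term $\langle\epsilon_i^{t+1},\Delta_i^t\rangle$ has zero conditional mean and
\begin{align*}
\mathbb{E}\big[\norm{\epsilon_i^{t+1}+\Delta_i^t}^2\mid\mathcal{F}_t\big]=\mathbb{E}\big[\norm{\epsilon_i^{t+1}}^2\mid\mathcal{F}_t\big]+\norm{\Delta_i^t}^2\le\sigma^2+\norm{\Delta_i^t}^2 .
\end{align*}
This orthogonal split — rather than a triangle/Young inequality applied to all three pieces simultaneously — is what keeps the noise term at $3n\sigma^2$ and not larger.

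Next I would bound $\mathbb{E}[\norm{\Delta_i^t}^2]$. Writing $\Delta_i^t=\big(\nabla f_i(\theta_i^{t+1})-\nabla f_i(\theta_i^t)\big)-\big(\nabla f_i(\theta_i^t;\zeta_i^t)-\nabla f_i(\theta_i^t)\big)$ and using $\norm{a-b}^2\le2\norm a^2+2\norm b^2$, the first bracket is at most $2L^2\norm{\theta_i^{t+1}-\theta_i^t}^2$ by the $L$-Lipschitz continuity of $\nabla f_i$ (a consequence of \Cref{ass:lips}), while the second bracket has second moment at most $\sigma^2$ by \Cref{ass:stoc}, since $\zeta_i^t$ is drawn after $\theta_i^t$ is fixed and is therefore independent of it. Taking total expectations, summing over $i$, and using $\sum_{i=1}^n\norm{\theta_i^{t+1}-\theta_i^t}^2=\norm{\Theta^{t+1}-\Theta^t}_F^2$ then yields the claimed bound $\mathbb{E}[\norm{\grdSF^{t+1}-\grdSF^t}_F^2]\le2L^2\,\mathbb{E}[\norm{\Theta^{t+1}-\Theta^t}_F^2]+3n\sigma^2$.

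The argument is short; the only point that requires care — the main obstacle, such as it is — is the measurability bookkeeping. One must choose $\mathcal{F}_t$ so that every $\theta_i^{t+1}$ is $\mathcal{F}_t$-measurable while every $\zeta_i^{t+1}$ is fresh, and one must observe that $\theta_i^{t+1}$ does depend on the \emph{previous} samples (in particular on $\zeta_i^t$ through $g_i^t$), so the two brackets making up $\Delta_i^t$ are correlated and cannot be decoupled — the factor-$2$ step there is genuinely needed, and it is this term (not $\epsilon_i^{t+1}$) that accounts for the extra $2n\sigma^2$ on top of the single $n\sigma^2$ coming from the fresh-sample noise.
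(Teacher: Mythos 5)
Your proof is correct, and it reaches the paper's bound by a decomposition that is organized slightly differently from, but equivalent in substance to, the one used in the paper. The paper splits $\grdSF^{t+1}-\grdSF^t$ into a ``noise difference'' $(\grdSF^{t+1}-\grdF^{t+1})-(\grdSF^t-\grdF^t)$ plus a deterministic part $\grdF^{t+1}-\grdF^t$, expands the square, kills the two cross terms that involve the fresh noise $\grdSF^{t+1}-\grdF^{t+1}$ via the tower property, and applies Young's inequality once to the surviving cross term $-2\dotp{\grdSF^t-\grdF^t}{\grdF^{t+1}-\grdF^t}$. You instead split off only the fresh noise $\epsilon^{t+1}$ as a pivot, use its conditional-mean-zero property to get $\mathbb{E}\|\epsilon^{t+1}+\Delta^t\|^2 = \mathbb{E}\|\epsilon^{t+1}\|^2 + \mathbb{E}\|\Delta^t\|^2$, and then apply Young within $\Delta^t$. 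Your single ``orthogonality'' step is exactly the paper's two martingale cancellations combined (since $\Delta^t = (\grdF^{t+1}-\grdF^t) - (\grdSF^t - \grdF^t)$), and your Young step plays the same role as the paper's; the $\sigma^2$ and $L^2$ contributions are simply allocated to the pieces in different proportions ($\sigma^2 + (2\sigma^2 + 2L^2\|\cdot\|^2)$ for you versus $2\sigma^2 + L^2\|\cdot\|^2 + (\sigma^2 + L^2\|\cdot\|^2)$ for the paper), landing at the identical total $3n\sigma^2 + 2L^2\mathbb{E}[\|\Theta^{t+1}-\Theta^t\|_F^2]$. You also correctly observe that the two pieces of $\Delta_i^t$ are genuinely correlated because $\theta_i^{t+1}$ depends on $\zeta_i^t$, so the factor-$2$ Young step cannot be sharpened to another orthogonal split — this is precisely why neither proof gets better than $3n\sigma^2$ without further assumptions.
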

\begin{lemma} \label{lem:succ_diff}
Under \Cref{ass:lips}, \ref{ass:mix}, \ref{ass:stoc}, \ref{ass:compress}. For any $t \geq 0$, it holds
    \begin{align}
        \mathbb{E}_t \left[ \norm{\Theta^{t+1} - \Theta^t}_F^2 \right] &\leq 4 \eta^2(1-\rho\gamma)^2\norm{G_o^t}_F^2 + 4 \left( {\eta^2(1-\rho\gamma)^2L^2}  + \bw^2 \gamma^2 \right) \norm{\Theta_o^t}_F^2 + 2 \bw^2 \gamma^2(1-\delta)\norm{\Theta^t - \eta G^t - \hat{\Theta}^t}_F^2 \nonumber
        \\ & \quad + {2 \eta^2(1-\rho\gamma)^2\sigma^2} + 4 n \eta^2(1-\rho\gamma)^2 \norm{\nabla f(\avgtheta^t)}^2 .
        \label{neq:Theta_diff}
    \end{align}
\end{lemma}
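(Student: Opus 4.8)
The plan is to follow the proof of \Cref{lem:theta_1stepnew} up to a bound on $\norm{\Theta^{t+1}-\Theta^t}_F^2$ in terms of $\norm{G_o^t}_F^2$, $\norm{\Theta_o^t}_F^2$, $\norm{\Theta^t-\eta G^t-\hatTheta^t}_F^2$ and $\norm{\avgg^t}^2$, and then to substitute a second-moment estimate for $\avgg^t$ that is special to the $\beta=1$ recursion. Since the $\Theta$-block of the $\beta=1$ dynamics is identical to that of \eqref{eq:algo}, using $\hatTheta^{t+1}=\hatTheta^t+{\cal Q}(\Theta^t-\eta G^t-\hatTheta^t)$ and adding/subtracting $\hatTheta^t$ and $\Theta^t-\eta G^t-\hatTheta^t$ gives
\[
\Theta^{t+1}-\Theta^t=(\I+\gamma(\W-\I))(-\eta G^t)+\gamma(\W-\I)\Theta^t+\gamma(\W-\I)\big[{\cal Q}(\Theta^t-\eta G^t-\hatTheta^t)-(\Theta^t-\eta G^t-\hatTheta^t)\big].
\]
I would then apply $\norm{X+Y}_F^2\le 2\norm{X}_F^2+2\norm{Y}_F^2$, take $\mathbb{E}_t[\cdot]$, bound the compression-error term via $\norm{\W-\I}\le\bw$ and \Cref{ass:compress} (contributing at most $\bw^2\gamma^2(1-\delta)\norm{\Theta^t-\eta G^t-\hatTheta^t}_F^2$), and treat $\norm{(\I+\gamma(\W-\I))(-\eta G^t)+\gamma(\W-\I)\Theta^t}_F^2$ by splitting $G^t={\bf 1}(\avgg^t)^\top+\U G_o^t$: the mean part passes through $\I+\gamma(\W-\I)$ unchanged and is column-wise orthogonal to $\gamma(\W-\I)\Theta^t$, so it decouples and contributes $\eta^2 n\norm{\avgg^t}^2$, while the consensus part is controlled by the mixing contraction $\norm{\U^\top(\I+\gamma(\W-\I))\U}\le 1-\rho\gamma$ of \Cref{ass:mix} together with $\norm{(\W-\I)\Theta^t}_F\le\bw\norm{\Theta_o^t}_F$ and Young's inequality on the cross term. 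This reproduces the conditional form of \Cref{lem:theta_1stepnew}:
\[
\mathbb{E}_t[\norm{\Theta^{t+1}-\Theta^t}_F^2]\le 4\eta^2(1-\rho\gamma)^2\norm{G_o^t}_F^2+2n\eta^2(1-\rho\gamma)^2\,\mathbb{E}_t[\norm{\avgg^t}^2]+4\bw^2\gamma^2\norm{\Theta_o^t}_F^2+2\bw^2\gamma^2(1-\delta)\norm{\Theta^t-\eta G^t-\hatTheta^t}_F^2 .
\]

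The one new step is the bound on $\mathbb{E}_t[\norm{\avgg^t}^2]$. I would use the fact that for $\beta=1$ the mean of the gradient estimator telescopes: applying ${\bf 1}^\top$ to the $G$-recursion kills the mixing term $\gamma(\W-\I)$, and with the initialization this gives ${\bf 1}^\top G^t={\bf 1}^\top\stocgrdF^t$, i.e.\ $\avgg^t=\avgstocgrdF^t$. Then the computation already carried out in \eqref{eq:avgg_norm} applies verbatim: unbiasedness and the variance bound of \Cref{ass:stoc}, the split $\norm{\sum_i\nabla f_i(\theta_i^t)}^2\le 2\norm{\sum_i(\nabla f_i(\theta_i^t)-\nabla f_i(\avgtheta^t))}^2+2\norm{\sum_i\nabla f_i(\avgtheta^t)}^2$, and the $L$-Lipschitz bound of \Cref{ass:lips} give $\mathbb{E}_t[\norm{\avgg^t}^2]\le \frac{\sigma^2}{n}+\frac{2L^2}{n}\norm{\Theta_o^t}_F^2+2\norm{\nabla f(\avgtheta^t)}^2$. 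Plugging this into the display above and merging the two $\norm{\Theta_o^t}_F^2$ coefficients into $4(\eta^2(1-\rho\gamma)^2L^2+\bw^2\gamma^2)$ produces exactly the asserted inequality.

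I expect the main obstacle to be bookkeeping rather than conceptual: keeping the numerical constants aligned through the two applications of Young's inequality and the final regrouping, and stating the probabilistic convention consistently — namely that $\mathbb{E}_t$ is taken so the gradient sample defining $G^t$ is treated as fresh (so that the $\sigma^2$ and $\norm{\nabla f(\avgtheta^t)}^2$ terms arise), in line with the use of $\mathbb{E}_t$ around \eqref{eq:avgg_norm}, while the compression error is still controlled through \Cref{ass:compress} and the tower property. No genuinely new idea beyond those already used for \Cref{lem:theta_1stepnew} and \Cref{lem:onestep} should be required.
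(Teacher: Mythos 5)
Your proof takes essentially the same route as the paper: you first establish the generic one-step bound on $\mathbb{E}_t[\norm{\Theta^{t+1}-\Theta^t}_F^2]$ in terms of $\norm{G_o^t}_F^2$, $\norm{\Theta_o^t}_F^2$, the compression error, and $\norm{\avgg^t}^2$ (the paper carries this out inline as \eqref{int:Theta_diff_a} rather than quoting \Cref{lem:theta_1stepnew}, but the content is identical), and then you specialize $\mathbb{E}_t[\norm{\avgg^t}^2]$ to the $\beta=1$ case exactly as the paper does via \eqref{eq:avgg_norm}, using $\avgg^t=\avgstocgrdF^t$, \Cref{ass:stoc} and \Cref{ass:lips}. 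One bookkeeping nit, not a gap: your orthogonal-decomposition sketch correctly yields a contribution of $\eta^2 n\norm{\avgg^t}^2$ from the mean block (no $(1-\rho\gamma)^2$ factor, since $(\I+\gamma(\W-\I))\mathbf{1}=\mathbf{1}$), yet the intermediate display you then write has $2n\eta^2(1-\rho\gamma)^2\mathbb{E}_t[\norm{\avgg^t}^2]$; following your own decomposition to the end gives a valid but slightly weaker lemma with $\eta^2$ in place of $\eta^2(1-\rho\gamma)^2$ on the $\sigma^2$, $L^2\norm{\Theta_o^t}_F^2$ and $n\norm{\nabla f(\avgtheta^t)}^2$ terms. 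To land on the exact constants stated, simply invoke \Cref{lem:theta_1stepnew} as a black box, since that is where the paper's operator-norm bound $\norm{(\I+\gamma(\W-\I))G^t}_F\le(1-\rho\gamma)\norm{G^t}_F$ (used in \eqref{int:Theta_diff_a}) introduces the $(1-\rho\gamma)^2$ factor on the mean part.
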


\begin{proof}[Proof of \Cref{lemma:g_cons}] 
We begin by observing the update for $G_o^{t+1}$ as:
\begin{align*}
G^{t+1}_o &= \U^\top[G^t +  \grdSF^{t+1} -  \grdSF^t +\gamma(\W-\I)\hat{G}^{t+1}] \\
& = \U^\top \left[ G^t +  \grdSF^{t+1} -  \grdSF^t +\gamma(\W-\I) ( \hat{G}^t + {\cal Q}( G^t + \grdSF^{t+1} - \grdSF^t - \hat{G}^t ) ) \right] \\
& = \U^\top \left[ ( I + \gamma(\W-\I) ) ( G^t +  \grdSF^{t+1} -  \grdSF^t )  \right] \\
& \quad + \gamma \U^\top (\W-\I) \left[ {\cal Q}( G^t + \grdSF^{t+1} - \grdSF^t - \hat{G}^t ) - (G^t + \grdSF^{t+1} - \grdSF^t - \hat{G}^t) \right] 
\end{align*}
The above implies that 
\begin{align*}
\mathbb{E}_t \left[ \norm{G_o^{t+1}}_F^2 \right] & \leq (1 + \alpha_0) ( 1 - \rho \gamma )^2 \mathbb{E}_t \left[ \norm{ G_o^t + U^\top ( \grdSF^{t+1} - \grdSF^t ) }_F^2 \right] \\ 
& \quad + (1 + \alpha_0^{-1}) \gamma^2 \bw^2 (1 - \delta) \mathbb{E}_t \left[ \norm{ G^t + \grdSF^{t+1} - \grdSF^t - \hat{G}^t }_F^2 \right] \\
& \leq (1 + \alpha_0) ( 1 - \rho \gamma )^2 \mathbb{E}_t \left[ (1 + \alpha_1) \norm{ G_o^t }_F^2 + (1 + \alpha_1^{-1}) \norm{ \grdSF^{t+1} - \grdSF^t }_F^2 \right] \nl 
& \quad + 2 (1 + \alpha_0^{-1}) \gamma^2 \bw^2 (1 - \delta) \mathbb{E}_t \left[ \norm{ G^t - \hat{G}^t }_F^2 + \norm{ \grdSF^{t+1} - \grdSF^t }_F^2 \right]
\end{align*}
Taking $\alpha_0 = \frac{ \rho \gamma }{ 1 - \rho \gamma }$, $\alpha_1 = \frac{\rho \gamma}{2}$ gives
\begin{align*}
\mathbb{E}_t \left[ \norm{G_o^{t+1}}_F^2 \right] & \leq \left( 1 - \frac{ \rho \gamma }{2} \right) \norm{ G_o^t }_F^2 + \gamma \frac{ 2 \bw^2}{ \rho } \norm{ G^t - \hat{G}^t }_F^2 \nl 
& \quad + \frac{2}{\rho \gamma} \left( 1 + \gamma^2 \bw^2 \right) \mathbb{E}_t \left[ \norm{ \grdSF^{t+1} - \grdSF^t }_F^2 \right]
\end{align*}
Taking the full expectation and applying \Cref{lem:stoc} give
\begin{align*}
\mathbb{E} \left[ \norm{G_o^{t+1}}_F^2 \right] & \leq \left( 1 - \frac{ \rho \gamma }{2} \right) \mathbb{E} \left[ \norm{ G_o^t }_F^2 \right] + \gamma \frac{ 2 \bw^2}{ \rho } \mathbb{E} \left[ \norm{ G^t - \hat{G}^t }_F^2 \right] \nl 
& \quad + \frac{2}{\rho \gamma} \left( 1 + \gamma^2 \bw^2 \right) \left( 3 n \sigma^2 + 2L^2 \mathbb{E} \left[ \norm{ \Theta^{t+1} - \Theta^t }_F^2 \right] \right)
\end{align*}
Furthermore, applying \Cref{lem:succ_diff} yields
\begin{align*}
\mathbb{E} \left[ \norm{G_o^{t+1}}_F^2 \right] & \leq \left( 1 - \frac{ \rho \gamma }{2} + \eta^2 \frac{16L^2 (1 + \gamma^2 \bw^2) ( 1 - \rho \gamma)^2 }{ \rho \gamma } \right) \mathbb{E} \left[ \norm{ G_o^t }_F^2 \right] + \gamma \frac{ 2 \bw^2}{ \rho } \mathbb{E} \left[ \norm{ G^t - \hat{G}^t }_F^2 \right] \nl 
& \quad + \frac{16 L^2 (1 + \gamma^2 \bw^2)}{\rho \gamma} ( \eta^2 L^2 (1 - \rho\gamma)^2 + \bw^2 \gamma^2 ) \, \mathbb{E} \left[ \norm{ \Theta_o^t}_F^2 \right] \nl 
& \quad + \frac{8 L^2(1 + \gamma^2 \bw^2)}{ \rho \gamma } \bw^2 \gamma^2 (1 - \delta) \mathbb{E} \left[ \norm{ \Theta^t - \eta G^t - \hat{\Theta}^t }_F^2 \right] \nl 
& \quad + \frac{16 L^2 (1 + \gamma^2 \bw^2)}{\rho \gamma} (1 - \rho \gamma)^2 n \, \eta^2 \, \mathbb{E} \left[ \norm{ \nabla f( \avgtheta^t ) }^2 \right] \nl 
& \quad + \left( \frac{6}{\rho \gamma} \left( 1 + \gamma^2 \bw^2 \right) n + \frac{8}{\rho \gamma} \eta^2 (1 + \gamma^2 \bw^2) L^2 (1 - \rho \gamma)^2 \right) \sigma^2 
\end{align*}
Using the step size condition 
\begin{equation} \label{eq:eta_gcond_b}
\eta \leq \frac{ \rho \gamma }{ 8L (1-\rho\gamma) \sqrt{1 + \gamma^2 \bw^2} }, \quad \gamma \leq \frac{1}{8 \bw}
\end{equation}
implies that 
\begin{align*}
\mathbb{E} \left[ \norm{G_o^{t+1}}_F^2 \right] & \leq \left( 1 - \frac{ \rho \gamma }{4} \right) \mathbb{E} \left[ \norm{ G_o^t }_F^2 \right] + \gamma \frac{ 2 \bw^2}{ \rho } \mathbb{E} \left[ \norm{ G^t - \hat{G}^t }_F^2 \right] + \gamma \, \left( \frac{\rho}{4} + \frac{18 L^2 \bw^2}{\rho} \right) \, \mathbb{E} \left[ \norm{ \Theta_o^t}_F^2 \right] \nl 
& \quad + \gamma \, \frac{9 L^2}{ \rho } \bw^2 (1 - \delta) \mathbb{E} \left[ \norm{ \Theta^t - \eta G^t - \hat{\Theta}^t }_F^2 \right] + \gamma \, \frac{ \rho n}{4} \, \mathbb{E} \left[ \norm{ \nabla f( \avgtheta^t ) }^2 \right] + \left( \frac{7n }{\rho \gamma} + \frac{\rho \gamma}{8} \right) \sigma^2 .
\end{align*}
This concludes our proof.
\end{proof}

\subsubsection{Proofs for the Auxilliary Lemmas}

\paragraph{Proof of \Cref{lem:stoc}} Observe that:
\begin{align*}
\mathbb{E} \left[ \norm{ \grdSF^{t+1} - \grdSF^t }_F^2 \right] & = \mathbb{E} \left[ \norm{ (\grdSF^{t+1} - \grdF^{t+1}) - (\grdSF^t - \grdF^t) }_F^2 + \norm{ \grdF^{t+1} - \grdF^t }_F^2 \right] \nl 
& \quad + 2 \mathbb{E}\left[ \dotp{ (\grdSF^{t+1} - \grdF^{t+1}) - (\grdSF^t - \grdF^t) }{ \grdF^{t+1} - \grdF^t } \right] \nonumber
\end{align*}
Notice that 
\[ 
\mathbb{E}\left[ \dotp{ (\grdSF^{t+1} - \grdF^{t+1}) - (\grdSF^t - \grdF^t) }{ \grdF^{t+1} - \grdF^t } \right] 
= - \mathbb{E}\left[ \dotp{ \grdSF^t - \grdF^t }{ \grdF^{t+1} - \grdF^t } \right] 
\] 
As such,
\begin{align*}
\mathbb{E} \left[ \norm{ \grdSF^{t+1} - \grdSF^t }_F^2 \right] & \leq 3 n \sigma^2 + 2 L^2 \mathbb{E} \left[ \norm{ \Theta^{t+1} - \Theta^t }_F^2 \right].
\end{align*}
This concludes the proof.\hfill $\square$

\paragraph{Proof of \Cref{lem:succ_diff}}
    Observe that:
\begin{align}
    & \norm{\Theta^{t+1} - \Theta^t}_F^2 = \norm{ \eta G^t - \gamma(\W-\I) \hat{\Theta}^{t+1} }_F^2 \nonumber \\
    & = \norm{(I + \gamma (\W-\I)) (- \eta G^t) + \gamma(\W-\I)\Theta ^t + \gamma(\W-\I) \left[ {\cal Q}(\Theta^t - \eta G^t - \hat{\Theta}^t ) - (\Theta^t - \eta G^t-\hat{\Theta}^t)\right]}_F^2 \nonumber \\
    & \leq 2\norm{( I + \gamma(\W-\I) ) (- \eta G^t) + \gamma(\W-\I)\Theta^t}_F^2 \nonumber \\
    & \quad + 2\norm{\gamma(\W-\I)\left[ {\cal Q}(\Theta^t - \eta G^t - \hat{\Theta}^t ) - (\Theta^t - \eta G^t-\hat{\Theta}^t)\right]}_F^2 \nl 
    & \leq 2 \left[ \eta^2 \norm{ (I+\gamma(\W-\I)) G^t }_F^2 + \gamma^2 \norm{ (\W-\I) \Theta^t }_F^2 + 2 \dotp{( I + \gamma(\W-\I) ) (- \eta G^t)}{\gamma(\W-\I)\Theta^t}\right] \nl 
    & \quad + 2 \bw^2 \gamma^2 \norm{ {\cal Q}(\Theta^t - \eta G^t - \hat{\Theta}^t ) - (\Theta^t - \eta G^t-\hat{\Theta}^t) }_F^2
\end{align}
Observe that $\dotp{( I + \gamma(\W-\I) ) (- \eta G^t)}{\gamma(\W-\I)\Theta^t} = \dotp{( I + \gamma(\W-\I) ) (- \eta \U G_o^t)}{\gamma(\W-\I)\Theta^t}$,
the above leads to
\begin{align}
    \mathbb{E}_t \left[ \norm{\Theta^{t+1} - \Theta^t}_F^2 \right] & \leq 2\eta^2(1-\rho\gamma)^2\norm{G^t}_F^2 + 2\eta^2(1-\rho\gamma)^2\norm{G_o^t}_F^2 + 4\gamma^2\norm{(\W-\I)\Theta^t}_F^2 \nl 
    & \quad + 2 \bw^2 \gamma^2(1-\delta)\norm{\Theta^t - \eta G^t - \hat{\Theta}^t}_F^2 \label{int:Theta_diff_a_2}
\end{align}
Notice that using \eqref{eq:avgg_norm}, we obtain
\begin{align}
    \mathbb{E}_t \left[ \norm{G^t}_F^2 \right] & = \mathbb{E}_t \left[ \norm{ ( (1/n) {\bf 11}^\top + \U\U^\top ) G^t }_F^2 \right] = \mathbb{E}_t \left[ \norm{  \U\U^\top G^t }_F^2 + \norm{ (1/n){\bf 11}^\top G^t }_F^2 \right]
    \nl & \leq \norm{G_o^t}_F^2 + 2L^2\norm{\Theta_o^t}_F^2 + 2 n  \norm{\nabla f(\avgtheta^t)}^2 + \sigma^2 \label{int:Theta_diff_c_2}
\end{align}
By combining \eqref{int:Theta_diff_a_2}, \eqref{int:Theta_diff_c_2}, and the fact $\norm{ (\W-\I)\Theta^t}_F^2 \leq \bw^2 \norm{\Theta_o^t}_F^2$, we have
\begin{align*}
    \mathbb{E}_t \left[ \norm{\Theta^{t+1} - \Theta^t}_F^2 \right] &\leq 4 \eta^2(1-\rho\gamma)^2\norm{G_o^t}_F^2 + 4 \eta^2(1-\rho\gamma)^2L^2 \norm{\Theta_o^t}_F^2 + 4 n  \eta^2(1-\rho\gamma)^2 \norm{\nabla f(\avgtheta^t)}^2
    \\ &\quad + 2 \eta^2(1-\rho\gamma)^2\sigma^2 + 4 \bw^2  \gamma^2\norm{\Theta_o^t}_F^2 + 2 \bw^2 \gamma^2(1-\delta)\norm{\Theta^t - \eta G^t - \hat{\Theta}^t}_F^2
\end{align*}
This concludes the proof. \hfill $\square$

\subsection{Proof of \Cref{lem:gt}}\label{app:lemgt}
We begin by observing the following recursion for $G^{t} - \hat{G}^t$:
\begin{align*}
G^{t+1} - \hat{G}^{t+1} & = G^t + \grdSF^{t+1} - \grdSF^t + ( \gamma (\W-\I) - \I ) \hat{G}^{t+1} \nl 
& = \gamma (\W-\I) ( G^t + \grdSF^{t+1} - \grdSF^t ) \nl 
& \quad + ( \gamma (\W-\I) - \I ) \left[ {\cal Q}( G^t + \grdSF^{t+1} - \grdSF^t - \hat{G}^t  ) - (G^t + \grdSF^{t+1} - \grdSF^t - \hat{G}^t) \right] .
\end{align*}
This implies
\begin{align*}
\mathbb{E} \left[ \norm{ G^{t+1} - \hat{G}^{t+1} }_F^2 \right] & \leq (1 + \alpha_0) (1 + \gamma \bw)^2 (1 - \delta) \mathbb{E} \left[ (1+\alpha_1) \norm{ G^t- \hat{G}^t}_F^2 + (1 + \alpha_1^{-1}) \norm{ \grdSF^{t+1} - \grdSF^t }_F^2 \right] \nl 
& \quad + 2 (1 + \alpha_0^{-1}) \gamma^2 \bw^2 \mathbb{E} \left[ \norm{G_o^t}_F^2 + \norm{ \grdSF^{t+1} - \grdSF^t }_F^2 \right] 
\end{align*}
Taking $\alpha_0 = \frac{\delta}{4}$, $\alpha_1 = \frac{\delta}{8}$ and the step size condition
\[
\gamma \leq \frac{ \delta }{ 8             \bw } \leq \frac{ \delta }{ 6 \bw }
\]
give
\begin{align*}
\mathbb{E} \left[ \norm{ G^{t+1} - \hat{G}^{t+1} }_F^2 \right] & \leq \left(1 - \frac{\delta}{8} \right) \mathbb{E} \left[ \norm{ G^{t} - \hat{G}^{t} }_F^2 \right] + \frac{10 \gamma^2 \bw^2}{\delta} \mathbb{E} \left[ \norm{G_o^t}_F^2 \right] \nl
& \quad + \left( (1 - \frac{\delta}{4})(1 + \frac{8}{\delta}) + 2 \gamma^2 \bw^2 (1 + \frac{4}{\delta} ) \right) \mathbb{E} \left[ \norm{ \grdSF^{t+1} - \grdSF^t }_F^2 \right] \nl 
& \leq \left(1 - \frac{\delta}{8} \right) \mathbb{E} \left[ \norm{ G^{t} - \hat{G}^{t} }_F^2 \right] + \frac{10 \gamma^2 \bw^2}{\delta} \mathbb{E} \left[ \norm{G_o^t}_F^2 \right] + \frac{10(1+\gamma^2\bw^2)}{\delta} \mathbb{E} \left[ \norm{ \grdSF^{t+1} - \grdSF^t }_F^2 \right] 
\end{align*}
Applying \Cref{lem:stoc} and \Cref{lem:succ_diff} gives
\begin{align*}
\mathbb{E} \left[ \norm{ G^{t+1} - \hat{G}^{t+1} }_F^2 \right] & \leq \left(1 - \frac{\delta}{8} \right) \mathbb{E} \left[ \norm{ G^{t} - \hat{G}^{t} }_F^2 \right] + \frac{10 \gamma^2 \bw^2}{\delta} \mathbb{E} \left[ \norm{G_o^t}_F^2 \right] + \frac{30(1+\gamma^2\bw^2)}{\delta} n \sigma^2 \nl
& \quad + \frac{20(1+\gamma^2\bw^2)L^2}{\delta} \mathbb{E} \left[ \norm{ \Theta^{t+1} - \Theta^t }_F^2 \right] \nl 
& \leq \left(1 - \frac{\delta}{8} \right) \mathbb{E} \left[ \norm{ G^{t} - \hat{G}^{t} }_F^2 \right] + \frac{10}{\delta} \left(  \gamma^2 \bw^2 + 8 \eta^2 L^2 (1 + \gamma^2 \bw^2) (1 - \rho \gamma)^2 \right) \mathbb{E} \left[ \norm{G_o^t}_F^2 \right] \nl 
& \quad + \frac{80(1+\gamma^2\bw^2)L^2}{\delta} ( \eta^2 L^2 (1-\rho\gamma)^2 + \bw^2 \gamma^2 ) \mathbb{E} \left[ \norm{\Theta_o^t}_F^2 \right] \nl
& \quad + \frac{40(1+\gamma^2\bw^2)L^2}{\delta} \bw^2 \gamma^2 (1 - \delta) \mathbb{E} \left[ \norm{ \Theta^t - \eta G^t - \hat{G}^t }_F^2 \right] \nl
& \quad + \frac{80(1+\gamma^2\bw^2)L^2}{\delta} n \eta^2 (1 - \rho\gamma)^2 \mathbb{E} \left[ \norm{\nabla f(\avgtheta^t)}^2 \right] \nl
& \quad + \frac{10(1+\gamma^2 \bw^2)}{\delta} \left( 4 L^2 \eta^2 (1 - \rho\gamma)^2 \sigma^2 + 3 n \sigma^2 \right)
\end{align*}
Using the step size condition from \eqref{eq:eta_gcond_b}, i.e., $\eta^2 L^2 (1-\rho \gamma)^2 (1+\gamma^2 \bw^2) \leq \frac{ \rho^2 \gamma^2 }{ 64 }$, simplifies the above to 
\begin{align*}
\mathbb{E} \left[ \norm{ G^{t+1} - \hat{G}^{t+1} }_F^2 \right] & \leq \left(1 - \frac{\delta}{8} \right) \mathbb{E} \left[ \norm{ G^{t} - \hat{G}^{t} }_F^2 \right] + \frac{10}{\delta} \gamma^2 \left( \bw^2 + \frac{\rho^2 }{8} \right) \mathbb{E} \left[ \norm{G_o^t}_F^2 \right] \nl 
& \quad + \gamma^2 \, \frac{5}{4 \delta} \left( \rho^2 + 72 L^2 \bw^2 \right) \mathbb{E} \left[ \norm{\Theta_o^t}_F^2 \right] + \gamma^2 \, \frac{40 L^2 \bw^2}{\delta} \mathbb{E} \left[ \norm{ \Theta^t - \eta G^t - \hat{G}^t }_F^2 \right] \nl
& \quad + \gamma^2 \, \frac{5 \rho^2}{4 \delta} n \mathbb{E} \left[ \norm{\nabla f(\avgtheta^t)}^2 \right]  + \frac{5}{\delta} \left( \frac{\rho^2 \gamma^2}{8} + 7 n \right) \sigma^2
\end{align*}
This concludes our proof.

\subsection{Proof of \Cref{lemma:theta_comm}} \label{app:pf_t_comm}
Observe that 
\begin{align}
    & \mathbb{E}_t \left[ \norm{\Theta^{t+1}- \eta G^{t+1}- \hat{\Theta}^{t+1}}_F^2 \right] = \mathbb{E}_t \left[ \norm{\Theta^{t+1}- \eta G^{t+1}- (\hat{\Theta}^t + {\cal Q}(\Theta^t - \eta G^t - \hat{\Theta}^t ))}_F^2 \right] 
    \nl &= \mathbb{E}_t \left[ \norm{\Theta^{t+1}- \eta G^{t+1}- (\Theta^t - \eta G^t) + (\Theta^t - \eta G^t - \hat{\Theta}^t) - {\cal Q}(\Theta^t - \eta G^t - \hat{\Theta}^t )}_F^2 \right]
    \nl & \leq (1+\frac{2}{\delta}) \mathbb{E}_t \left[ \norm{\Theta^{t+1}- \Theta^t - \eta(G^{t+1}- G^t)}_F^2 \right] + (1+\frac{\delta}{2})(1-\delta)\norm{\Theta^t - \eta G^t - \hat{\Theta}^t}_F^2
    \nl &\leq 2(1+\frac{2}{\delta}) \mathbb{E}_t \left[ \norm{\Theta^{t+1}- \Theta^t}_F^2 \right] + 2\eta^2 (1+\frac{2}{\delta}) \mathbb{E}_t \left[ \norm{G^{t+1}- G^t}_F^2 \right]
     + (1-\frac{\delta}{2})\norm{\Theta^t - \eta G^t - \hat{\Theta}^t}_F^2 
     \label{int:theta_comm_a_2}
\end{align}
We can bound the second term as 
\begin{align}
    \mathbb{E} \left[ \norm{G^{t+1} - G^t}_F^2 \right] & = \frac{1}{n} \mathbb{E} \left[ \norm{ {\bf 1}^\top ( \grdSF^{t+1} - \grdSF^t )}^2 \right] + 2 \mathbb{E} \left[ \norm{ G_o^{t+1} }_F^2 \right] + 2 \mathbb{E} \left[ \norm{ G_o^{t} }_F^2 \right] \nonumber
\end{align}
Observe that 
\begin{align}
\mathbb{E} \left[ \norm{ {\bf 1}^\top ( \grdSF^{t+1} - \grdSF^t )}^2 \right] & \leq \mathbb{E} \left[ \norm{ {\bf 1}^\top ( \grdSF^{t+1} - \grdF^{t+1} )}^2 \right] + 2 \mathbb{E} \left[ \norm{ {\bf 1}^\top ( \grdSF^{t} - \grdF^{t} )}^2 \right] \nonumber \\
& \quad + 2 \mathbb{E} \left[ \norm{ {\bf 1}^\top ( \grdF^{t+1} - \grdF^{t} )}^2 \right] \nl 
& \leq 3 n \sigma^2 + 2 n L^2 \mathbb{E} \left[ \norm{ \Theta^{t+1} - \Theta^t }_F^2 \right] \nonumber 
\end{align}
Substituting back into \eqref{int:theta_comm_a_2} yields
\begin{align}
    \mathbb{E} \left[ \norm{\Theta^{t+1}- \eta G^{t+1}- \hat{\Theta}^{t+1}}_F^2 \right] & \leq (1-\frac{\delta}{2}) \mathbb{E} \left[ \norm{\Theta^t - \eta G^t - \hat{\Theta}^t}_F^2 \right] + 4 \eta^2 ( 1 + \frac{2}{\delta}) \mathbb{E} \left[ \norm{G_o^{t+1}}_F^2 + \norm{G_o^t}_F^2 \right] \nl 
    & \quad + 2(1 + \frac{2}{\delta}) ( \eta^2 L^2 + 1 ) \mathbb{E} \left[ \norm{ \Theta^{t+1} - \Theta^t }_F^2 \right] + 6 \sigma^2 (1 + \frac{2}{\delta}) \eta^2 
\end{align}
We further apply \Cref{lem:succ_diff} to obtain
\begin{align}
    & \mathbb{E} \left[ \norm{\Theta^{t+1}- \eta G^{t+1}- \hat{\Theta}^{t+1}}_F^2 \right] \leq (1-\frac{\delta}{2}) \mathbb{E} \left[ \norm{\Theta^t - \eta G^t - \hat{\Theta}^t}_F^2 \right] + 4 \eta^2 ( 1 + \frac{2}{\delta}) \mathbb{E} \left[ \norm{G_o^{t+1}}_F^2 + \norm{G_o^t}_F^2 \right] \nl 
    & \quad + 6 \sigma^2 (1 + \frac{2}{\delta}) \eta^2  + 4 \bw^2 ( 1 + \eta^2 L^2) (1 + \frac{2}{\delta}) \gamma^2 (1-\delta) \mathbb{E} \left[ \norm{\Theta^t - \eta G^t - \hat{\Theta}^t }_F^2 \right] \nl 
    & \quad + 8 ( 1 + \eta^2L^2 ) (1 + \frac{2}{\delta}) \eta^2 (1 - \rho \gamma)^2 \mathbb{E} \left[ \norm{G_o^t}_F^2 + L^2 \norm{\Theta_o^t}_F^2 + n \norm{ \nabla f( \avgtheta^t) }^2 + \sigma^2 / 2 \right] \nl 
    & \quad + 8( 1 + \eta^2 L^2 ) (1 + \frac{2}{\delta}) \bw^2 \gamma^2 \mathbb{E} \left[ \norm{ \Theta_o^t }_F^2 \right] \label{int:tx_int_2}
\end{align}
Using the step size condition:
\[ 
\gamma^2 \leq \frac{\delta}{ 16 \bw^2 (1-\delta) (1 + \eta^2 L^2) (1 + 2/\delta)} 
\]
and we recall that $\eta \leq 1/(4L)$, the upper bound in \eqref{int:tx_int_2} can be simplified as 
\begin{align}
    \mathbb{E} \left[ \norm{\Theta^{t+1}- \eta G^{t+1}- \hat{\Theta}^{t+1}}_F^2 \right] & \leq (1-\frac{\delta}{4}) \mathbb{E} \left[ \norm{\Theta^t - \eta G^t - \hat{\Theta}^t}_F^2 \right] + \eta^2 \frac{12}{\delta} \mathbb{E} \left[ \norm{G_o^{t+1}}_F^2 \right] \nl 
    & \quad + \frac{24}{\delta} \eta^2 \sigma^2 + \frac{38}{\delta} \eta^2 \mathbb{E} \left[ \norm{G_o^t}_F^2 \right] + \frac{26}{\delta} \mathbb{E} \left[ ( \eta^2 L^2 + \bw^2 \gamma^2 ) \norm{\Theta_o^t}_F^2 + \eta^2 n \norm{ \nabla f( \avgtheta^t) }^2 \right] \nonumber
\end{align}
This concludes the proof of the first part. 

The above bound can be combined with \Cref{lemma:g_cons} and $\gamma \rho \leq 1$ to give
\begin{align}
    \mathbb{E} \left[ \norm{\Theta^{t+1}- \eta G^{t+1}- \hat{\Theta}^{t+1}}_F^2 \right] & \leq \left(1-\frac{\delta}{4} + \eta^2 \gamma \frac{108 \bw^2 L^2}{\rho \delta} \right) \mathbb{E} \left[ \norm{\Theta^t - \eta G^t - \hat{\Theta}^t}_F^2 \right] \nl 
    & \quad + \eta^2 \frac{50}{\delta} \mathbb{E} \left[ \norm{G_o^t}_F^2 \right] + \eta^2 \gamma \frac{24 \bw^2}{\rho} \mathbb{E} \left[ \norm{G^{t} - \hat{G}^t}_F^2 \right] \nl 
    & \quad + \left[ \frac{\eta^2 L^2}{\delta} \left( 3 + 26L^2 + \frac{216 \gamma L^2 \bw^2}{\rho} \right) + \frac{26}{\delta} \bw^2 \gamma^2 \right] \mathbb{E} \left[ \norm{\Theta_o^t}_F^2 \right] \nl 
    & \quad + \frac{29 \eta^2}{\delta} \, n \, \mathbb{E}  \left[ \norm{ \nabla f( \avgtheta^t) }^2 \right] + \frac{24 \eta^2 \sigma^2}{\delta} \left( 1 + \frac{4n}{\rho \gamma} \right) \nonumber
\end{align}
Taking  $\eta^2 \gamma \leq \frac{\delta^2 \rho}{864 \bw^2 L^2}$ simplifies the bound into
\begin{align}
    \mathbb{E} \left[ \norm{\Theta^{t+1}- \eta G^{t+1}- \hat{\Theta}^{t+1}}_F^2 \right] & \leq \left(1-\frac{\delta}{8} \right) \mathbb{E} \left[ \norm{\Theta^t - \eta G^t - \hat{\Theta}^t}_F^2 \right] \nl 
    & \quad + \eta^2 \frac{50}{\delta} \mathbb{E} \left[ \norm{G_o^t}_F^2 \right] + \eta^2 \frac{3 \bw}{\rho} \mathbb{E} \left[ \norm{G^{t} - \hat{G}^t}_F^2 \right] \nl 
    & \quad + \left[ \eta^2 \, \frac{29 L^4}{\delta} \left( 1 + \frac{ \bw}{\rho} \right) + \frac{26}{\delta} \bw^2 \gamma^2 \right] \mathbb{E} \left[ \norm{\Theta_o^t}_F^2 \right] \nl 
    & \quad + \eta^2 \, \frac{29n }{\delta} \, \mathbb{E}  \left[ \norm{ \nabla f( \avgtheta^t) }^2 \right] + \eta^2\, \frac{24 \sigma^2}{\delta} \left( 1 + \frac{4n}{\rho \gamma} \right) \nonumber
\end{align}
This concludes the proof.

\subsection{Proof of \Cref{lem:combined}} \label{app:pf_combined}
Let $a, b, c > 0$ be some constants to be determined later, combining \Cref{lem:theta_o_new}, \ref{lemma:g_cons}, \ref{lem:gt}, \ref{lemma:theta_comm} yields
\begin{align}
& \mathbb{E} \left[ \norm{\Theta_o^{t+1}}_F^2 + a \norm{ G_o^{t+1} }_F^2 + b \norm{ G^{t+1} - \hat{G}^{t+1}}_F^2 + c \norm{ \Theta^{t+1} - \eta G^{t+1} - \hat{\Theta}^{t+1} }_F^2 \right] \nl
& \leq \left( 1 - \frac{\rho \gamma}{2} + a \gamma ( \frac{\rho}{4} + \frac{18 L^2 \bw^2}{\rho} ) + b \gamma^2 \frac{2 }{\delta} ( \rho^2 + 45 L^2 \bw^2 ) + \frac{c}{\delta} \left[ 29 L^4 (1 + \frac{\bw}{\rho}) \eta^2 + 26 \bw^2 \gamma^2 \right] \right) \mathbb{E} \left[ \norm{\Theta_o^{t}}_F^2 \right] \nl 
& \quad + a \left( 1 - \frac{\rho \gamma}{4} + \frac{1}{a} \eta^2  \frac{2}{\rho \gamma} + \frac{b}{a} \gamma^2 \frac{10}{\delta^2} (\delta \bw^2 + \rho^2) + \frac{c}{a} \eta^2 \frac{50}{\delta} \right) \mathbb{E} \left[ \norm{G_o^{t}}_F^2 \right] \nl 
& \quad + b \left( 1 - \frac{\delta}{8} + \frac{a}{b} \gamma \frac{2 \bw^2}{\rho} + \frac{c}{b} \eta^2 \frac{3 \bw^2}{\rho} \right) \mathbb{E} \left[ \norm{ G^t - \hat{G}^t }_F^2 \right] \nl
& \quad + c \left( 1 - \frac{\delta}{8} + \frac{1}{c} \gamma \frac{\bw^2}{\rho} + \frac{a}{c} \gamma \frac{9L^2 \bw^2}{\rho} + \frac{b}{c} \gamma^2 \frac{40 L^2 \bw^2}{\delta} \right) \mathbb{E} \left[ \norm{ \Theta^{t} - \eta G^{t} - \hat{\Theta}^{t} }_F^2 \right] \nl 
& \quad + 8n \sigma^2 \left[ a \frac{1}{\rho \gamma} + b \frac{5}{\delta} + c \eta^2 \frac{15}{\rho \delta \gamma} \right]
+ \left[ a \gamma \frac{\rho}{4} + b \gamma^2 \frac{2 \rho^2}{\delta} + c \eta^2 \frac{29}{\delta} \right] n \mathbb{E} \left[ \norm{ \nabla f( \bar{\theta}^t ) }^2 \right] 
\nonumber
\end{align}
We wish to find conditions on the step sizes and $a,b,c$ such that
\begin{align*}
    & 1 - \frac{\rho \gamma}{2} + a \gamma \frac{19 L^2 \bw^2}{\rho} + b \gamma^2 \frac{92 L^2 \bw^2}{\delta} + \frac{c}{\delta} \left[ \frac{ 58 L^4 \bw}{\rho} \eta^2 + 26 \bw^2 \gamma^2 \right] \leq 1 - \frac{\rho\gamma}{4}, \\
    & 1 - \frac{\rho \gamma}{4} + \frac{1}{a} \eta^2  \frac{2}{\rho \gamma} + \frac{b}{a} \gamma^2 \frac{10}{\delta^2} (\delta \bw^2 + \rho^2) + \frac{c}{a} \eta^2 \frac{50}{\delta} \leq 1 - \frac{ \rho \gamma }{ 8 }, \\ 
    & 1 - \frac{\delta}{8} + \frac{a}{b} \gamma \frac{2 \bw^2}{\rho} + \frac{c}{b} \eta^2 \frac{3 \bw^2}{\rho} \leq 1 - \frac{\delta \gamma}{8}, \\
    & 1 - \frac{\delta}{8} + \frac{1}{c} \gamma \frac{\bw^2}{\rho} + \frac{a}{c} \gamma \frac{9L^2 \bw^2}{\rho} + \frac{b}{c} \gamma^2 \frac{40 L^2 \bw^2}{\delta} \leq 1 - \frac{ \delta \gamma }{ 8 }.
\end{align*}
The above set of inequalities can be guaranteed if $a,b,c$ satisfy 
\begin{align}
    & \frac{48}{\rho^2 \gamma^2} \eta^2 \leq a \leq \frac{\rho^2}{228 L^2 \bw^2} \label{eq:acond} \\
    & \max\left\{ a \frac{\gamma}{1-\gamma} \frac{32 \bw^2}{\rho \delta} , c \frac{\eta^2}{1-\gamma} \frac{ 48 \bw^2 }{ \rho \delta }  \right\} \leq b \leq \min \left\{ \frac{1}{\gamma} \frac{ \rho \delta }{ 1104 L^2 \bw^2 }, \frac{\eta^2}{\gamma^3} \frac{\delta^2}{5 \rho ( \delta \bw^2 + \rho^2 ) } \right\} \label{eq:bcond} \\
    & \max\left\{ \frac{\gamma}{1-\gamma} \frac{24 \bw^2}{\rho \delta}, a \frac{\gamma}{1-\gamma} \frac{216 L^2 \bw^2 }{\rho \delta}, b \frac{\gamma^2}{1-\gamma} \frac{960 L^2 \bw^2 }{\rho \delta} \right\} \leq c \leq \min \left\{ \frac{ \rho \delta \gamma }{ 12 ( \frac{58 L^4 \bw}{ \rho } \eta^2 + 26 \bw^2 \gamma^2 ) }, \frac{\delta}{ 25 \rho \gamma } \right\} \label{eq:ccond}
\end{align}
Notice that the step size condition:
\[
\eta^2 \leq \frac{ \rho^4 \gamma^2 }{ 10944 L^2 \bw^2 }
\]
guarantees the existence of $a$ which satisfies \eqref{eq:acond}. In particular, we take $a = \frac{ 48 }{ \rho^2 \gamma^2 } \eta^2$. This simplifies \eqref{eq:bcond}, \eqref{eq:ccond} into
\begin{align*}
& \frac{48 \bw^2}{\rho \delta (1-\gamma)} \eta^2 \max\left\{ \frac{32}{ \rho^2 \gamma }, c \right\} \leq b \leq \min \left\{ \frac{1}{\gamma} \frac{ \rho \delta }{ 1104 L^2 \bw^2 }, \frac{\eta^2}{\gamma^3} \frac{\delta^2}{5 \rho ( \delta \bw^2 + \rho^2 ) } \right\} \\
& \frac{24 \bw^2}{\rho \delta (1-\gamma)} \max\left\{ \gamma , \frac{ 432 }{ \rho^2 \gamma } L^2 \eta^2, 40 L^2 \gamma^2 b \right\} \leq c \leq \min \left\{ \frac{ \rho \delta \gamma }{ 12 ( \frac{58 L^4 \bw}{ \rho } \eta^2 + 26 \bw^2 \gamma^2 ) }, \frac{\delta}{ 25 \rho \gamma } \right\}
\end{align*}
Observing that as $\eta^2 \leq \frac{ \rho^4 \gamma^2 }{ 10944 L^2 \bw^2 } \leq \frac{ \rho^2 \gamma^2 }{ 432 L^2 }$ and we impose the extra condition $c \leq \frac{32}{\rho^2 \gamma}$. We obtain the simplification:
\begin{align*}
& \frac{1536 \bw^2}{\rho^3 \delta \gamma (1-\gamma)} \eta^2 \leq b \leq \min \left\{ \frac{1}{\gamma} \frac{ \rho \delta }{ 1104 L^2 \bw^2 }, \frac{\eta^2}{\gamma^3} \frac{\delta^2}{5 \rho ( \delta \bw^2 + \rho^2 ) } \right\} \\
& \frac{24 \bw^2}{\rho \delta (1-\gamma)} \max\left\{ \gamma , 40 L^2 \gamma^2 b \right\} \leq c \leq \min \left\{ \frac{ \rho \delta \gamma }{ 12 ( \frac{58 L^4 \bw}{ \rho } \eta^2 + 26 \bw^2 \gamma^2 ) }, \frac{\delta}{ 25 \rho \gamma }, \frac{32}{\rho^2 \gamma} \right\}
\end{align*}
Again, the condition on $b$ is feasible if 
\[
\eta^2 \leq \frac{ \rho^4 \delta^2 (1-\gamma) }{ 1536 \times 1104 \times L^2 \bw^4 }, ~~ \frac{\gamma^2}{1-\gamma} \leq \frac{ \rho^2 \delta^3 }{ 7680 \bw^2 ( \delta \bw^2 + \rho^2 ) }
\]
and we take $b = \frac{1536 \bw^2}{\rho^3 \gamma \delta (1-\gamma)} \eta^2$. Note that as $\gamma \leq \delta / 8 \bw$, the bound on $\gamma$ can be implied by:
\[
\gamma^2 \leq \frac{ \rho^2 \delta^3 (1 - \delta/8\bw) }{ 7680 \bw^2 ( \delta \bw^2 + \rho^2 ) }
\]
Observe that with this choice of $b$ and the step size condition, we have $40 L^2 \gamma^2 b \leq \gamma$. Finally, the condition on $c$ is simplified to 
\begin{align*}
    \frac{24 \bw^2}{\rho \delta (1-\gamma)} \gamma \leq c \leq \min \left\{ \frac{ \rho \delta \gamma }{ 12 ( \frac{58 L^4 \bw}{ \rho } \eta^2 + 26 \bw^2 \gamma^2 ) }, \frac{\delta}{ 25 \rho \gamma }, \frac{32}{\rho^2 \gamma} \right\}
\end{align*}
The above condition is feasible if 
\[
\frac{\gamma^2}{1-\gamma} \leq \min \left\{ \frac{4 \delta}{3 \rho \bw^2}, \frac{ \delta^2 }{ 600 \bw^2 }, \frac{ \rho^2 \delta^2 }{ 14976 \bw^4 } \right\} = \frac{ \rho^2 \delta^2 }{ 14976 \bw^4 } , ~~\eta^2 \leq \frac{ \rho^3 \delta^2 (1-\gamma) }{ 16704 L^4 \bw^3 }  
\]
and we take $c = \frac{24 \bw^2}{\rho \delta (1-\gamma)}$. 

The above choice of $a,b,c$ ensures that
\begin{align}
& \mathbb{E} \left[ \norm{\Theta_o^{t+1}}_F^2 + a \norm{ G_o^{t+1} }_F^2 + b \norm{ G^{t+1} - \hat{G}^{t+1} }_F^2 + c \norm{ \Theta^{t+1} - \eta G^{t+1} - \hat{\Theta}^{t+1} }_F^2 \right] \nl
& \leq \left( 1 - \min\left\{ \frac{\rho}{8}, \frac{ \delta }{8} \right\} \gamma \right) \mathbb{E} \left[ \norm{\Theta_o^{t}}_F^2 + a \norm{ G_o^{t} }_F^2 + b \norm{ G^{t} - \hat{G}^{t} }_F^2 + c \norm{ \Theta^{t} - \eta G^{t} - \hat{\Theta}^{t} }_F^2 \right] \nl 
& \quad + \frac{192n}{1-\gamma} \left[ \frac{ 2(1-\gamma) }{ \rho^3 \gamma^3 } + \frac{320 \bw^2}{\rho^3 \delta^2 \gamma } + \frac{15 \bw^2}{\rho^2 \delta^2 \gamma } \right] \sigma^2 \eta^2
+ \frac{12n}{1-\gamma} \left[ \frac{ 1-\gamma }{ \rho \gamma } + \frac{256 \bw^2}{\rho \delta^2 } \gamma +  \frac{58 \bw^2 }{\rho \delta^2 } \right]  \eta^2 \mathbb{E} \left[ \norm{ \nabla f( \bar{\theta}^t ) }^2 \right] \nonumber
\end{align}
The proof is completed. 

 \newpage
\section{Additional Numerical Results}\label{app:more_plots}

In the first two sections we provide additional plots and the tuned hyper-parameters of our simulation. In the last section we describe the implementation details of our simulation.
\subsection{Synthetic Dataset}
This dataset is generated from the benchmark framework {\tt leaf} \citep{caldas2019leaf}. The number of data points possessed by each agent is different. In particular, we have the distribution $\{ m_i \}_{i=1}^{25}$ which follows [470, 403, 91, 84, 79, 51, 51, 38, 31, 25, 24, 19, 14, 10, 9, 6, 6, 5, 5, 4, 4, 4, 4, 3, 3]. \Cref{app:hypprm_syn} provides the tuned parameters used for the experiment in Figure~\ref{fig:syn}.

\begin{table}[h]
    \centering
\caption{Tuned hyper-parameters for linear model on synthetic dataset.
} \label{app:hypprm_syn}
\begin{tabular}{lccccc}
    \hline 
    \bfseries Algorithms & \begin{tabular}{@{}l@{}} {\bfseries Learning} \\ {\bfseries rate $\eta$ }\end{tabular} 
    & \begin{tabular}{@{}l@{}} {\bfseries Consensus} \\ {\bfseries step size $\gamma$ }\end{tabular} 
    & \begin{tabular}{@{}l@{}} {\bfseries Momentum} \\ {\bfseries param.~$\beta$  }\end{tabular}
    & \bfseries Batch size \\
    \hline
    {\tt GNSD} & 0.005 & - & - & 4 \\
    {\tt DeTAG} $(R=1)$ & 0.005 & 0.1 & - & 4 \\
    {\tt GT-HSGD} & 0.01 & - & 0.01 & 2 \\
    {\tt CHOCO-SGD} (Top-k 10\%) & 0.01 & 0.5 & - & 4 \\
    {\tt CHOCO-SGD} (Random Quant. 8bits) & 0.01 & 0.9 & - & 4 \\
    {\tt BEER} (Top-k 5\%) & 0.01 & 0.16 & - & 100 \\
    {\tt BEER} (Ramdon Quant. 4bits) & 0.01 & 0.5 & - & 100 \\
    \aname~(Top-k 5\%) & 0.01 & 0.2 & 0.01 & 2 \\
    \aname~(Random Quant. 4bits) & 0.01 & 0.6 & 0.01 & 2 \\
    \hline
\end{tabular}
\end{table}

In Figure~\ref{fig:syn_app}, we provide additional numerical results on the trajectories of gradient norm, training/testing accuracy of the algorithms. Similar comparisons between \aname~and existing algorithms are observed. Notably, we see that \aname~achieves the best gradient stationary solution in limited communication budget and recovers the same level of gradient stationary solution as the uncompressed {\tt GT-HSGD} at the last iteration. Also, we observe that \aname~is the first to achieve the best accuracy with the least network cost.

\begin{figure}[hbtp]
\centering
    \includegraphics[width=0.425\textwidth]{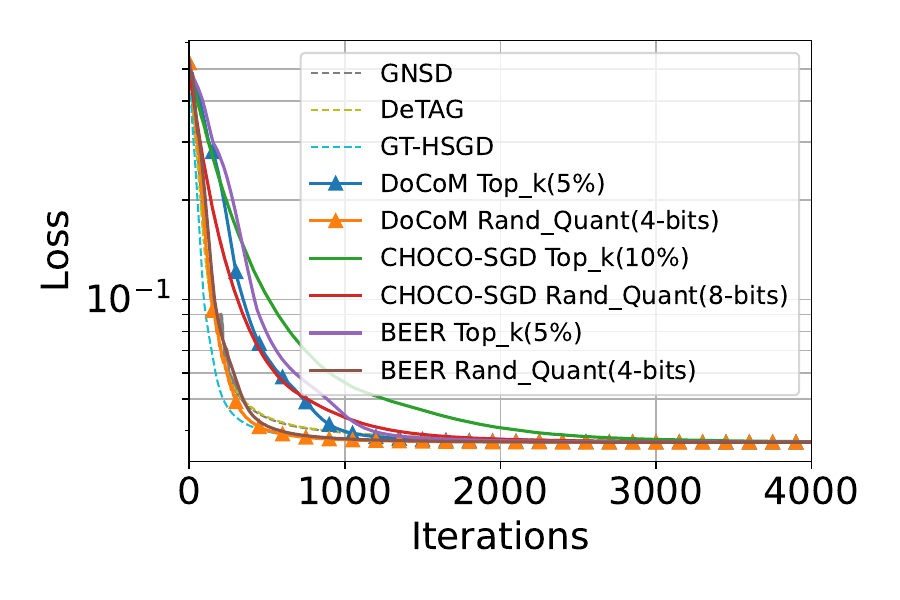}
    \includegraphics[width=0.425\textwidth]{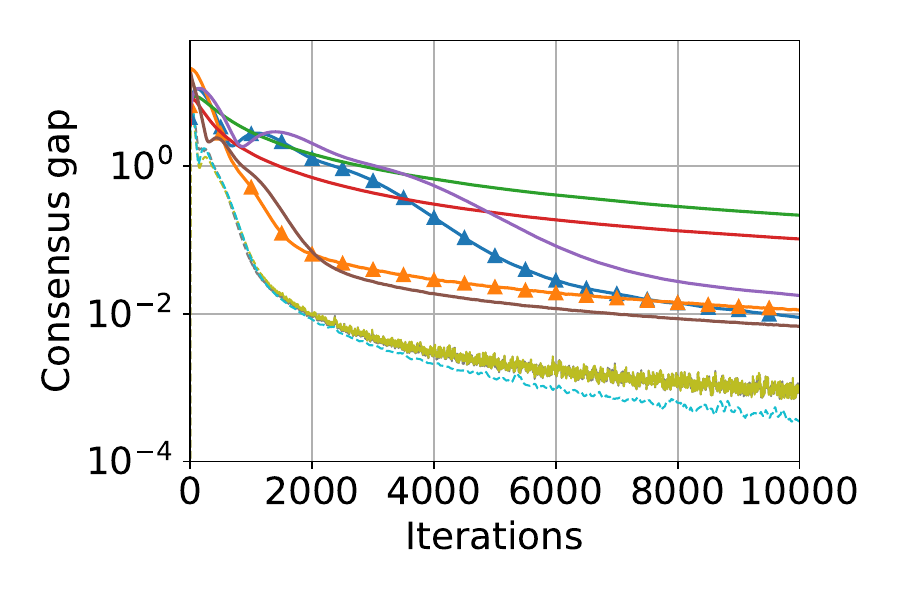}\\
    \includegraphics[width=0.32\textwidth]{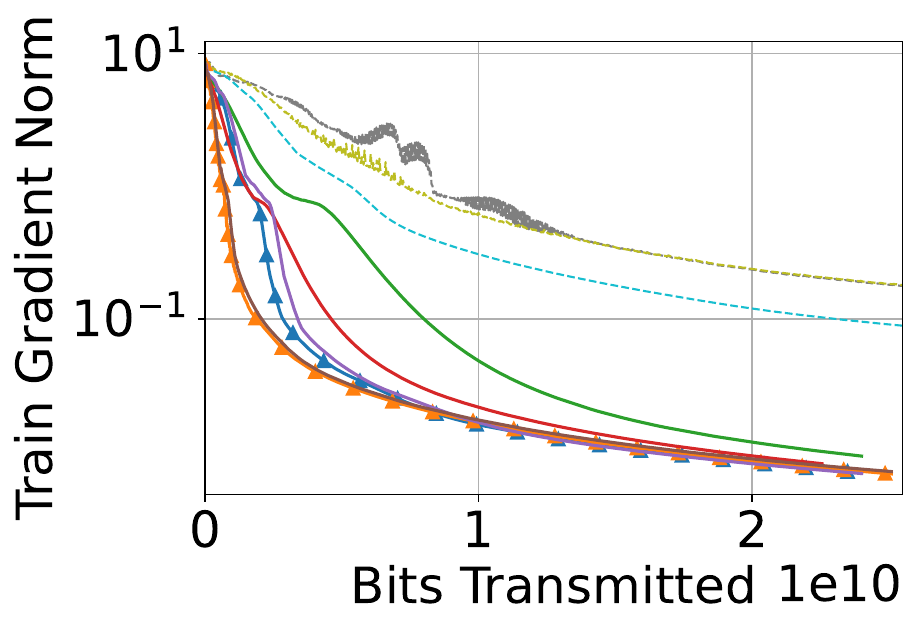}
    \includegraphics[width=0.32\textwidth]{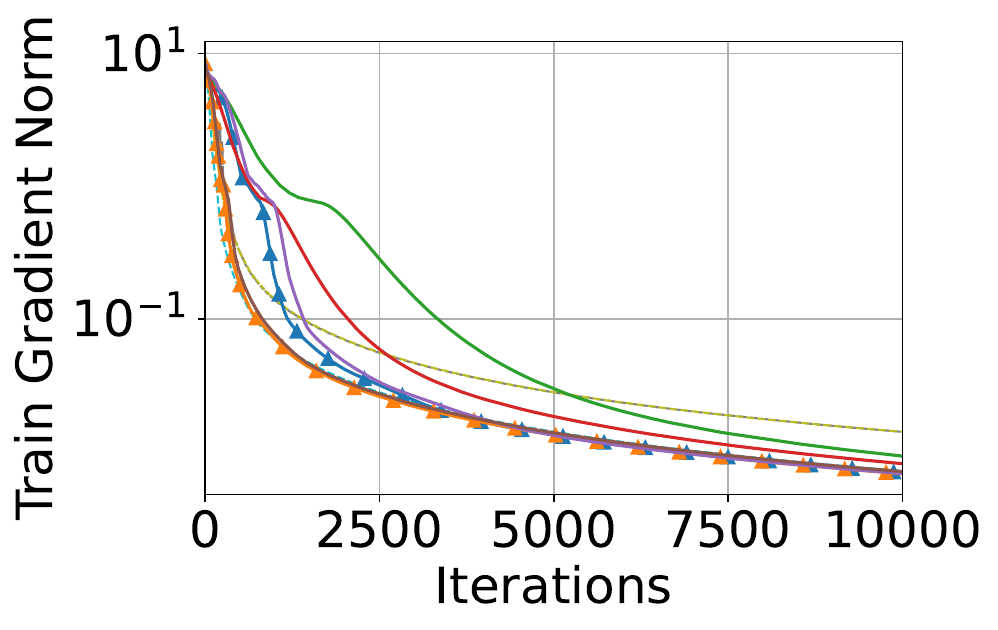}
    \includegraphics[width=0.32\textwidth]{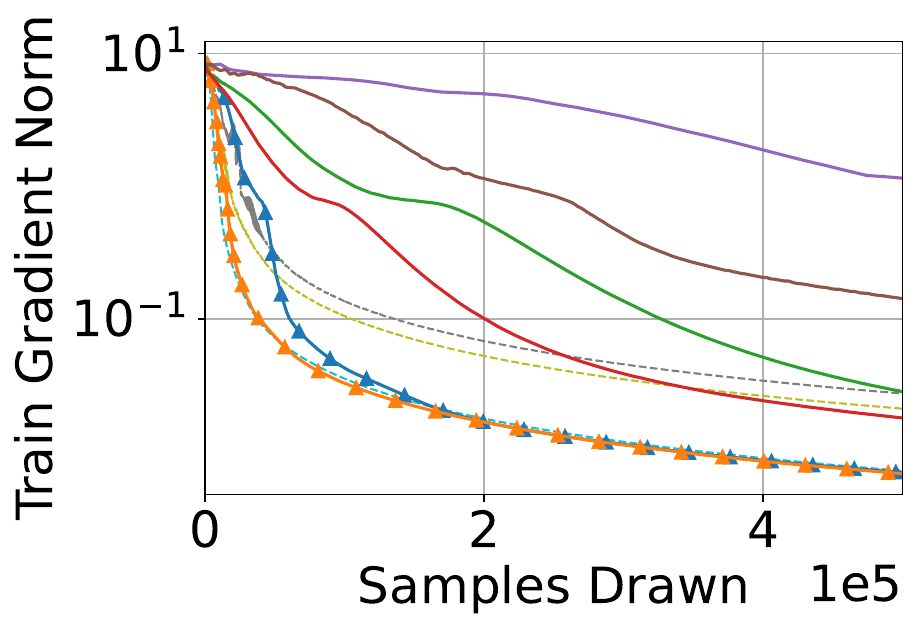}\\
    \caption{\textbf{Additional Results on Synthetic Data and Linear Model.} Loss and consensus gap against iterations, and worst-agent's gradient norm, training/testing accuracy.}\vspace{-.2cm} \label{fig:syn_app}
\end{figure}

\begin{samepage}
\subsection{1 Layer Feed-forward Network on MNIST Dataset}
\Cref{app:hypprm_ffdl} summarizes the tuned hyper parameters used by the experiment in Figure~\ref{fig:real_main}. We provide additional plots against wall clock time to demonstrate the practical improvement one can achieve using \aname.
\begin{table}[h]
\vspace{-.4cm}
    \centering
\caption{Tuned hyper-parameters for 1 layer feed-forward network on MNIST.
}\label{app:hypprm_ffdl}
\begin{tabular}{lccccc}
    \hline 
    \bfseries Algorithms & \begin{tabular}{@{}l@{}} {\bfseries Learning} \\ {\bfseries rate $\eta$ }\end{tabular} 
    & \begin{tabular}{@{}l@{}} {\bfseries Consensus} \\ {\bfseries step size $\gamma$ }\end{tabular} 
    & \begin{tabular}{@{}l@{}} {\bfseries Momentum} \\ {\bfseries param.~$\beta$  }\end{tabular}\\
    \hline
    {\tt CHOCO-SGD} (Top-k 10\%) & 0.01 & 0.3 & - \\
    {\tt BEER} (Top-k 5\%) & 0.01 & 0.2 & - \\
    \aname~(Top-k 5\%) & 0.01 & 0.2 & 0.01 \\
    \hline
\end{tabular}
\vspace{-.4cm}
\end{table}

\begin{figure}[hbtp]
\centering
    \includegraphics[width=0.99\textwidth]{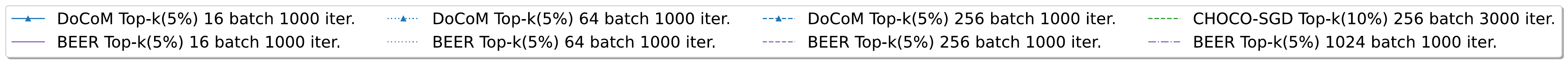}\\
    \includegraphics[width=0.32\textwidth]{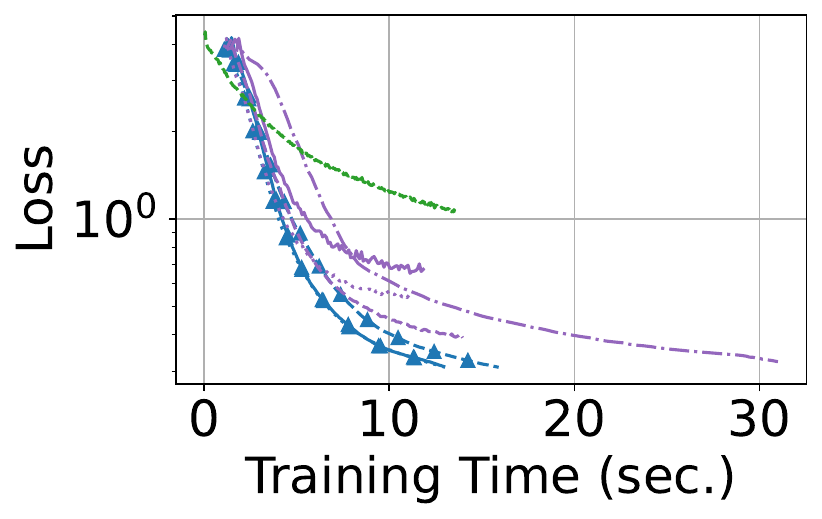}
    \includegraphics[width=0.32\textwidth]{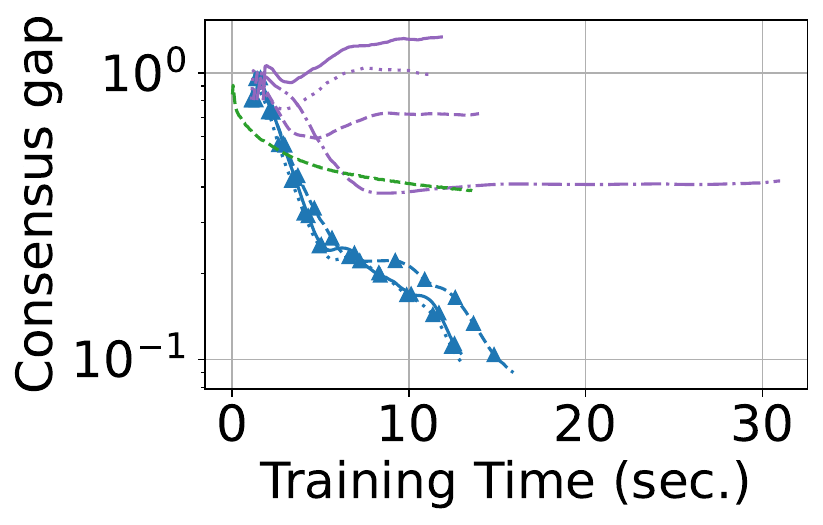}\\
    \includegraphics[width=0.32\textwidth]{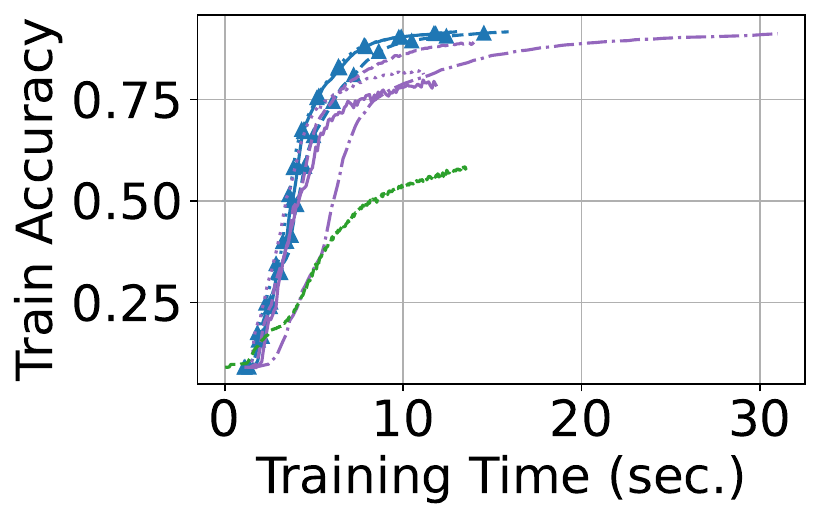}
    \includegraphics[width=0.32\textwidth]{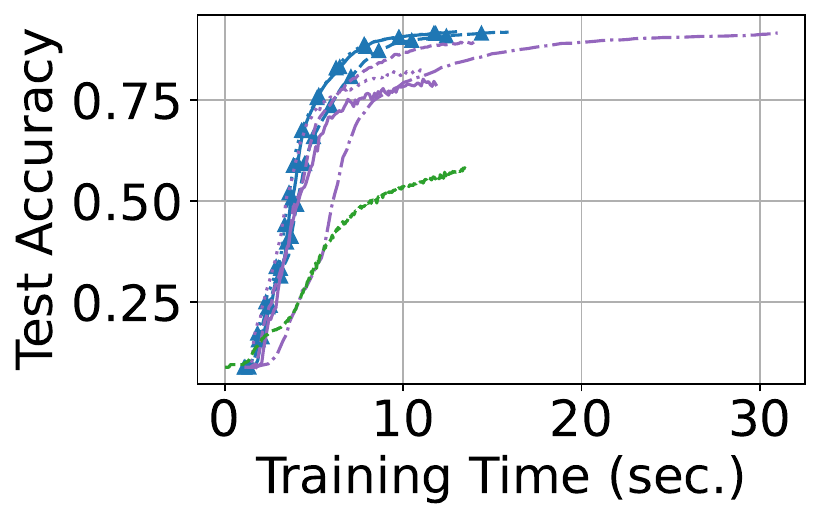}
    \includegraphics[width=0.32\textwidth]{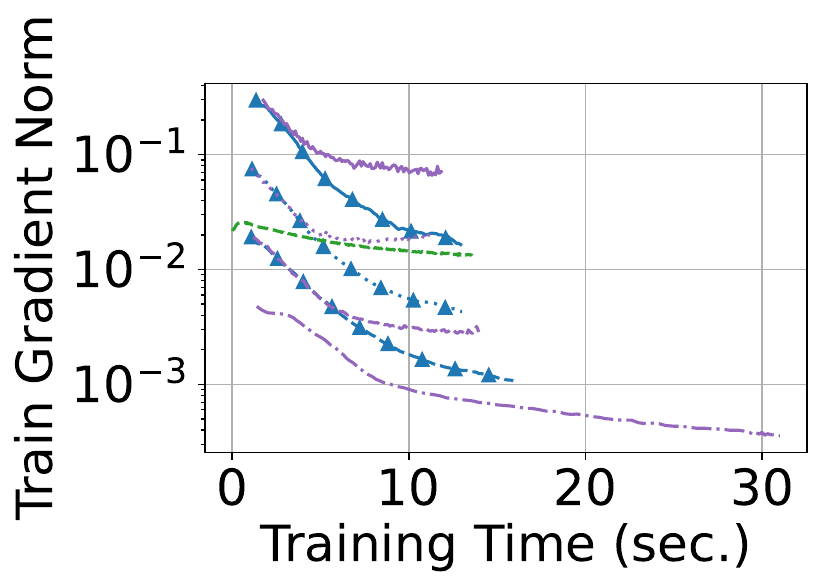}\\ \vspace{.4cm}
    \includegraphics[width=0.99\textwidth]{figures/ffdl_legend_net.pdf}\\
    \includegraphics[width=0.32\textwidth]{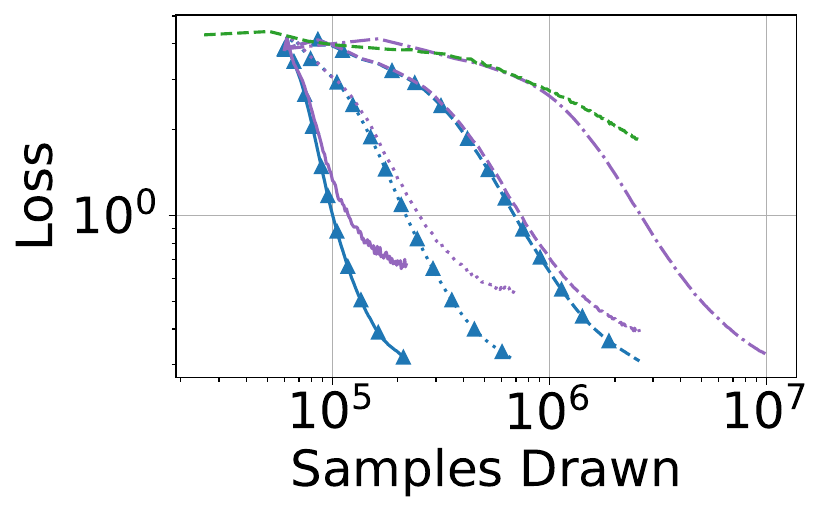}
    \includegraphics[width=0.32\textwidth]{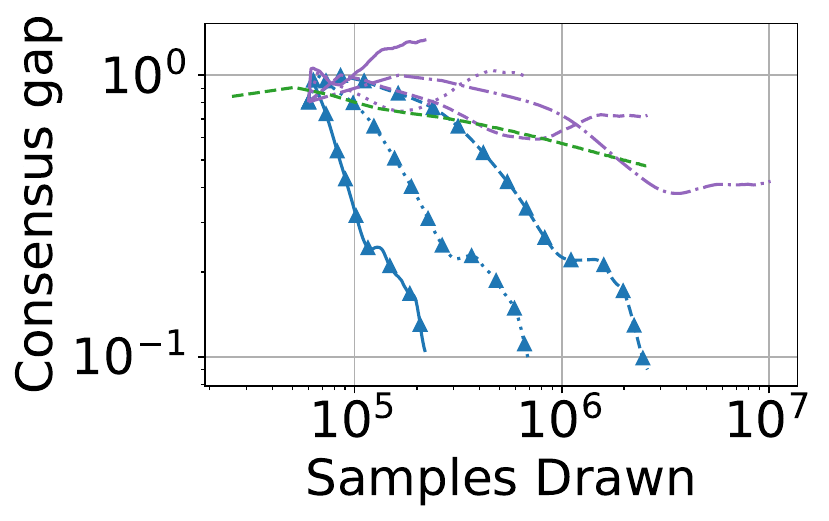}\\
    \includegraphics[width=0.32\textwidth]{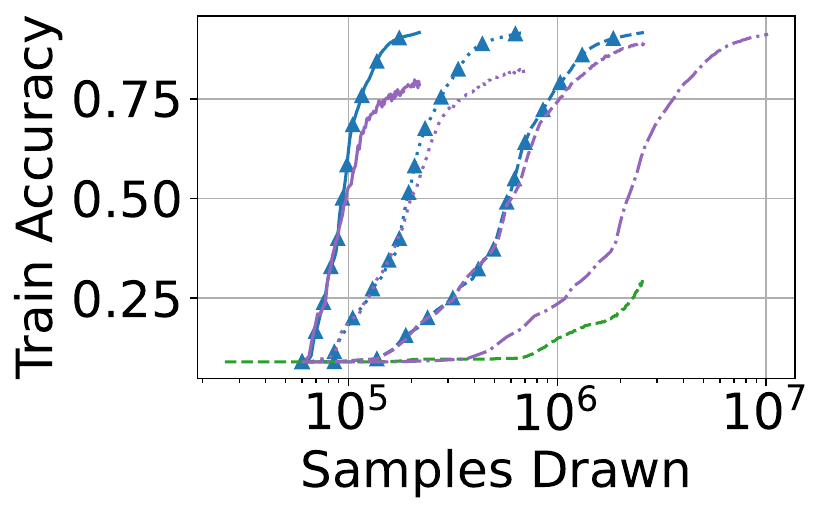}
    \includegraphics[width=0.32\textwidth]{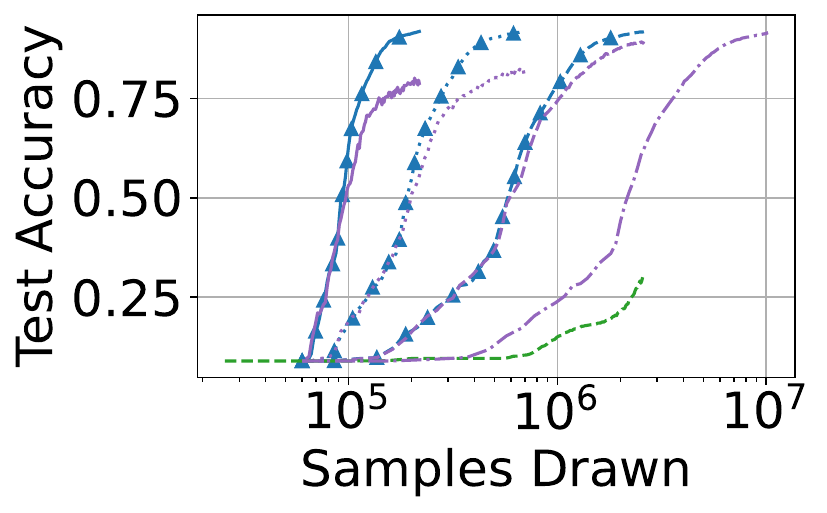}
    \includegraphics[width=0.32\textwidth]{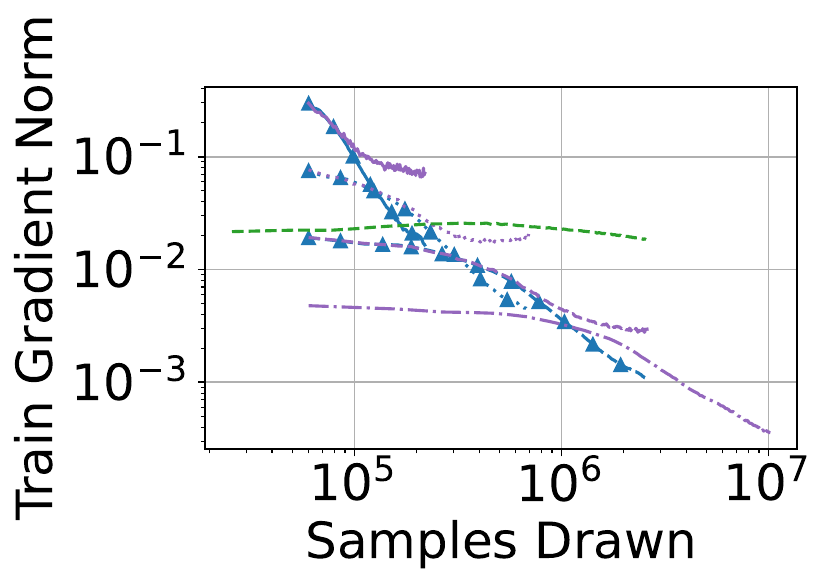}\\
    \caption{\textbf{Additional Results on MNIST Data with Feed-forward Network.} Worst-agent's loss, consensus gap, training/testing accuracy and gradient norm against wall clock time in seconds and the number of samples drawn. In the legend of wall clock time plots we denoted the number of iterations used for training.} \label{fig:ffdl_app} \vspace{-.4cm}
\end{figure}
\end{samepage}

    

\subsection{LeNet-5 on FEMNIST Dataset}
We conduct another practical experiment and consider training a LeNet-5 neural network which has $d=60850$ parameters. The dataset contains $m=805263$ samples of $28 \times 28$ hand-written character images, each belongs to one of the 62 classes. The samples are partitioned into $n=36$ agents (of ring topology) according to the groups specified in \citep{caldas2019leaf}.
We scheduled the learning rate at its initial value for the initial stage, then decay by a factor of $10^{-1}$ at certain iterations. This allows us to perform large step size training at the initial stage and arrive at a consensual solution eventually.


Additionally, we observe that when training LeNet-5 using momentum-based variance reduced algorithms (including {\tt GT-HSGD} and \aname), we found that the algorithms can be unstable when a small momentum parameter (e.g., $\beta = 0.1$) is adopted, unlike the experiments on synthetic data / MNIST. We suspect that this is due to the stronger requirements on Lipschitz continuity of the gradient of objective function in \Cref{ass:lips}, i.e., we require\vspace{-.2cm}
\footnote{This assumption is known as the mean squared smoothness condition \citep{arjevani2019lower}. Specifically, this assumption is only used in the proof of \Cref{lem:vt_bound}.
The analysis for $\beta=1$ can be further relaxed to the typical smoothness assumption $\| \mathbb{E}[ \nabla f_i( \theta; \zeta )] - \mathbb{E}[ \nabla f_i( \theta'; \zeta )] \|^2 \leq L^2 \| \theta - \theta' \|^2$, applied in the proof of \Cref{lem:stoc}.}
{\revision
\begin{equation*}
    \mathbb{E}_{\zeta} \left[ \norm{ \nabla f_i( \theta; \zeta ) - \nabla f_i( \theta'; \zeta ) }^2 \right] \leq L^2 \norm{ \theta - \theta' }^2,~\forall~ \theta, \theta' \in \mathbb{R}^d. 
\end{equation*}
}
This is stronger than the typical Lipschitz continuity on the \emph{expected gradient} which only demands $\| \mathbb{E}[ \nabla f_i( \theta; \zeta )] - \mathbb{E}[ \nabla f_i( \theta'; \zeta )] \| \leq L \| \theta - \theta' \|$. 
Particularly, the convergence of these momentum-based algorithms depend on the less smooth loss landscape from the neural network model and data distribution.

\begin{table}[h]
\vspace{-.3cm}
    \centering
\caption{Tuned hyper-parameters for LeNet-5 on FEMNIST. {\revision Numbers on the $\eta, \beta$ scaling schedule columns indicate the iteration number (in thousands) when $\eta, \beta$ are scaled by 0.1 correspondingly.}}\label{app:hypprm_lenet}
{\revision
\begin{tabular}{lcccccc}
    \hline 
    \bfseries Algorithms & \begin{tabular}{@{}l@{}} {\bfseries Learning} \\ {\bfseries rate $\eta$ }\end{tabular} 
    & \begin{tabular}{@{}l@{}} {\bfseries Consensus} \\ {\bfseries step size $\gamma$ }\end{tabular} 
    & \begin{tabular}{@{}l@{}} {\bfseries Momentum} \\ {\bfseries param.~$\beta$  }\end{tabular}
    & \bfseries \begin{tabular}{@{}l@{}} {\bfseries Batch} \\ {\bfseries size }\end{tabular} 
    & \bfseries \begin{tabular}{@{}l@{}} {\bfseries $\eta$ scaling} \\ {\bfseries schedule }\end{tabular}
    & \bfseries \begin{tabular}{@{}l@{}} {\bfseries $\beta$ scaling} \\ {\bfseries schedule }\end{tabular}\\
    \hline
    {\tt GNSD} & 0.02 & - & - & 32 & 4, 20, 35 & - \\
    {\tt GT-HSGD} & 0.02 & - & 0.3 & 16 & 4, 20, 35 & 10, 20, 35 \\
    {\tt CHOCO-SGD} (Top-k 10\%) & 0.01 & 0.25 & - & 32 & 4, 40, 80 & - \\
    {\tt BEER} (Top-k 5\%) & 0.01 & 0.15 & - & 32 & 4, 40, 80 & - \\
    \aname~(Top-k 5\%) & 0.01 & 0.2 & 0.3 & 16 & 4, 40, 80 & 10, 40, 80 \\
    \hline
\end{tabular}
}
\end{table}


\begin{figure}[hbtp]
\vspace{-.4cm}
\centering
    \includegraphics[width=0.99\textwidth]{figures/le2dl_legend_net.pdf}\\
    \includegraphics[width=0.32\textwidth]{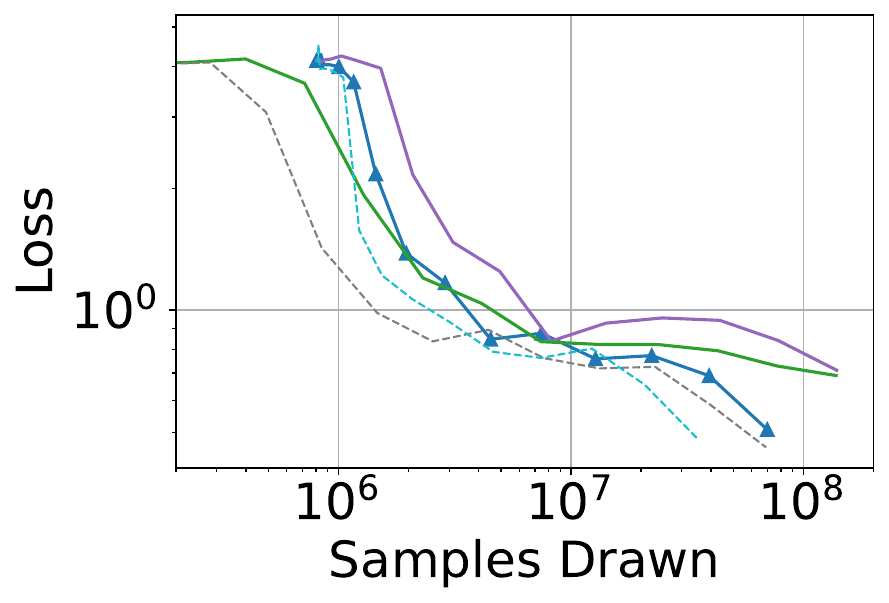}
    \includegraphics[width=0.32\textwidth]{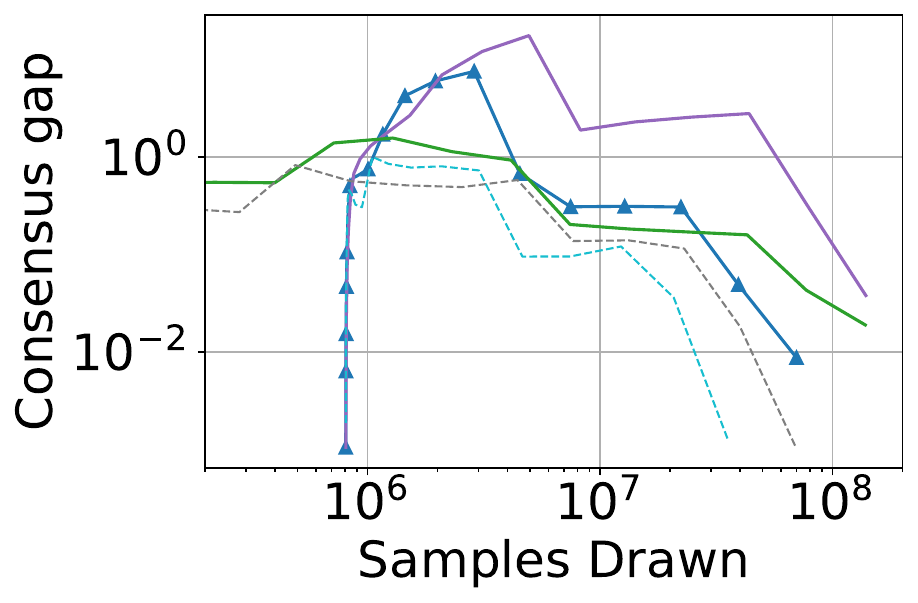}\\
    \includegraphics[width=0.32\textwidth]{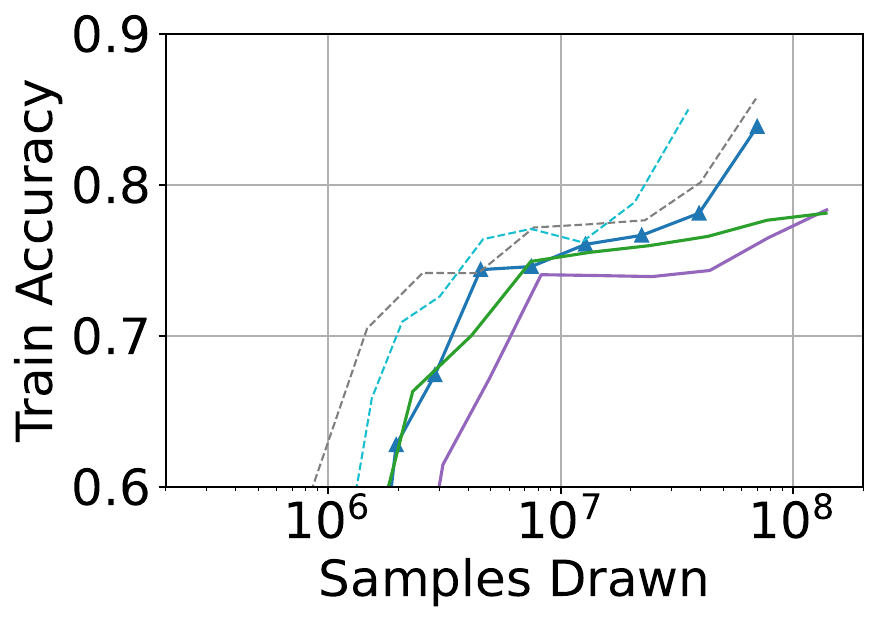}
    \includegraphics[width=0.32\textwidth]{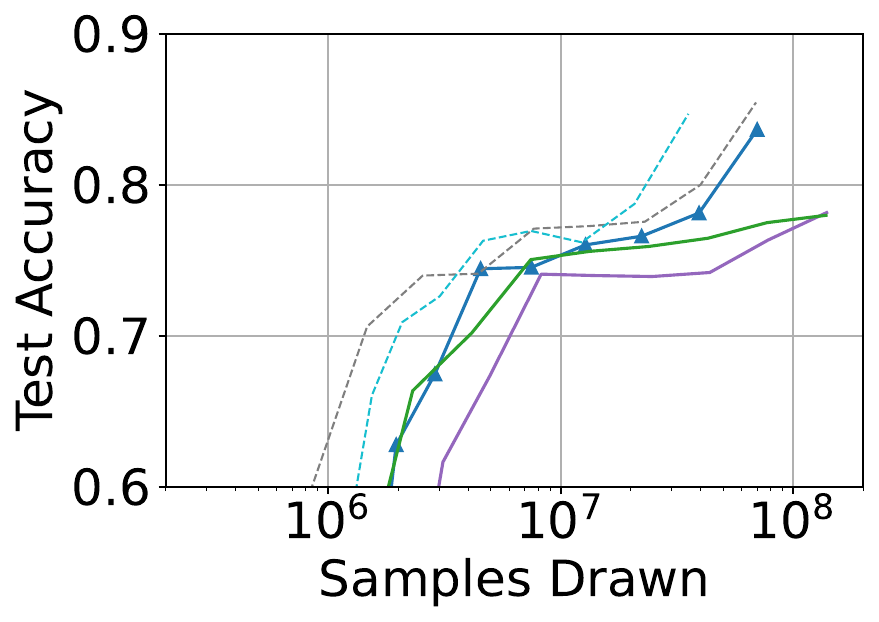}
    \includegraphics[width=0.32\textwidth]{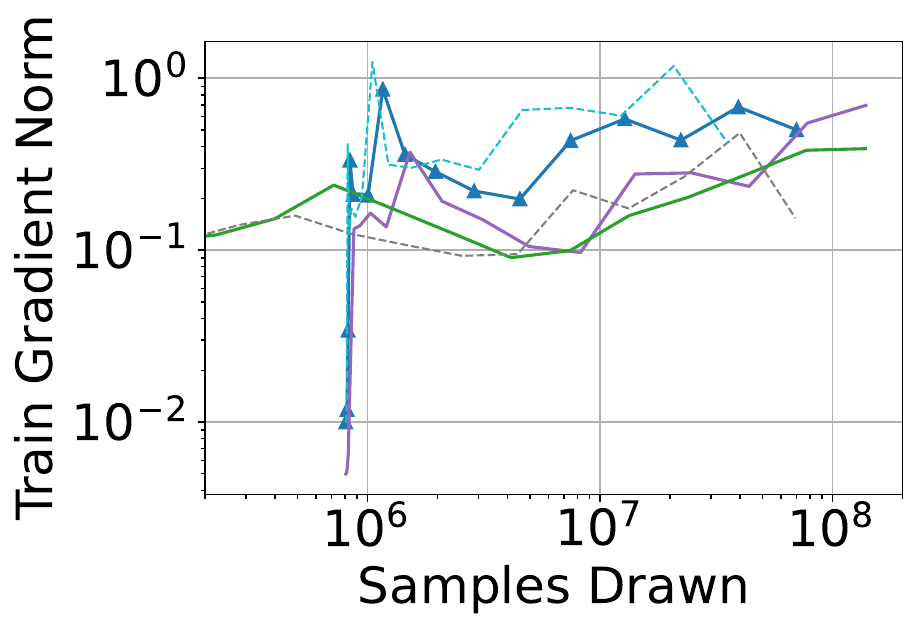}\\
    \caption{\textbf{Additional Results on FEMNIST Data with LeNet-5.} Worst-agent's loss, consensus gap, training/testing accuracy and gradient norm against the number of samples drawn.} \label{fig:ledl_app} \vspace{-.4cm}
\end{figure}

{\redtmp \subsection{Implementation Details}
\textbf{Compression Operators.} \label{app:subsec:rescale}
We adopt either a greedy biased compressor Top-k (sparsification) or a {\revision re-scaled} random compressor (quantization) \citep{alistarh2017qsgd} in our experiment for algorithms with compressed communication. Specifically, a {\revision re-scaled} $b$-bits random quantization \citep{koloskova2019decentralized} on $x\in\mathbb{R}^d$ {\revision that satisfies Assumption \ref{ass:compress}} can be described as
\begin{equation}\mathcal{Q}(x_i; \xi_i) = \frac{\text{sign}(x_i) \, \| x \|_2 \, \xi_i}{{\revision \tau}}, \quad
    \xi_i = 
    \begin{cases}
        \ell_i / 2^b &\text{w.p.}~\ 1- p(|x_i|/\|x\|_2, 2^b); \\
        (\ell_i +1)/ 2^b &\text{otherwise}.
    \end{cases}
\end{equation}

with level $\ell_i$ of $x_i$ satisfying $|x_i|/\|x\|_2 \in [\ell_i / 2^b, (\ell_i + 1)/ 2^b]$, {\revision $\tau = 1 + \min\{ d / (2^b)^2, \sqrt{d} / 2^b \}$}, $p(a,s) = as - \ell$. 
{\revision With the unbiasedness of random quantization \citep{alistarh2017qsgd} that gives $\mathbb{E}_\xi[\mathcal{Q}(x;\xi)] = \frac{1}{\tau} x$, and with the proof in Appendix A.1 of \citep{alistarh2017qsgd} that gives $\mathbb{E}_\xi[\| \mathcal{Q}(x; \xi) \|^2] \leq \frac{1}{\tau} \|x\|^2$,
\begin{equation}
\mathbb{E}_\xi[\| \mathcal{Q}(x) - x \|^2] = \mathbb{E}_\xi[\|\mathcal{Q}(x;\xi) \|^2] + \|x \|^2 - 2 \mathbb{E}_\xi[\mathcal{Q}(x;\xi)]^\top x \leq (\frac{1}{\tau} +1-\frac{2}{\tau}) \|x\|^2 = (1-\frac{1}{\tau})\|x\|^2. 
\end{equation}
Therefore, a re-scaled random quantizer satisfies Assumption \ref{ass:compress} with $\delta = 1/ \tau$.
Moreover, as pointed out in \citep[Section 3.5]{koloskova2019decentralized}, the above re-scaling trick can be applied on any unbiased compressor ${\cal Q}(\cdot)$ satisfying $\mathbb{E}_\xi[\| \mathcal{Q}(x; \xi) \|^2] \leq \tau \|x\|^2$ such that the rescaled compressor $(1/\tau){\cal Q}(\cdot)$ satisfies \Cref{ass:compress} with $\delta = 1/\tau$.
}

For Top-$k$ compressor, the communication cost per iteration is $k\cdot b_{\sf pre}$ bits, where $b_{\sf pre}$ is the number of bits for representing a full-precision scalar, added to the cost for sending $k$ indices. For random quantization with $b$ bits, the communication cost per iteration is $(b+1)d$ bits added to the cost of sending the $\ell_2$-norm of the vector as a full-precision floating point scalar.

\begin{table}[h]
\caption{\redtmp Comparison of decentralized optimization algorithms on computational complexity per iteration and node. $b$ denotes the batch size used at every iteration. $b_0$ denotes the initial batch size. $R$ denotes the potentially multiple rounds of gossip for {\tt DeTAG}. $d$ denotes the model dimension. $^\dagger${\tt BEER} uses large batch size of $\mathcal{O}(1/\epsilon^2)$.}
\label{tab:comp-compl}
\begin{center}
{\redtmp
\begin{tabular}{lllll}
\hline
\textbf{Algorithms} & \textbf{Init. Stoch. Grad.} & \textbf{Stoch. Grad.} & \textbf{Gossip Comm. Round} & \textbf{Memory Usage} \\ \hline
{\tt GNSD}      & $b$  & $b$  & $2$                 & $3d$ \\
{\tt DeTAG}     & $b$  & $b$  & $R$                 & $3d$ \\
{\tt GT-HSGD}   & $b_0$ & $2b$ & $2$                 & $5d$ \\
{\tt CHOCO-SGD} & $b$  & $b$  & $1$ (+ Compress) & $3d$ \\
{\tt BEER}     & $b^\dagger$ & $b^\dagger$ & $2$ (+ Compress) & $7d$ \\ 
{\tt DoCoM}     & $b_0$ & $2b$ & $2$ (+ Compress) & $9d$ \\ \hline
\end{tabular}}
\end{center}
\end{table}

\textbf{Memory Efficient Implementation.} From line \ref{line:theta} and \ref{line:G} of Algorithm \ref{alg:docom}, we observe that {\tt DoCoM} relies on the sum $\sum_j W_{ij} \widehat{\theta}_{i,j}^t$ and $\sum_j W_{ij} \widehat{g}_{i,j}^t$. Similar to the steps described in Appendix E of \citep{koloskova2019decentralized} for the {\tt CHOCO-SGD} algorithm, the {\aname}~algorithm can be implemented with a per-node memory complexity of ${\cal O}(d)$. See Table \ref{tab:comp-compl} for details.
}

\section{Connection between Assumption \ref{ass:mix} and Spectral Gap} \label{app:spectralgap}
{\revision
Conditions (i), (ii) of Assumption \ref{ass:mix} are standard in the literature of decentralized optimization, while at the same time (iii) is equivalent to the spectral gap condition. To see this, we suppose ${\bf W}$ is the weighted adjacency matrix of a connected graph and note from the Perron-Frobenius theorem that ${\bf 1}$ is the eigenvector corresponding to the leading eigenvalue of ${\bf W}$ which has multiplicity of 1. It follows from orthogonality of ${\bf U}$ and ${\bf UU}^\top = {\bf I}_n - (1/n) {\bf 11}^\top$ that 
\begin{equation}
    \|{\bf U^\top W U} \|_2 = \|{\bf UU^\top WUU^\top} \|_2 = \max\{\lambda_2, |\lambda_n| \}
\end{equation}
Therefore, condition (iii) that asserts the existence of $\rho \in (0,1]$, $\max\{ \lambda_2, |\lambda_n| \} \leq 1 - \rho$ is equivalent to $\max\{ \lambda_2, |\lambda_n| \} < 1$. 
Lastly, it is obvious that $\| {\bf W} - {\bf I} \|_2 \leq \| {\bf W} \|_2 + \| {\bf I} \|_2 \leq 2$ and we used $\bar{\omega} \in (0,2]$ in (iv) of Assumption \ref{ass:mix} to simplify notations.
}
\end{document}